\tiny\color{gray},
\newcounter{qaCounter} 
\newtcolorbox{multiqa}[1][]{
  colback=gray!10, colframe=black, sharp corners, boxrule=1pt, 
  fonttitle=\bfseries,
  title=Q\&A Case \arabic{qaCounter}, 
  before upper={\stepcounter{qaCounter}} 
}
\newtcolorbox{questionlayer}[1][]{
  colback=blue!10!white, colframe=blue!70, sharp corners, boxrule=0.5pt, 
  title=Question: #1
}
\newtcolorbox{answerlayer}[1][]{
  colback=green!5!white, colframe=green!75!black, sharp corners, boxrule=0.5pt, 
  title=Answer:
}
\newtcolorbox{reasonlayer}[1][]{
  colback=yellow!10!white, colframe=yellow!75!black, sharp corners, boxrule=0.5pt, 
  title=Reason:
}
\renewcommand{\@makefnmark}{}
\theoremstyle{plain}
\newtheorem{theorem}{Theorem}[section]
\newtheorem{proposition}[theorem]{Proposition}
\newtheorem{lemma}[theorem]{Lemma}
\theoremstyle{definition}
\newtheorem{definition}[theorem]{Definition}
\newtheorem{assumption}[theorem]{Assumption}
\theoremstyle{remark}
\icmltitlerunning{Accelerating Large Language Model Reasoning via Speculative Search}
\begin{document}

\twocolumn[
\icmltitle{Accelerating Large Language Model Reasoning via Speculative Search}





\icmlsetsymbol{equal}{*}

\begin{icmlauthorlist}
\icmlauthor{Zhihai Wang}{USTC}
\icmlauthor{Jie Wang}{USTC}\hspace{-1mm}\textsuperscript{\Envelope}
\icmlauthor{Jilai Pan}{USTC}
\icmlauthor{Xilin Xia}{USTC}
\icmlauthor{Huiling Zhen}{Huawei}
\icmlauthor{Mingxuan Yuan}{Huawei}
\icmlauthor{Jianye Hao}{Huawei,Tianjin}
\icmlauthor{Feng Wu}{USTC}
\end{icmlauthorlist}

\icmlaffiliation{USTC}{MoE Key Laboratory of Brain-inspired Intelligent Perception and Cognition, University of Science and Technology of China}
\icmlaffiliation{Huawei}{Noah's Ark Lab, Huawei Technologies}
\icmlaffiliation{Tianjin}{College of Intelligence and Computing, Tianjin University}

\icmlcorrespondingauthor{Jie Wang}{jiewangx@ustc.edu.cn}

\icmlkeywords{Machine Learning, ICML}

\vskip 0.3in
]



\printAffiliationsAndNotice{This work was done when Zhihai Wang was an intern at Huawei.} 

\begin{abstract}
    Tree-search-based reasoning methods have significantly enhanced the reasoning capability of large language models (LLMs) by facilitating the exploration of multiple intermediate reasoning steps, i.e., thoughts.
    However, these methods suffer from substantial inference latency, as they have to generate numerous reasoning thoughts, severely limiting LLM applicability.
    To address this challenge, we propose a novel Speculative Search (SpecSearch) framework that significantly accelerates LLM reasoning by optimizing thought generation.
    Specifically, SpecSearch utilizes a small model to strategically collaborate with a large model at both thought and token levels, efficiently generating high-quality reasoning thoughts.
    The major pillar of SpecSearch is a novel quality-preserving rejection mechanism, which effectively filters out thoughts whose quality falls below that of the large model's outputs.
    Moreover, we show that SpecSearch preserves comparable reasoning quality to the large model.
    Experiments on both the Qwen and Llama models demonstrate that SpecSearch significantly outperforms state-of-the-art approaches, achieving up to 2.12$\times$ speedup with comparable reasoning quality.
    
\end{abstract}

\section{Introduction}
\label{introduction}
The reasoning capabilities of large language models (LLMs) have significantly advanced with the adoption of slow-thinking processes based on tree-search-based (TSB) reasoning methods~\cite{ToT, AlphaZeroLLM, LLMTreeSearch, rebase}. These TSB methods enhance reasoning by following the Chain-of-Thought (COT) approach~\cite{CoT}, which decomposes problem-solving into a sequence of intermediate reasoning steps, termed thoughts. Building upon this, TSB frameworks such as Tree-of-Thoughts (TOT)~\cite{ToT} integrate thought generation and evaluation with search algorithms---such as beam search~\cite{llm_beam_search} and Monte Carlo Tree Search (MCTS)~\cite{alphamath, ReST-MCTS}---to systematically explore diverse reasoning paths. By incorporating these techniques, TSB methods provide LLMs with a deliberate and structured reasoning framework, significantly improving their capability to tackle complex and multi-step reasoning tasks.  

\begin{figure}[t]
    \centering
    \begin{subfigure}{0.23\textwidth}
        \includegraphics[width=\textwidth]{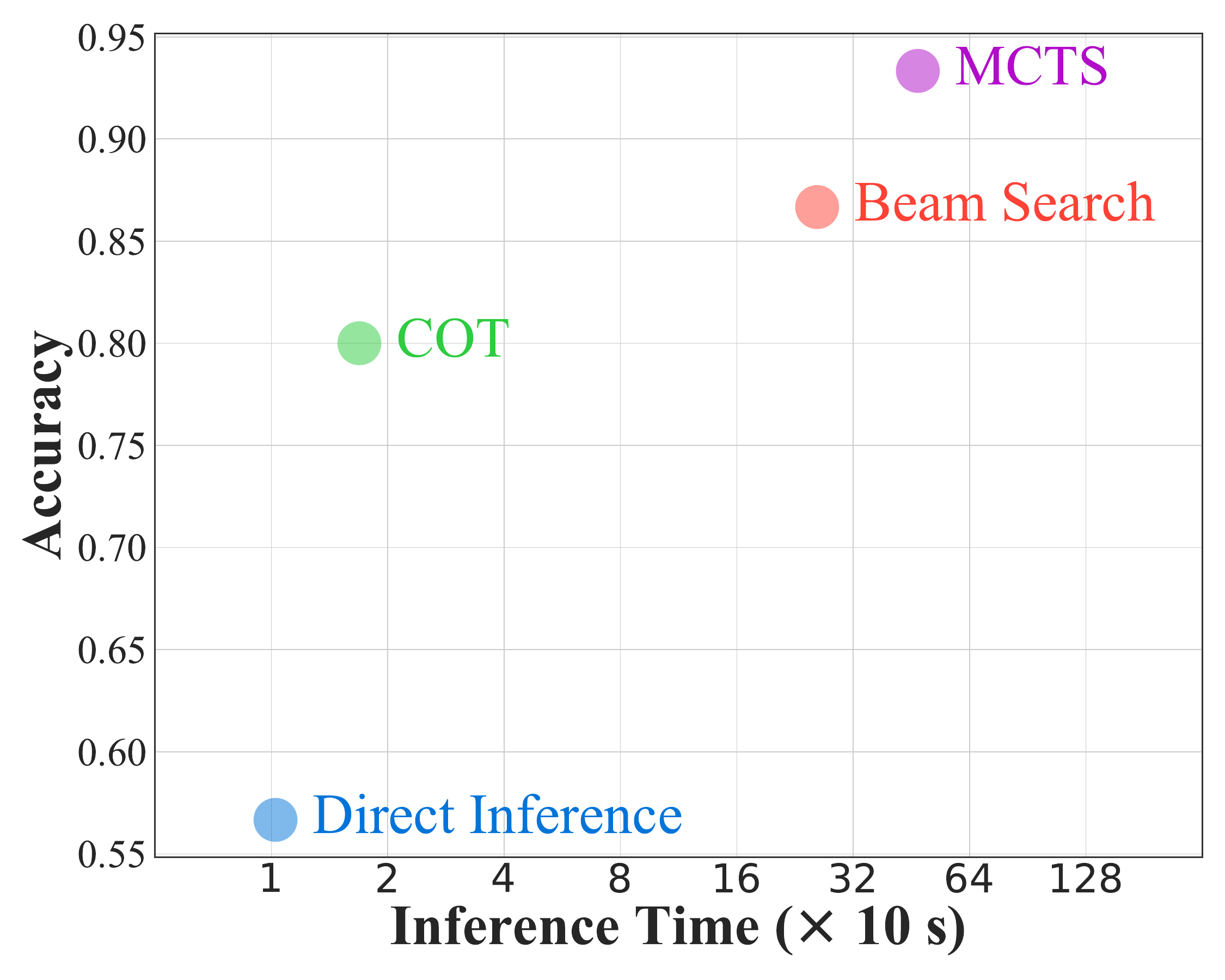}
        \vspace{-5.5mm}
        \caption{Latency-Accuracy}
        \label{fig:latency_vs_accuracy}
    \end{subfigure}
    \begin{subfigure}{0.23\textwidth}
        \includegraphics[width=\textwidth]{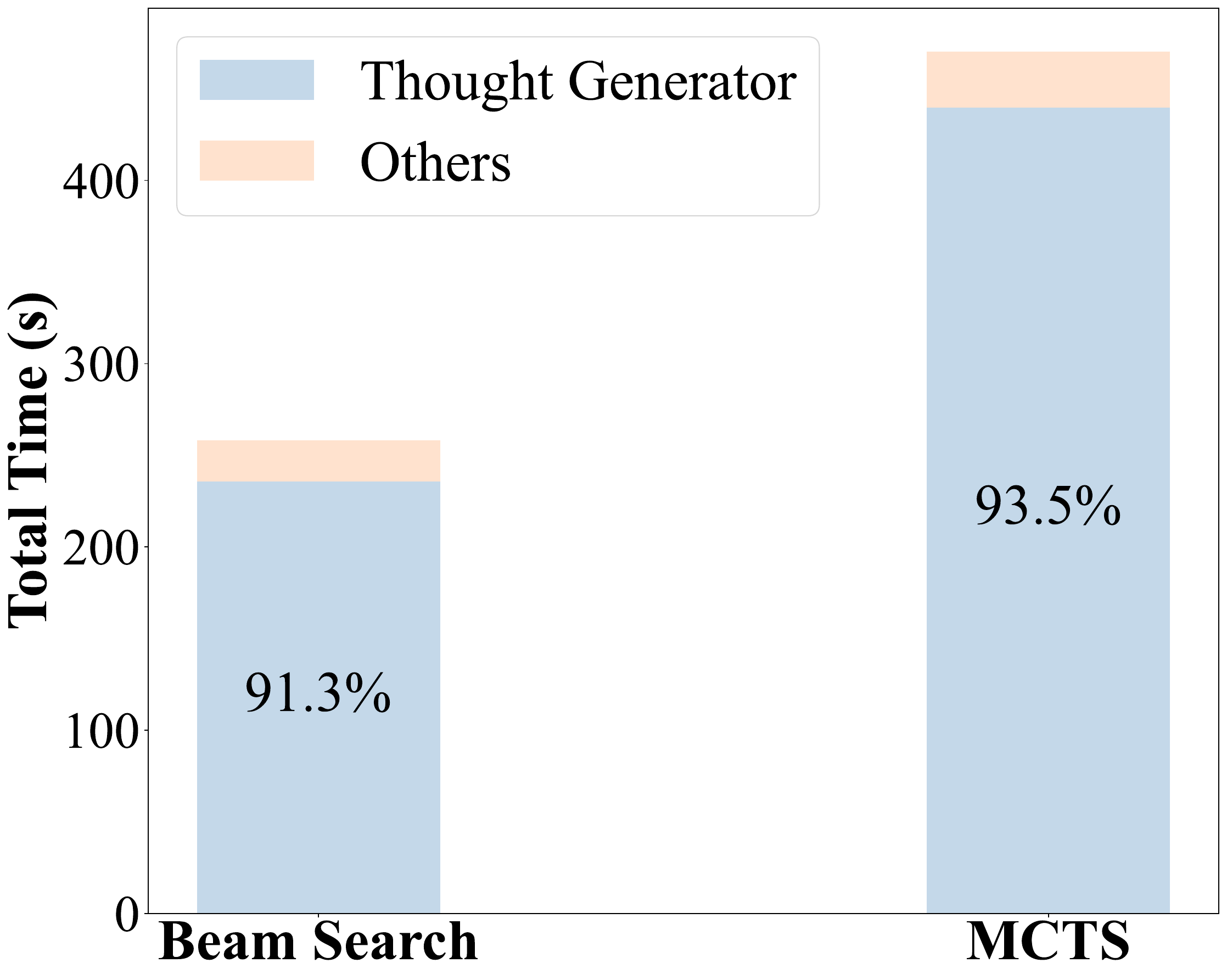}
        \vspace{-5.5mm}
        \caption{Bottleneck of Reasoning}
        \label{fig:bottleneck_reasoning}
    \end{subfigure}
    \vspace{-4mm}
    \caption{(a) The inference latency increases by several orders of magnitude with the introduction of tree-search-based reasoning methods. (b) Thought generation acts as an efficiency bottleneck of tree-search-based reasoning methods.}
    \vspace{-6mm}
    \label{fig:method_motivation}
\end{figure}

However, existing TSB reasoning methods often suffer from substantial inference latency~\cite{SC-MCTS, SEED}, with inference latency increasing by several orders of magnitude (see Figure \ref{fig:latency_vs_accuracy}). The primary bottleneck stems from the need to explore a vast number of reasoning thoughts (see Figure \ref{fig:bottleneck_reasoning}). 
This substantial increase in inference latency poses significant challenges for practical deployment, particularly in real-time applications requiring low-latency performance~\cite{effiner_survey, SD_survey}. However, effective and efficient strategies to accelerate slow-thinking reasoning in LLMs without compromising reasoning quality remain largely underexplored.

In this paper, we propose Speculative Search (SpecSearch), a novel and efficient framework that significantly accelerates LLM reasoning while maintaining comparable quality. At its core, SpecSearch features a bi-level speculative thought generator, where a small model strategically collaborates with a large model at both coarse-grained thought and fine-grained token levels. This innovative design optimizes thought generation efficiency, enabling faster yet effective reasoning. To ensure reasoning quality, SpecSearch proposes to filter out thoughts that fall below the quality of the large model’s outputs. 
SpecSearch achieves this by accurately and efficiently estimating the quality through a non-parametric statistical estimation method, leveraging historical reasoning thoughts from the large model. Moreover, we establish a theoretical guarantee that SpecSearch preserves comparable reasoning quality to the large model.


To demonstrate the effectiveness of SpecSearch, we evaluate it on two complex reasoning datasets: MATH and GSM8K. Experiments using both the Qwen and Llama models demonstrate that our method significantly outperforms state-of-the-art (SOTA) approaches, achieving up to 2.12$\times$ speedup while maintaining comparable reasoning quality. Moreover, experiments demonstrate that SpecSearch seamlessly integrates with several tree search algorithms and thought evaluators, delivering substantial acceleration without compromising reasoning quality. These results underscore SpecSearch's ability to significantly enhance the efficiency of existing TSB reasoning methods.

We summarize our major contributions as follows. 
(1) \textbf{A Novel SpecSearch Framework} Observing that thought generation is a major efficiency bottleneck, we propose SpecSearch, which utilizes a small model collaborating with a large model at both coarse-grained thought and fine-grained token levels. This design significantly improves thought generation efficiency, thereby accelerating LLM reasoning.
(2) \textbf{Quality-Preserving Rejection Mechanism} To ensure high reasoning quality, we propose to filter out thoughts whose quality falls below that of the large model's outputs, and efficiently estimate the large model's quality via its historical reasoning thoughts. 
(3) \textbf{Theoretical Guarantee} We provide a theoretical analysis showing that SpecSearch preserves reasoning quality comparable to that of the large model.
(4) \textbf{Significant Speedup and Versatility} Experiments demonstrate that SpecSearch significantly outperforms SOTA approaches, achieving up to 2.12$\times$ speedup while preserving comparable reasoning quality. Moreover, experiments demonstrate the strong compatibility of SpecSearch with different LLMs, search algorithms, and thought evaluators, highlighting its broad applicability.



\vspace{-1.5mm}
\section{Related Work}\label{related_work}
\vspace{-1.5mm}
\textbf{Speculative Decoding}
As the number of parameters in LLMs increases, inference latency has become a major obstacle to their broader applications~\cite{effiner_survey,wan2024efficient,SD_survey,sd_survey2}. This latency is primarily caused by the autoregressive decoding process, where each token is generated sequentially, dependent on the preceding token's completion \cite{PaDeLLM-NER,SD_survey}.
To accelerate LLM decoding, an innovative paradigm~\cite{direct_SD1,direct_SD2,cascade_speculative,Multi-Candidate_SD,Eagle,Cllms,S3d} introduces the idea of speculative execution~\cite{speculative_execution1,speculative_execution2} as in computer architecture to LLM decoding in a draft-then-verify style. Specifically, speculative decoding methods speculatively draft a sequence of tokens via a small model, and then verify the sequence via a large model in parallel, thus speeding up the LLM decoding process (see Figure \ref{fig:sd} in Appendix \ref{appendix:more_bg}). However, speculative decoding---a token-level inference acceleration method---can be poorly aligned with optimizing search-based reasoning approaches that involve intricate, non-sequential reasoning thought generation, leading to suboptimal acceleration performance. Inspired by speculative execution, we propose a novel SpecSearch framework to leverage the inherent structure of TSB reasoning frameworks by formulating both thought  and token generation as speculative tasks. To the best of our knowledge, we are \textit{the first} to well generalize speculative execution to TSB reasoning, providing a novel speculative execution formulation for accelerating LLM reasoning. Moreover, we provide a detailed discussion on novelty of SpecSearch over standard speculative decoding and TreeBon~\cite{treebon} in Appendix \ref{discussion_novelty}.

\textbf{Tree-Search-Based Reasoning Acceleration}
In recent years, tree-search-based reasoning methods~\cite{ToT,RAP,RoT,AlphaZeroLLM, LLMTreeSearch, rebase,self-evaluation_decoding} have significantly enhanced the reasoning capabilities of LLMs. To accelerate tree-search-based reasoning, \citet{SC-MCTS} directly integrate standard speculative decoding techniques with reasoning methods. Subsequently, SEED~\cite{SEED} proposes a Scheduled
Speculative Decoding method, which manages the scheduling of parallel small models based on only one shared large model. These methods improve the efficiency of tree-search-based reasoning methods, demonstrating the potential of designing speculative execution strategies in the LLM reasoning framework. However, these methods primarily design speculative execution strategies at the token level, neglecting the inherent structure of LLM frameworks, where reasoning thoughts are fundamental units. This oversight results in suboptimal acceleration performance. In contrast, our SpecSearch proposes a novel bi-level speculative thought generator, which utilizes a small model collaborating with a large model at both coarse-grained thought and fine-grained token levels, leading to significant acceleration with comparable quality.

\vspace{-3mm}
\section{Background}\label{background}
\vspace{-1.5mm}
\textbf{Speculative Sampling in LLM Decoding}
We introduce Speculative Sampling (SpS)~\cite{direct_SD1, direct_SD2}, a decoding technique that accelerates LLM inference while \textit{preserving the target model’s distribution}. Given a prefix $c$, a draft model $M_q$, a target model $M_p$, and step size $\gamma$, SpS operates in two phases. (1) \textbf{Drafting} $M_q$ autoregressively generates $\gamma$ tokens $x_1, x_2, \dots, x_\gamma$. (2) \textbf{Verification} $M_p$ verifies these tokens in parallel, accepting $x_i$ with probability $\min\left(1, \frac{M_p(x_i \mid x_{i-1}, \ldots, x_1, c)}{M_q(x_i \mid x_{i-1}, \ldots, x_1, c)}\right)$. If $x_i$ is rejected, a resampling method generates $\tilde{x}_i$. This process theoretically guarantees alignment with the target model’s distribution while significantly enhancing inference speed.

\textbf{Tree-Search-Based Reasoning Methods}
Tree-search-based reasoning methods formulate tree nodes as intermediate reasoning steps (thoughts) and tree paths as potential solutions to multi-step reasoning problems. They comprise a Thought Generator ($G$), a Thought Evaluator ($V$), and a search algorithm (see Figure \ref{fig:sbr} in Appendix \ref{appendix:more_bg}). From the root node ($c$, input question), $G$ expands the tree by generating $N$ child nodes (thoughts). $V$ evaluates their quality, guiding the search algorithm. This iterative process constructs a reasoning tree, culminating in a final reasoning path $P$, formed by $z_n, \dots, z_1, c$. Common search algorithms include Beam Search and MCTS. See Appendix \ref{appendix:more_bg} for details.

\vspace{-1.5mm}
\section{Speculative Search for LLM Reasoning}\label{method}
\vspace{-1.5mm}


    


    We begin with an overview of the SpecSearch framework in Section \ref{method:overview}. 
    Next, we detail the formal procedure underlying SpecSearch, describing the bi-level speculative thought generator in Section \ref{method:stg} and the quality-preserving rejection mechanism in Section \ref{method:rejection}. 
    Lastly, we present the theoretical guarantee of SpecSearch in Section \ref{method:theory}.
    
\vspace{-1.5mm}
\subsection{Overview of Speculative Search Framework}\label{method:overview}
\vspace{-1.5mm}


    In this part, we first present several key motivations for our proposed SpecSearch. Then, we describe the overview of SpecSearch as shown in Figure \ref{fig:method}.

    \textbf{Motivation 1: Thought generation dominates computation time in tree-search-based reasoning}, consuming over 91\% of total runtime in reasoning (see Figure \ref{fig:bottleneck_reasoning}). 
    
    \textbf{Motivation 2: Small models can generate high-quality reasoning thoughts.}  
    In multi-step reasoning, some steps are inherently simpler than others. For example, solving \( 99^2 + 99 + 1 \) requires computing \( 99^2 \) ("harder") and \( 9900 + 1 \) ("easier").  
    Moreover, an analysis of the quantized Qwen2.5-7B-Instruct model shows that over 40\% of its reasoning, thoughts outperformed the average reward score of those from the larger Qwen2.5-72B-Instruct model (see Figure \ref{fig:draft_model_reward}). The findings suggest that assigning simple steps to a small model and complex ones to a large model can speed up reasoning without sacrificing accuracy.



    \textbf{Motivation 3: Simple large model engagement strategies at the thought level struggle to maintain reasoning quality.} 
    Motivated by Motivation 2, we design a simple large-model engagement strategy where a thought evaluator scores the small model's outputs, and the bottom X\% (X is a hyperparameter) is reprocessed by the large model for refinement. However, as shown in Figure \ref{fig:engagement_vs_acc}, maintaining reasoning quality remains challenging when collaboration occurs at the thought level.
    

\textbf{Overview of SpecSearch}
Building on the aforementioned key motivations, SpecSearch proposes a bi-level speculative thought generation framework, leveraging both a small model and a large model to efficiently produce high-quality reasoning thoughts. Guided by a thought evaluator, this method operates at both the thought and token levels, integrating seamlessly into any search algorithm as an efficient node expansion module.  

The bi-level speculative thought generation follows a \textbf{draft-evaluate-reject-correct} paradigm, where the first three stages operate at a coarse-grained thought level, while the final stage refines outputs at a fine-grained token level. Initially, a small model \textbf{drafts} multiple reasoning thoughts, which are then \textbf{evaluated} by a thought evaluator to \textbf{reject} low-quality candidates. Finally, a lossless speculative decoding method \textbf{corrects} the rejected thoughts, ensuring robust and accurate reasoning.
    
\begin{figure}[t]
    \centering 
    \begin{subfigure}{0.23\textwidth}
        \includegraphics[width=\textwidth]{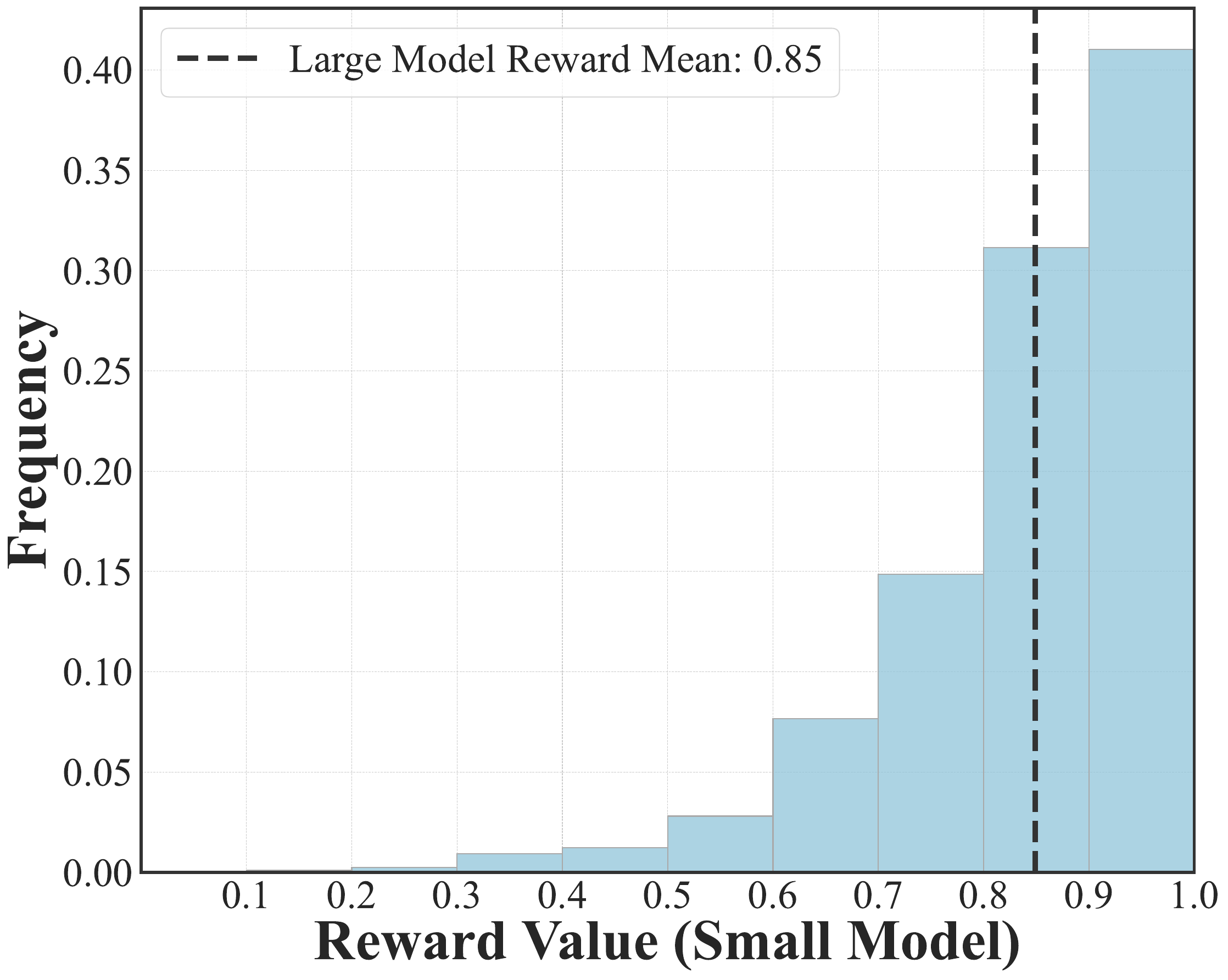}
        \vspace{-6.5mm}
        \caption{Scores of a Small Model}
        \label{fig:draft_model_reward}
    \end{subfigure}
    \begin{subfigure}{0.23\textwidth}
        \includegraphics[width=\textwidth]{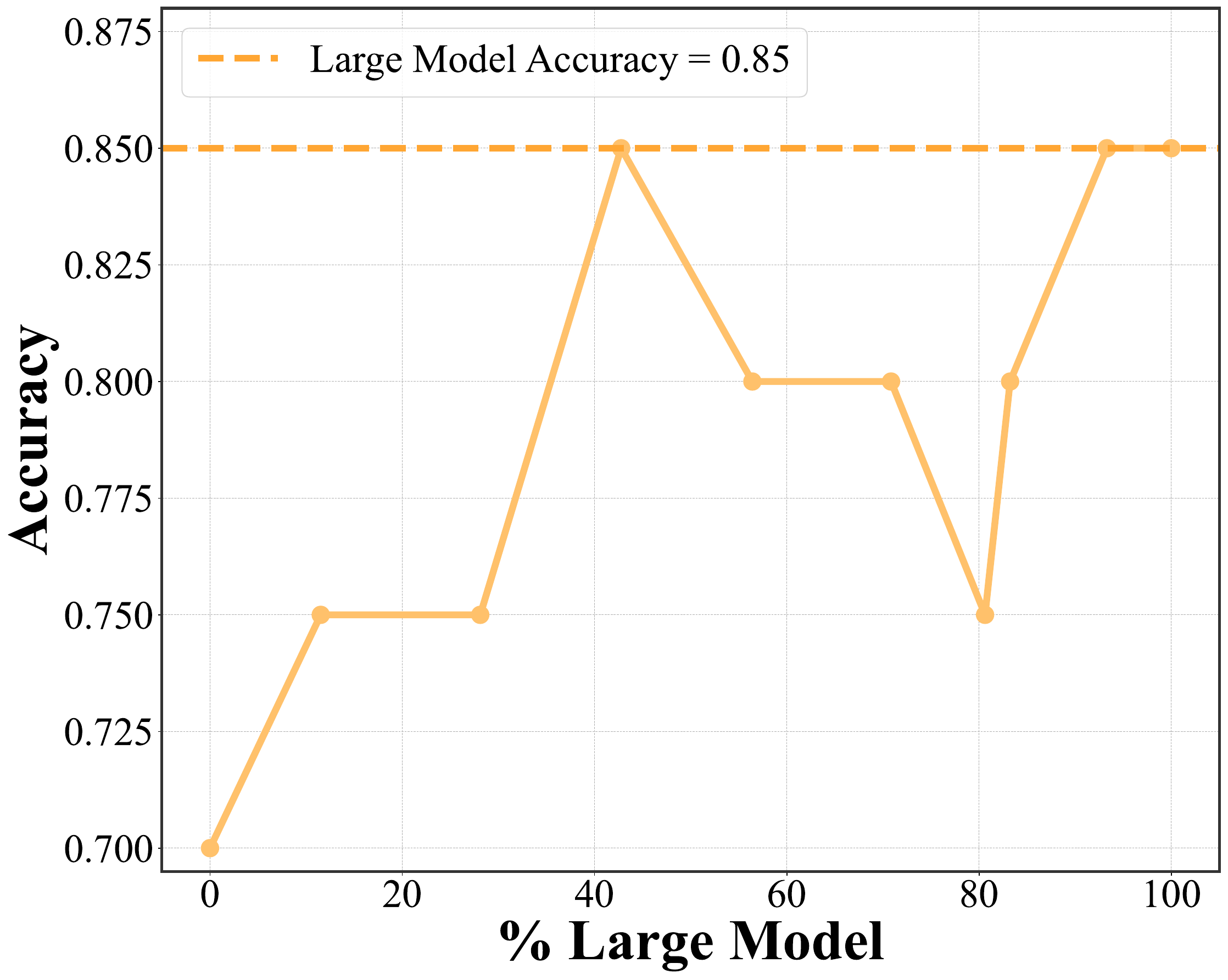}
        \vspace{-6.5mm}
        \caption{Large Model Engagement}
        \label{fig:engagement_vs_acc}
    \end{subfigure} 
    \vspace{-4.5mm}
    \caption{(a) Small models can generate thoughts with high reward scores. (b) Simple large model engagement strategies at the thought level struggle to preserve comparable reasoning quality.}
    \vspace{-6mm}
    \label{fig:method_motivation}
\end{figure}

\begin{figure*}[t]
    \centering
    \includegraphics[width=0.95\textwidth]{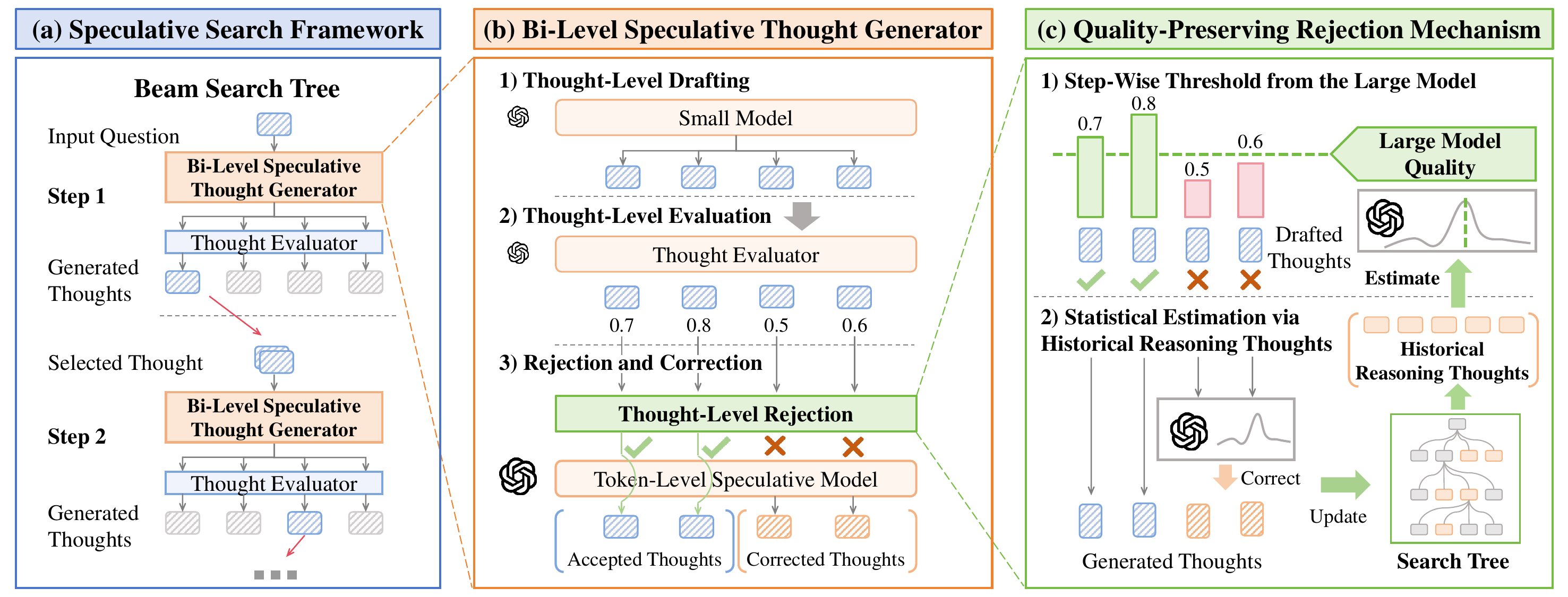}
    \vspace{-4.5mm}
    \caption{Illustration of our proposed SpecSearch. SpecSearch proposes a bi-level speculative thought generator with a quality-preserving rejection mechanism, which significantly accelerates LLM reasoning while preserving comparable quality.}
    \vspace{-5mm}
    \label{fig:method}
\end{figure*}

\vspace{-2mm}
\subsection{Bi-Level Speculative Thought Generator}\label{method:stg}
\vspace{-1.5mm}
    This section first discusses the advantages of using a small model in collaboration with a large model at both the coarse-grained thought and fine-grained token levels. 
    We then describe the bi-level speculative thought generator. 
    An illustration of the generator is provided in Figure \ref{fig:method}. 
    The procedure is summarized in Algorithm \ref{alg:generator}.

    \textbf{Advantages}
    Compared to standard token-level speculative decoding~\cite{SD_survey, sd_survey2, direct_SD1, direct_SD2, Eagle}, thought-level speculative execution offers several key advantages.  
    \textbf{First}, it leverages the inherent structure of the tree-search-based reasoning framework, where each thought serves as a fundamental unit (i.e., a node) within the reasoning tree. This structure allows for effective utilization of components such as the thought evaluator, enabling seamless integration into the reasoning process.  \textbf{Second}, since a single thought typically comprises more than fifty tokens (see Table \ref{tab:thought-tokens} in Appendix \ref{appendix:more_motivation_results}), thought-level speculation facilitates coarse-grained collaboration, increasing the number of tokens generated by the small model throughout the search process. This, in turn, can significantly enhance the efficiency of thought generation.  
    Third, it harnesses the reasoning capabilities of small models to generate high-quality thoughts (see Figure \ref{fig:draft_model_reward}). As a result, it carries the potential to maintain or even improve reasoning quality compared to the original large model (see Tabel \ref{tab:main_evaluation} in Section \ref{experiments}).


    


    \textbf{Algorithm Design}  
    We propose the following bi-level speculative thought generator, which follows a draft-evaluate-reject-correct paradigm. 
    First, it drafts multiple reasoning thoughts using a small model, then evaluates the quality of these thoughts and rejects those of low quality. 
    Finally, the rejected thoughts are corrected using a lossless token-level speculative decoding method.
    
    \textbf{(1) Drafting Phase at the Thought Level}  
    To leverage the reasoning capability of small models, as shown in Figure \ref{fig:draft_model_reward}, we propose using a small model to efficiently generate multiple reasoning thoughts. 
    These drafted thoughts serve as potential candidates for further evaluation and correction.
    

    \textbf{(2) Evaluating Phase at the Thought Level}  
    To evaluate the quality of the generated thoughts, previous speculative decoding methods~\cite{SD_survey, sd_survey2} typically use a large model to verify the token sequences within each thought. Verifying thoughts with a large model poses several challenges.  \textbf{First}, it struggles to capture the intrinsic structure and semantics of reasoning thoughts, leading to potential evaluation inaccuracies. \textbf{Second}, while token-level distributions are well understood, preserving thought distributions is far more complex. The exponential growth of possible thoughts makes accurate modeling difficult. \textbf{Third}, in tree-search-based reasoning, multiple valid paths can lead to the same answer, creating ambiguity in defining lossless thought generation. A speculative model may generate different reasoning paths than a large model while still being correct. Overall, these challenges significantly limit the accuracy and efficiency of large-model-based verification.
    
    
    To address these challenges, we propose utilizing the inherent thought evaluator within the existing LLM reasoning framework for accurate thought evaluation.  
    For example, a process reward model (PRM) can be employed to assign a reward score to each thought, offering an accurate evaluation of its quality. This approach addresses the aforementioned challenges and offers several advantages. A detailed discussion is provided in Appendix \ref{appendix:discuss_evaluation}.
    
    \textbf{(3) Rejection Phase at the Thought Level}  
    The primary objective of this phase is to effectively reject generated thoughts that are of lower quality than the large model’s outputs—a task made particularly challenging by the lack of access to the large model’s outputs. To address this challenge, we propose a novel quality-preserving rejection mechanism, as detailed in Section \ref{method:rejection}.

    \textbf{(4) Correction Phase at the Token Level}  
    To correct rejected low-quality thoughts, we propose utilizing a lossless token-level speculative decoding method to refine them at a fine-grained token level.  
    By applying lossless speculative decoding, we ensure that the corrected thoughts maintain the same distribution as the large model’s outputs.  
    For token-level correction, we propose regenerating the entire thought using a token-level speculative model to replace the rejected one for simplicity. Unless otherwise specified, we use the terms "large model" and "token-level speculative model" interchangeably in the following.

    \vspace{-2mm}
    \subsection{Quality-Preserving Rejection Mechanism}\label{method:rejection}
    \vspace{-1.5mm}

    

        



        Unlike standard token-level speculative decoding, our approach has the potential to significantly reduce inference latency through thought-level speculation, as discussed in Section \ref{method:stg}. 
        However, since a reasoning thought consists of more than fifty token-generation steps, a small model is more prone to generating misleading thoughts, as errors can accumulate across multiple token-generation steps. Therefore, a robust thought rejection mechanism is essential to ensure reasoning quality. To address this quality-preserving challenge, we first present several mathematical definitions.
        Let $\mathbb{Z}$ be the set of all possible reasoning thoughts, and let $V: \mathbb{Z} \to [0,1]$ be a process reward model (PRM) that assigns a quality score to each thought.  
        Given a sequence of generated thoughts \( z_{k-1}, z_{k-2}, \dots, z_1 \) and an initial condition \( c \) (e.g., input question and prompt), a thought generator \( G \) samples thoughts from the distribution \( G(\cdot | z_{<k}) \) over \( \mathbb{Z} \).  
        
        \begin{definition}
            (\textbf{Quality of Thoughts and Thought Generators})  
            The quality of a thought \( z \) is given by \( V(z) \). The reasoning quality of a thought generator \( G \) is defined as 
            $\mathbb{E}_{z \sim G(\cdot \mid z_{<k})} \left[ V(z) \right]$.
        \end{definition}
    \vspace{-1.5mm}
        Based on the aforementioned definition, we can compare the quality of our speculative thought generator with that of the large-model-based thought generator.  
        Thus, we present a condition under which our speculative thought generator achieves undegraded quality compared to the large model.
        \begin{definition}\label{theory:lossless_def}
            (\textbf{Undegraded Quality Condition})  
            Let $G_p$ be the large-model-based thought generator, with quality $\mu_p = \mathbb{E}_{z \sim G_p(\cdot \mid z_{<k})}[V(z)]$.  
            Let $G_s$ be our speculative thought generator.  
            The undegraded quality condition is defined as $\mathbb{E}_{z \sim G_s(\cdot \mid z_{<k})} \left[ V(z) \right] \geq \mu_p$.
        \end{definition}
    \vspace{-1.5mm}
        Based on this condition, we first analyze the quality of our speculative thought generator. In general, the generator operates by first generating \(N\) thoughts using a small model, rejecting \(M\) low-quality thoughts, and then correcting the rejected \(M\) thoughts using a large model (speculative model).
        
    We present an intuitive analysis as follows.  
    In extreme cases, an overly lenient rejection criterion results in \(M = 0\), meaning no thoughts are rejected, and the generator relies entirely on the small model. Under these conditions, the acceleration ratio is maximized. However, the quality of the generated thoughts tends to be suboptimal, as the small model is generally less capable than the large model.  
    Conversely, if the rejection criterion is too strict, \(M = N\), meaning all thoughts are rejected, reducing the generator to a fully large-model-based approach. While this guarantees reasoning quality, it significantly diminishes acceleration benefits.  
    Therefore, achieving an optimal balance between rejection stringency and computational efficiency is essential to maintain both reasoning quality and acceleration gains.

\begin{algorithm}[t]
    \caption{Bi-Level Speculative Thought Generator $G_s$}
    \label{alg:generator}
\begin{algorithmic}[1]
    \STATE \textbf{Input:} A sequence of thoughts $z_{<k}$, token-level speculative model $G_p$, small model $G_q$, evaluation model $V$, step-wise threshold $\hat{\beta}^{(k)}$, expansion width $N$, EMA weight $\theta$, a nonparametric estimation method $\Theta$.
    
    \STATE $\mathcal{T} \gets \left\{\big(z_k^i, z_{<k}\big)\mid z_k^i\gets G_q(\cdot\mid z_{<k}),\;i=1,\dots,N \right\}$\;\;\; \textcolor{blue}{\COMMENT{Drafting in Parallel}}
    \STATE $\mathcal{V} \gets V(\mathcal{T})$ \textcolor{blue}{\COMMENT{Evaluating in Parallel}}

    \STATE Initialize $\mathcal{T}_{q} \gets \emptyset$, $\mathcal{T}_p \gets \emptyset$
    \FOR{$i=1$ \textbf{to} $N$ }
        \IF[\textcolor{blue}{Rejection Phase}]{$\mathcal{V}[i] \ge \hat{\beta}^{(k)}$}
            \STATE Accept thought: $\mathcal{T}_{q}\gets \mathcal{T}_{q}\cup \big\{\mathcal{T}[i]\big\}$
        \ELSE
            \STATE $z_k^\prime\gets G_p\left(\cdot\mid z_{<k}\right)$ \textcolor{blue}{\COMMENT{Correction Phase}}
            \STATE $\mathcal{T}_{p}\gets \mathcal{T}_{p}\cup \left\{\big(z_k^\prime, z_{<k}\big)\right\}$
        \ENDIF
    \ENDFOR
    \STATE $\mathcal{V}_p \gets V(\mathcal{T}_p)$ \textcolor{blue}{\COMMENT{Evaluating in Parallel}}
    \STATE $\hat{\beta}^{(k+1)} \gets \theta \hat{\beta}^{(k)} + (1 - \theta) \Theta (\mathcal{V}_p)$ \textcolor{blue}{\COMMENT{Updating threshold}}
    \STATE \textbf{return} $\hat{\beta}^{(k+1)}$, $\mathcal{T}_{q}\cup \mathcal{T}_p$
\end{algorithmic}
\end{algorithm}

        \textbf{Rejection Mechanism Based on Step-Wise Threshold from the Large Model}  
        To implement the aforementioned rejection mechanism, we propose a step-wise threshold-based rejection method.  
        This approach involves establishing a dynamic threshold at each reasoning step, filtering out all thoughts that fall below this threshold to achieve quality-preserving rejection.  
        Intuitively, if the dynamic threshold can be calibrated to reflect the quality of the large model, it is possible to maintain undegraded quality while simultaneously achieving acceleration.  
        This idea is intuitive, and we further validate it theoretically in Section \ref{method:theory}.

    However, at the \( k \)-th reasoning step, the quality of the large model remains unknown.  Fortunately, sampled thought data from the large model is available from previous reasoning steps. Thus, we formulate the step-wise threshold design problem as estimating the large model’s quality at the current reasoning step based on historical reasoning thoughts collected during the tree search process.

    \textbf{Problem Formulation of Step-Wise Threshold Estimation} At the \( k \)-th reasoning step, we have access to historical data from all previous reasoning steps, along with \( M \) corrected thoughts sampled from the large model (speculative model) \( G_p \), whose qualities are represented as \( \mathcal{V}_p^{(k)} = \Big\{ V^{(k)}_1, V^{(k)}_2, \dots, V^{(k)}_M \Big\} \). The objective is to utilize this sequence of quality values to predict the large model's thought quality at the next reasoning step and set this estimate as the threshold \( \hat{\beta}^{(k+1)} \) for the \( (k+1) \)-th step.

    \textbf{Statistical Estimation via Historical Moving Average} To solve the estimation problem, we must leverage the correlation between the large model’s quality across different reasoning steps. Without this correlation, the estimation task would be infeasible.  
    Fortunately, our observations indicate that the quality of the large model’s outputs tends to decrease as the reasoning process progresses (see Figure \ref{fig:reward-by-step} in Appendix \ref{appendix:more_motivation_results}). This suggests that at the \( (k+1) \)-th reasoning step, the quality of the large model’s outputs from the previous \( k \) steps can serve as an approximate upper bound for the current step’s quality estimate.  
    Building on this observation, we propose a two-stage estimation method. First, we estimate the large model’s quality at each of the previous \( k \) reasoning steps through any non-parametric estimation method. Then, we apply an ensemble weighting technique to these \( k \) steps to derive an approximate upper bound for the large model’s quality at the \( (k+1) \)-th reasoning step. 
    Specifically, we incorporate an Exponential Moving Average (EMA) \cite{klinker2011exponential} over the preceding \( k \) steps. This approach ensures stable estimation, efficient utilization of historical data, adaptability to dynamic quality shifts, and minimal computational overhead. The update method is as follows: $\hat{\beta}^{(k+1)} = \theta \hat{\beta}^{(k)} + (1-\theta) \Theta\left(\mathcal{V}_p^{(k)}\right)$, where \( \theta \) is a hyperparameter controlling the relative importance of historical data \( \hat{\beta}^{(k)} \) and current observations \( V^{(k)}_i, i = 1, \dots, M \) in the weighted average.  
        Here, \( \Theta \) represents a nonparametric estimation method, such as the sample mean, the confidence upper bound of \( \mu_p^{(k)} \), or the maximum value.  
    In practice, the number of samples from \( G_p \) may be limited, leading to highly inaccurate estimates of the large model’s quality and, consequently, a significant decline in the quality of generated thoughts.  
    To better utilize historical sample information from the tree search process and improve estimation accuracy, we incorporate data from the small model that passed the rejection phase. We treat these samples as an approximate upper-bound estimate of the large model’s quality and integrate them into our estimation framework.
        

        

    \vspace{-2mm}
    \subsection{Theoretical Guarantee of Undegraded Quality}\label{method:theory}
    \vspace{-1.5mm}
        

        We provide the following theoretical analysis to demonstrate that our SpecSearch can guarantee undegraded reasoning quality. We provide detailed proof in Appendix \ref{appendix:theory}.

        In this section, we analyze the integration of SpecSearch with the beam search algorithm, which operates with a maximum reasoning depth of \( K \).  
        At each step, the algorithm generates \( N \) candidate thoughts and selects the best one.
        
        Let \( G_p \) and \( G_q \) denote the large model (speculative model) and the small model.  
        Due to the complexity of large language models, rigorous mathematical reasoning is challenging.
        To simplify the mathematical derivation, we assume that, at the \( k \)-th reasoning step, the qualities of thoughts generated by \( G_p \) and \( G_q \) independently follow normal distributions, \( \mathcal{N}\left(\mu_p^{(k)}, \sigma_p^{(k)}\right) \) and \( \mathcal{N}\left(\mu_q^{(k)}, \sigma_q^{(k)}\right) \), respectively, where \( \mu_p^{(k)} \ge \mu_q^{(k)} > 0 \).  
        This assumption is commonly used in data science~\cite{gopinath1998maximum, zhang2010gaussian}.

        Denote our speculative thought generator with threshold \( \beta^{(k)}, k = 1, 2, \dots, K \) as \( G_s\left(\left\{\beta^{(k)}\right\}_{k=1}^{K}\right) \), where \( K \) is the maximum number of reasoning steps.
        Note that we use \( \beta^{(k)} \) to denote a general threshold, while \( \hat{\beta}^{(k)} \) represents the estimate of the threshold obtained using our estimation method.
        The following theorem guarantees that, under ideal conditions, as long as the threshold meets or exceeds the quality of the large model, our generator ensures the undegraded quality condition defined in Definition~\ref{theory:lossless_def}.
        \begin{theorem}\label{theory:lossless_condition}
            (\textbf{Quality-Preserving Condition on the Threshold})  
            The generator \( G_s\left(\left\{\beta^{(k)}\right\}_{k=1}^{K}\right) \) preserves undegraded quality if the following condition holds:
                $\beta^{(k)} \ge \mu_p^{(k)}, \;\forall k = 1, 2, \dots, K$.
        \end{theorem}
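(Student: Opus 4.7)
The plan is to prove the theorem by establishing the undegraded quality condition at each reasoning depth $k$ through a mixture-decomposition of the speculative thought generator. Fix a prefix $z_{<k}$. A sample $z \sim G_s(\cdot \mid z_{<k})$ is produced by first drafting $z' \sim G_q(\cdot \mid z_{<k})$, computing $V(z')$, and then either accepting $z = z'$ when $V(z') \ge \beta^{(k)}$, or correcting by resampling $z \sim G_p(\cdot \mid z_{<k})$ otherwise. This two-branch structure makes $V(z)$ a mixture of a truncated draft quality and an independent large-model quality, which is the object I would analyse.

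Next I would apply the law of total expectation. Let $V_q \sim \mathcal{N}(\mu_q^{(k)}, \sigma_q^{(k)})$ denote a draft quality, $V_p \sim \mathcal{N}(\mu_p^{(k)}, \sigma_p^{(k)})$ a correction quality, and let $p^{(k)} = \Pr(V_q \ge \beta^{(k)})$. The decomposition yields
\[
\mathbb{E}[V(z)] \;=\; p^{(k)}\, \mathbb{E}\bigl[V_q \mid V_q \ge \beta^{(k)}\bigr] \;+\; \bigl(1 - p^{(k)}\bigr)\, \mu_p^{(k)}.
\]
The decisive step is the truncation bound $\mathbb{E}[V_q \mid V_q \ge \beta^{(k)}] \ge \beta^{(k)}$, which holds for any distribution because the conditioning event forces $V_q \ge \beta^{(k)}$ almost surely. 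Combining this with the hypothesis $\beta^{(k)} \ge \mu_p^{(k)}$ gives
\[
\mathbb{E}[V(z)] \;\ge\; p^{(k)} \mu_p^{(k)} + \bigl(1 - p^{(k)}\bigr) \mu_p^{(k)} \;=\; \mu_p^{(k)},
\]
which is exactly the condition of Definition \ref{theory:lossless_def}. Since $k$ was arbitrary, this yields the undegraded quality at every reasoning depth $k = 1, \dots, K$.

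The main obstacle I anticipate is the boundary case $p^{(k)} = 0$, where every draft is rejected and the conditional expectation $\mathbb{E}[V_q \mid V_q \ge \beta^{(k)}]$ is undefined; in that regime every thought comes from $G_p$ so $\mathbb{E}[V(z)] = \mu_p^{(k)}$ trivially, and I would treat this as a separate base case so the mixture formula is well-posed. A secondary subtlety is that the bound is stated per prefix $z_{<k}$, while $z_{<k}$ itself is generated by $G_s$ at prior steps; I would make explicit that the argument is a conditional inequality holding uniformly over prefixes, and thus lifts to the marginal quality across the beam-search depths by the tower law. I note in passing that the Gaussian assumption plays no role in the truncation inequality above, so the same proof establishes the lossless claim under substantially weaker distributional assumptions — the Gaussianity appears to be reserved for subsequent quantitative acceleration and quality-gap analyses rather than for Theorem \ref{theory:lossless_condition} itself.
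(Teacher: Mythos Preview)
Your proof is correct and follows the same high-level skeleton as the paper---decompose the speculative generator's output quality via the law of total expectation into an ``accepted-draft'' branch and a ``corrected-by-$G_p$'' branch, then show the accepted branch has conditional mean at least $\mu_p^{(k)}$. Where you diverge is in the key lemma. You invoke the distribution-free truncation inequality $\mathbb{E}[V_q \mid V_q \ge \beta^{(k)}] \ge \beta^{(k)}$, which is immediate. The paper instead writes out the explicit truncated-normal mean $\mu_q' = \mu_q^{(k)} + \sigma_q^{(k)} h\bigl((\beta^{(k)}-\mu_q^{(k)})/\sigma_q^{(k)}\bigr)$ with $h(x)=\varphi(x)/(1-\Phi(x))$ the inverse Mills ratio, and then proves a separate Gaussian tail lemma ($\varphi(x)-x(1-\Phi(x))>0$, equivalently $h(x)>x$) to conclude $\mu_q' \ge \beta^{(k)} \ge \mu_p^{(k)}$. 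Your route is strictly more elementary and, as you observe, does not use the normality assumption at all. The paper's heavier Gaussian machinery is not wasted, though: the exact expression $\mathbb{E}[\bar V^{(k)}] = \mu_p^{(k)} + \tfrac{Np_s}{N+1}(\mu_q'-\mu_p^{(k)})$ and the monotonicity of $h$ are reused to obtain a necessary-and-sufficient threshold condition (their Proposition~A.2) and feed the variance computations behind Theorems~\ref{theory:step_wise_probability_bound}--\ref{theory:joint_probability_bound}. So your argument is the cleaner one for Theorem~\ref{theory:lossless_condition} in isolation, while the paper's longer path builds infrastructure it needs later. Your handling of the $p^{(k)}=0$ boundary and the per-prefix conditioning are both appropriate; the paper's proof also conditions on the batch count $U$ (equivalently on the number of acceptances among $N$ drafts plus one extra $G_p$ sample), but this averaging collapses to your single-sample mixture by exchangeability and does not change the inequality.
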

    \vspace{-1.5mm}
        This theorem provides a quality-preserving condition on the threshold in our designed speculative thought generator. Specifically, if the threshold estimation method proposed in Section \ref{method:rejection} satisfies this condition, then our SpecSearch guarantees undegraded quality.
        
        We then make the following quality-descending assumption based on our observations in Figure \ref{fig:reward-by-step} in Appendix \ref{appendix:more_motivation_results}.
        \begin{assumption}\label{theory:assumption_descending}
            (\textbf{Descending Quality and Bounded Variance})  
            At the \( k \)-th step in the beam search algorithm, which selects the candidate with optimal quality, we assume that $\mu_p^{(k)} \le \gamma \mu_p^{(k-1)}, \;\forall k = 1, 2, \dots, K$,
            where \( \gamma < 1 \) is the decay factor. We further assume that $\sigma_p^{(k)} \le \sigma_c, \;\forall k = 1, 2, \dots, K$, where \( \sigma_c > 0 \) is a constant.
        \end{assumption}
    \vspace{-1.5mm}
        
        Building upon this assumption, the following theorem establishes a lower bound on the probability that our speculative thought generator preserves quality at each reasoning step.

        \begin{theorem}\label{theory:step_wise_probability_bound}
            (\textbf{Probability Bound for Step-Wise Quality-Preserving})
            Consider a speculative thought generator \( G_s\left(\left\{\beta^{(k)}\right\}_{k=1}^{K}\right) \). Given that weight \( \theta \ge \gamma \) and at step \( k \ge 1 \), the generator \( G_s \) preserves undegraded quality, the lower bound for the probability that at step \( k+1 \) it also preserves undegraded quality is given by:
            \vspace{-1mm}
            \begin{align}
                &\scalebox{0.9}{$P\left(\hat{\beta}^{(k+1)} \ge \mu_p^{(k+1)} \mid \hat{\beta}^{(k)}\ge \mu_p^{(k)} \right)\ge$} \nonumber\\
                &\scalebox{0.9}{$ \frac{ \left[\frac{1-\gamma}{\gamma}\mu_p^{(k+1)}\right]^2}{\left[\frac{1-\gamma}{\gamma}\mu_p^{(k+1)}\right]^2 + \left(\frac{1}{N+1} + \frac{2}{N+2}\right) \left(\sigma_c\right)^2}$}.
            \end{align}
        \end{theorem}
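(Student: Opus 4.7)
The plan is to reformulate $P(\hat{\beta}^{(k+1)} \ge \mu_p^{(k+1)} \mid \hat{\beta}^{(k)} \ge \mu_p^{(k)})$ via the EMA recursion $\hat{\beta}^{(k+1)} = \theta \hat{\beta}^{(k)} + (1-\theta)\Theta(\mathcal{V}_p^{(k)})$ and then apply a one-sided Chebyshev (Cantelli) inequality to the centered variable $Z := \hat{\beta}^{(k+1)} - \mu_p^{(k+1)}$. The form $c^2/(c^2 + v)$ of the stated bound makes Cantelli the natural tool, with $c$ a deterministic lower bound on $\mathbb{E}[Z \mid \hat{\beta}^{(k)}]$ and $v$ an upper bound on $\mathrm{Var}(Z \mid \hat{\beta}^{(k)})$.

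First I would extract the deterministic mean gap: by Assumption~\ref{theory:assumption_descending}, $\mu_p^{(k+1)} \le \gamma \mu_p^{(k)}$, so $\mu_p^{(k)} - \mu_p^{(k+1)} \ge \frac{1-\gamma}{\gamma}\mu_p^{(k+1)}$, which becomes the quantity squared in the numerator. Next, treating $\Theta(\mathcal{V}_p^{(k)})$ as a (conditionally) upper-unbiased estimator of $\mu_p^{(k)}$---justified because the $M$ corrected draws from $G_p$ are unbiased and the $N-M$ above-threshold draws from $G_q$ lie above $\hat{\beta}^{(k)} \ge \mu_p^{(k)}$---the recursion yields $\mathbb{E}[\hat{\beta}^{(k+1)} \mid \hat{\beta}^{(k)}] \ge \theta\hat{\beta}^{(k)} + (1-\theta)\mu_p^{(k)} \ge \mu_p^{(k)}$. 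Combining the two displays gives $\mathbb{E}[Z \mid \hat{\beta}^{(k)}] \ge \frac{1-\gamma}{\gamma}\mu_p^{(k+1)}$; the hypothesis $\theta \ge \gamma$ is what guarantees that the EMA does not discount the historical correct-side value $\hat{\beta}^{(k)}$ faster than quality decays between steps, so this mean-bound chain remains valid.

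For the variance I would use $\sigma_p^{(k)} \le \sigma_c$ and the independence of samples, giving $\mathrm{Var}(\hat{\beta}^{(k+1)} \mid \hat{\beta}^{(k)}) = (1-\theta)^2\mathrm{Var}(\Theta(\mathcal{V}_p^{(k)}))$. Writing $\Theta$ as a weighted aggregation of its two sample groups, a careful degrees-of-freedom accounting should produce a contribution of $\frac{1}{N+1}\sigma_c^2$ from the corrected samples and $\frac{2}{N+2}\sigma_c^2$ from the accepted (truncated) $G_q$ samples. Substituting into Cantelli, $P(Z \ge 0) \ge \mu_Z^2/(\mu_Z^2 + \sigma_Z^2)$, and using monotonicity of $x \mapsto x^2/(x^2 + v)$ in $x$ together with its antitonicity in $v$ to replace $\mu_Z$ and $\sigma_Z^2$ with the worst-case values derived above, yields exactly the stated lower bound.

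The main obstacle will be pinning down the exact variance coefficients $\frac{1}{N+1}$ and $\frac{2}{N+2}$. These presumably arise from the specific form of the nonparametric estimator $\Theta$ (likely a convex combination of group sample means with a mild Bayesian-style regularization, which would explain the $+1$ and $+2$ offsets) and from a Popoviciu/Hoeffding-style bound controlling the second moment of the truncated $G_q$ distribution by $\sigma_c^2$. Spelling out the algebra behind those coefficients while remaining consistent with the informal estimator description in Section~\ref{method:rejection} is where the proof will demand the most care; the Cantelli step itself and the mean-gap step are essentially routine once the estimator is fixed.
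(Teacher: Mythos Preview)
Your overall architecture—lower-bound the conditional mean gap by $\frac{1-\gamma}{\gamma}\mu_p^{(k+1)}$ via Assumption~\ref{theory:assumption_descending} and then apply Cantelli—is exactly what the paper does, and your mean-gap argument is essentially correct. The variance analysis, however, is where your proposal diverges and would not reproduce the stated bound.

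First, the estimator $\Theta$ used in the paper's proof is \emph{not} a mixture of accepted-$G_q$ and corrected-$G_p$ groups. It is simply the sample mean of the $M+1$ corrected draws from $G_p$ alone (equation~\eqref{theory:mean_estimation} at the top of Appendix~\ref{appendix:theory}). Consequently the two coefficients do not arise from ``two sample groups,'' there is no Popoviciu/Hoeffding step on truncated $G_q$ samples, and there is no Bayesian regularization behind the $+1$ and $+2$ offsets.

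Second, and more consequentially, you condition on the \emph{value} $\hat{\beta}^{(k)}$, which eliminates its randomness and leaves only $(1-\theta)^2\mathrm{Var}(\Theta)$ as the variance. The theorem (and the paper's proof) conditions on the \emph{event} $\mathcal{C}=\{\hat{\beta}^{(k)}\ge\mu_p^{(k)}\}$, under which $\hat{\beta}^{(k)}$ is still random. The conditional-variance decomposition therefore gives a recursion
\[
\mathrm{Var}\bigl[\hat{\beta}^{(k+1)}\mid\mathcal{C}\bigr]\le\theta^2\,\mathrm{Var}\bigl[\hat{\beta}^{(k)}\mid\mathcal{C}\bigr]+\frac{2(1-\theta)^2}{N+2}\,\sigma_c^2,
\]
where $\frac{2}{N+2}$ comes from bounding $\mathbb{E}\bigl[1/(N-U^{(k)}+1)\bigr]$ with $\mathbb{E}[U^{(k)}]=Np_s$ and $p_s\le\frac12$ (the latter because $\hat{\beta}^{(k)}\ge\mu_p^{(k)}\ge\mu_q^{(k)}$ forces the acceptance probability below one half). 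Unrolling this recursion back to step~$1$ and summing the geometric series yields
\[
\mathrm{Var}\bigl[\hat{\beta}^{(k+1)}\mid\mathcal{C}\bigr]\le\Bigl(\frac{\theta^{2k}}{N+1}+\frac{2}{N+2}\cdot\frac{(1-\theta)(1-\theta^{2k})}{1+\theta}\Bigr)\sigma_c^2\le\Bigl(\frac{1}{N+1}+\frac{2}{N+2}\Bigr)\sigma_c^2.
\]
The $\frac{1}{N+1}$ term is the \emph{initial} variance of $\hat{\beta}^{(1)}$ (built from $N+1$ fresh $G_p$ samples at step~$0$) damped by $\theta^{2k}$ and then crudely bounded by~$1$; it has nothing to do with the step-$k$ estimator. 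A one-step analysis conditioning on the value of $\hat{\beta}^{(k)}$ can never produce this term, so your route would yield a bound that does not match the theorem as stated.
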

    \vspace{-1.5mm}
        Furthermore, for a beam search algorithm with up to \( K \) reasoning steps, we derive a lower bound on the probability that our speculative thought generator maintains undegraded quality, as stated in the following theorem.
        \begin{theorem}\label{theory:joint_probability_bound}
            (\textbf{Probability Bound for Quality-Preserving})
            For a speculative thought generator \( G_s\left(\left\{\beta^{(k)}\right\}_{k=1}^{K}\right) \) with a maximum of \( K \) reasoning steps, where \( K \in \mathbb{N} \), and weight \( \theta \ge \gamma \), the lower bound on the probability of \( G_s \) preserving undegraded quality is given by:
            \begin{align}
                &\scalebox{0.9}{$P\left(\hat{\beta}^{(k)} \ge \mu_p^{(k)}, 1 \le k \le K \right) \ge$} \nonumber\\
                &\scalebox{0.9}{$\left(1 - \frac{1}{2^{N+1}}\right) \prod_{k=1}^{K-1} \left[\frac{\left[\frac{1 - \gamma}{\gamma} \mu_p^{(k+1)}\right]^2}{\left[\frac{1 - \gamma}{\gamma} \mu_p^{(k+1)}\right]^2 + \left(\frac{1}{N + 1} + \frac{2}{N + 2}\right) \left(\sigma_c\right)^2}\right]$}.
            \end{align}
        \end{theorem}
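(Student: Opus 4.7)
The plan is to reduce the joint probability to a product of conditional probabilities by the chain rule, then invoke Theorem \ref{theory:step_wise_probability_bound} on each factor. Writing $A_k = \{\hat{\beta}^{(k)} \ge \mu_p^{(k)}\}$, I would first note that the update rule $\hat{\beta}^{(k+1)} = \theta \hat{\beta}^{(k)} + (1-\theta)\Theta(\mathcal{V}_p^{(k)})$ depends on the past only through $\hat{\beta}^{(k)}$ and the fresh sample batch at step $k$. This yields a Markov-style factorization
\begin{equation*}
P\!\left(\bigcap_{k=1}^{K} A_k\right) \;=\; P(A_1)\,\prod_{k=1}^{K-1} P\!\left(A_{k+1} \,\big|\, A_k, \dots, A_1\right),
\end{equation*}
and then, using the fact that conditioning on the full history supplies (weakly) more information than conditioning only on $A_k$, together with a monotonicity argument in $\hat{\beta}^{(k)}$, I would lower-bound each inner factor by $P(A_{k+1} \mid A_k)$.

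Second, I would apply Theorem \ref{theory:step_wise_probability_bound} directly to each of the $K-1$ conditional factors, which slots in the Chebyshev-style bound appearing inside the product in the statement. Since the bound from Theorem \ref{theory:step_wise_probability_bound} requires $\theta \ge \gamma$, which is exactly the hypothesis here, this step is mechanical once the Markov reduction is justified.

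Third, the remaining task is to establish the base factor $P(A_1) \ge 1 - 1/2^{N+1}$. For this I would unfold the initialization: with $\hat{\beta}^{(0)}$ set to zero (or any value not exceeding the first-step mean) and $\Theta$ taken to be a statistic dominated by the maximum of the bootstrap batch of $N+1$ samples from $G_p$, the event $A_1$ fails only if every one of those samples lies strictly below $\mu_p^{(1)}$. Under the normality assumption of Section \ref{method:theory}, $\mu_p^{(1)}$ is the median of each sample, so each sample falls below with probability $1/2$, and by independence the failure probability is at most $(1/2)^{N+1}$.

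The main obstacle I anticipate is the Markov/monotonicity reduction in the first step: Theorem \ref{theory:step_wise_probability_bound} is stated conditional only on $A_k$, whereas the chain rule naturally produces a conditional on the entire history $A_1, \dots, A_k$, so some care is needed to argue that dropping the extra conditioning can only shrink the bound. A clean way to handle this is to show that, under the EMA update with $\theta \ge \gamma$, the conditional distribution of $\hat{\beta}^{(k+1)} - \mu_p^{(k+1)}$ given the past is stochastically increasing in $\hat{\beta}^{(k)}$, so restricting to $\{\hat{\beta}^{(k)} \ge \mu_p^{(k)}\}$ yields at least the bound proven in Theorem \ref{theory:step_wise_probability_bound}. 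Once this monotonicity is in hand, the rest of the proof is a direct assembly of the step-wise bound and the base case.
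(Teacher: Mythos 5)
Your plan is essentially the paper's own proof: a chain-rule factorization of $P\bigl(\bigcap_k A_k\bigr)$ justified by the Markov property of $\hat{\beta}^{(k)}$, Theorem~\ref{theory:step_wise_probability_bound} applied to each of the $K-1$ conditional factors, and a base case yielding $P(A_1)\ge 1-2^{-(N+1)}$. The only small slip is in your base-case reasoning: the $N+1$ bootstrap samples at step $0$ are drawn from $G_p$ with mean $\mu_p^{(0)}$, so $\mu_p^{(0)}$ (not $\mu_p^{(1)}$) is their median, and the paper closes the gap by noting $\mu_p^{(1)}\le\gamma\mu_p^{(0)}\le\mu_p^{(0)}$ (Assumption~\ref{theory:assumption_descending}) so that $P(\hat{\beta}^{(1)}\le\mu_p^{(1)})\le P(\hat{\beta}^{(1)}\le\mu_p^{(0)})=2^{-(N+1)}$.
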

    \vspace{-1.5mm}
        This probability bound increases monotonically with respect to the sample size \( N \). Furthermore, as \( N \to \infty \), the lower bound approaches 1, implying that our speculative generator can achieve higher reasoning quality by generating more samples during the drafting phase. A detailed discussion of this result is provided in Appendix \ref{appendix:discussion_probability_bound}.
\vspace{-5mm}
\section{Experiments}\label{experiments}
\vspace{-1.5mm}

\begin{table*}[t]
\caption{The results demonstrate that SpecSearch significantly accelerates LLM reasoning with comparable reasoning accuracy.}
\centering
\label{tab:main_evaluation}
\resizebox{0.98\textwidth}{!}{
\begin{tabular}{@{}cccccccccc@{}}
\toprule
\toprule
 & Dataset & \multicolumn{3}{c}{MATH-100} &  & \multicolumn{3}{c}{GSM8K-100} &  \\ \cmidrule(r){1-6}\cmidrule(l){7-10}
\multirow{2}{*}{Models} & \multirow{2}{*}{Methods} & \multirow{2}{*}{\begin{tabular}[c]{@{}c@{}}Reasoning\\      Accuracy (\%) $\uparrow$\end{tabular}} & \multirow{2}{*}{\begin{tabular}[c]{@{}c@{}}Average   Inference\\      Latency (s) $\downarrow$\end{tabular}} & \multirow{2}{*}{\begin{tabular}[c]{@{}c@{}}Speedup\\      (vs AR)$\uparrow$\end{tabular}} & \multirow{2}{*}{\begin{tabular}[c]{@{}c@{}}Speedup\\      (vs SpS)$\uparrow$\end{tabular}} & \multirow{2}{*}{\begin{tabular}[c]{@{}c@{}}Reasoning\\      Accuracy (\%)$\uparrow$\end{tabular}} & \multirow{2}{*}{\begin{tabular}[c]{@{}c@{}}Average   Inference\\      Latency (s)$\downarrow$\end{tabular}} & \multirow{2}{*}{\begin{tabular}[c]{@{}c@{}}Speedup\\      (vs AR)$\uparrow$\end{tabular}} & \multirow{2}{*}{\begin{tabular}[c]{@{}c@{}}Speedup\\      (vs SpS)$\uparrow$\end{tabular}} \\
 &  &  &  &  &  &  &  &  &  \\ \cmidrule(r){1-6}\cmidrule(l){7-10}
\multirow{3}{*}{Qwen} & AR & 87.00 & 275.78 & NA & 0.51 & 97.00 & 138.24 & NA & 0.50 \\
 & SpS & 88.00 & 141.55 & 1.95 & NA & 97.00 & 69.43 & 1.99 & NA \\
 & SpecSearch (Ours) & 87.00 & \textbf{82.35} & \textbf{3.35} & \textbf{1.72} & 96.00 & \textbf{48.18} & \textbf{2.87} & \textbf{1.44} \\ \midrule \midrule
 & \multirow{2}{*}{Methods} & \multirow{2}{*}{\begin{tabular}[c]{@{}c@{}}Reasoning\\      Accuracy (\%)$\uparrow$\end{tabular}} & \multirow{2}{*}{\begin{tabular}[c]{@{}c@{}}Average   Inference\\      Latency (s)$\downarrow$\end{tabular}} & \multirow{2}{*}{\begin{tabular}[c]{@{}c@{}}Speedup\\      (vs AR)$\uparrow$\end{tabular}} & \multirow{2}{*}{\begin{tabular}[c]{@{}c@{}}Speedup\\      (vs SpS)$\uparrow$\end{tabular}} & \multirow{2}{*}{\begin{tabular}[c]{@{}c@{}}Reasoning\\      Accuracy (\%)$\uparrow$\end{tabular}} & \multirow{2}{*}{\begin{tabular}[c]{@{}c@{}}Average   Inference\\      Latency (s)$\downarrow$\end{tabular}} & \multirow{2}{*}{\begin{tabular}[c]{@{}c@{}}Speedup\\      (vs AR)$\uparrow$\end{tabular}} & \multirow{2}{*}{\begin{tabular}[c]{@{}c@{}}Speedup\\      (vs SpS)$\uparrow$\end{tabular}} \\
 &  &  &  &  &  &  &  &  &  \\ \cmidrule(r){1-6}\cmidrule(l){7-10}
\multirow{3}{*}{Llama} & AR & 62.00 & 170.84 & NA & 0.79 & 87.00 & 90.04 & NA & 0.71 \\
 & SpS & 61.00 & 134.34 & 1.27 & NA & 86.00 & 64.29 & 1.40 & NA \\
 & SpecSearch (Ours) & 64.00 & \textbf{129.65} & \textbf{1.32} & \textbf{1.04} & 88.00 & \textbf{45.34} & \textbf{1.99} & \textbf{1.42} \\ \bottomrule
\end{tabular}
}
\end{table*}





Our experiments have four main parts. \textbf{Experiment 1.} We evaluate the performance of SpecSearch and the baselines on different datasets and LLMs. \textbf{Experiment 2.} We evaluate the generalization performance of SpecSearch across different search algorithms and thought evaluators. \textbf{Experiment 3.} We conduct carefully designed ablation studies to demonstrate the effectiveness of SpecSearch. \textbf{Experiment 4.} We perform a visualization analysis of SpecSearch to provide further insight into SpecSearch. 

\textbf{Experimental Setup}  
We use quantized Qwen2.5-72B-Instruct and Qwen2.5-7B-Instruct \cite{qwen2.5} as large and small models, respectively, along with quantized Llama3-70B-Instruct and Llama3-8B-Instruct \cite{llama3}. Unless stated otherwise, experiments follow OpenR~\cite{openr} settings: tree width of 6, tree depth of 50, MATH-psa as the process reward model (PRM), Qwen models as the main LLMs, and beam search as the main search algorithm. Throughout all experiments, we set the EMA weight $\theta$ in SpecSearch to 0.9



\textbf{Baselines} This study aims to accelerate thought generation in reasoning trees without modifying search algorithms or prompting techniques. Thus, we compare our method with two baselines: (1) AR, the original ToT method using autoregressive decoding with a large model, and (2) SpS, a state-of-the-art (SOTA) lossless speculative decoding approach. Details are in Appendix~\ref{baseline}.

\textbf{Datasets} We use two well-established mathematical problem datasets, GSM8K~\cite{Training_verifiers} and MATH~\cite{MATH}, to evaluate the acceleration performance of the proposed framework. GSM8K contains high-quality elementary mathematics word problems, while MATH comprises advanced high school competition-level math problems. Due to the long testing times of tree-search-based reasoning methods, we randomly select 100 samples from both the GSM8K and MATH datasets for evaluation.

\textbf{Evaluation Metrics.} We use two widely-used metrics, \emph{accuracy} and \emph{speedup}, to compare our method's performance with that of the baselines. We define the \emph{accuracy} by the percentage of correct predictions. We define the \emph{speedup} by the ratio of the baseline's latency to our approach's latency.

 \textbf{Experiment 1. Main Evaluation} We evaluate SpecSearch against two competitive baselines on two math datasets using the Qwen and Llama models. Table~\ref{tab:main_evaluation} highlights three key findings. 
 Moreover, we provide additional evaluation on four more distinct dataset categories, including the full GSM8K, AIME, Olympiad Bench, and a code-generation benchmark, in Appendix \ref{appendix:more_main_evaluation}.

(1) \textbf{High Speedup} SpecSearch consistently outperforms all baselines, achieving up to 1.72$\times$ speedup over SpS and 3.35$\times$ over AR on MATH-100 with Qwen. (2) \textbf{Broad Compatibility} Our method accelerates both Llama and Qwen models, demonstrating strong adaptability across LLMs.  (3) \textbf{Superior Reasoning Ability} On Llama, SpecSearch surpasses baselines in reasoning accuracy on MATH-100 and GSM8K-100, 
 highlighting the strong ability of our SpecSearch to effectively collaborate the small and large models to maintain reasoning quality.(4) \textbf{Accuracy Degradation Analysis} 
 We conduct a case study to explore the reasons behind the accuracy degradation of SpecSearch on GSM8K-100. The results in Appendix~\ref{case-study} show that the degradation primarily arises from certain misleading thoughts that deceive the PRM.
 
\begin{table*}[t]
\caption{The results demonstrate the Broad Compatibility of Our SpecSearch with different search algorithms and PRMs.}
\centering
\label{tab:broad_applicability}
\resizebox{0.98\textwidth}{!}{
\begin{tabular}{@{}ccccccccc@{}}
\toprule
\toprule
\textbf{Search   Algorithms} & \multicolumn{4}{c}{Beam Search} & \multicolumn{4}{c}{MCTS} \\ \midrule
\multirow{2}{*}{Methods} & \multirow{2}{*}{\begin{tabular}[c]{@{}c@{}}Reasoning\\      Accuracy (\%) $\uparrow$\end{tabular}} & \multirow{2}{*}{\begin{tabular}[c]{@{}c@{}}Average   Inference\\      Latency (s) $\downarrow$\end{tabular}} & \multirow{2}{*}{\begin{tabular}[c]{@{}c@{}}Speedup\\      (vs AR)$\uparrow$\end{tabular}} & \multirow{2}{*}{\begin{tabular}[c]{@{}c@{}}Speedup\\      (vs SpS)$\uparrow$\end{tabular}} & \multirow{2}{*}{\begin{tabular}[c]{@{}c@{}}Reasoning\\      Accuracy (\%) $\uparrow$\end{tabular}} & \multirow{2}{*}{\begin{tabular}[c]{@{}c@{}}Average   Inference\\      Latency (s) $\downarrow$\end{tabular}} & \multirow{2}{*}{\begin{tabular}[c]{@{}c@{}}Speedup\\      (vs AR) $\uparrow$\end{tabular}} & \multirow{2}{*}{\begin{tabular}[c]{@{}c@{}}Speedup\\      (vs SpS) $\uparrow$\end{tabular}} \\
 &  &  &  &  &  &  &  &  \\ \cmidrule(r){1-5} \cmidrule(l){6-9}
AR & 97.00 & 138.24 & NA & 0.50 & 98.00 & 256.17 & NA & 0.51 \\
SpS & 97.00 & 69.43 & 1.99 & NA & 98.00 & 129.74 & 1.97 & NA \\
SpecSearch (Ours) & 96.00 & \textbf{48.18} & \textbf{2.87} & \textbf{1.44} & 98.00 & \textbf{98.16} & \textbf{2.61} & \textbf{1.32} \\ \midrule\midrule
\textbf{PRMs} & \multicolumn{4}{c}{Math-Shepherd} & \multicolumn{4}{c}{Math-psa} \\ \midrule
\multirow{2}{*}{Methods} & \multirow{2}{*}{\begin{tabular}[c]{@{}c@{}}Reasoning\\      Accuracy (\%) $\uparrow$\end{tabular}} & \multirow{2}{*}{\begin{tabular}[c]{@{}c@{}}Average   Inference\\      Latency (s) $\downarrow$\end{tabular}} & \multirow{2}{*}{\begin{tabular}[c]{@{}c@{}}Speedup\\      (vs AR) $\uparrow$\end{tabular}} & \multirow{2}{*}{\begin{tabular}[c]{@{}c@{}}Speedup\\      (vs SpS) $\uparrow$\end{tabular}} & \multirow{2}{*}{\begin{tabular}[c]{@{}c@{}}Reasoning\\      Accuracy (\%) $\uparrow$\end{tabular}} & \multirow{2}{*}{\begin{tabular}[c]{@{}c@{}}Average   Inference\\      Latency (s) $\downarrow$\end{tabular}} & \multirow{2}{*}{\begin{tabular}[c]{@{}c@{}}Speedup\\      (vs AR) $\uparrow$\end{tabular}} & \multirow{2}{*}{\begin{tabular}[c]{@{}c@{}}Speedup\\      (vs SpS) $\uparrow$\end{tabular}} \\
 &  &  &  &  &  &  &  &  \\ \cmidrule(r){1-5} \cmidrule(l){6-9}
AR & 96.00 & 124.76 & NA & 0.51 & 97.00 & 138.24 & NA & 0.50 \\
SpS & 95.00 & 64.17 & 1.94 & NA & 97.00 & 69.43 & 1.99 & NA \\
SpecSearch (Ours) & 94.00 & \textbf{30.32} & \textbf{4.11} & \textbf{2.12} & 96.00 & \textbf{48.18} & \textbf{2.87} & \textbf{1.44} \\ \bottomrule
\end{tabular}
}
\end{table*}

\begin{table}[t]
\caption{The results demonstrate that each component within SpecSearch is significant for maintaining reasoning accuracy.}
\centering
\label{tab:ablation_component}
\resizebox{0.49\textwidth}{!}{
\begin{tabular}{@{}cccc@{}}
\toprule
\toprule
Dataset & \multicolumn{3}{c}{MATH-50} \\ \midrule
\multirow{2}{*}{Methods} & \multirow{2}{*}{\begin{tabular}[c]{@{}c@{}}Reasoning\\      Accuracy (\%) $\uparrow$\end{tabular}} & \multirow{2}{*}{\begin{tabular}[c]{@{}c@{}}Average   Inference\\      Latency (s)$\downarrow$\end{tabular}} & \multirow{2}{*}{\begin{tabular}[c]{@{}c@{}}Speedup\\      (vs AR)$\uparrow$\end{tabular}} \\
 &  &  &  \\ \midrule
AR & 88.00 & 256.05 & NA \\
SD & 90.00 & 132.68 & 1.93 \\
SpecSearch (Ours) & 88.00 & 70.63 & 3.63 \\ \midrule
 & \multicolumn{2}{c}{Evaluation Module} &  \\ \midrule
SpecSearch w LMV & 78.00 & 172.26 & 1.49 \\ \midrule
 & \multicolumn{2}{c}{Rejection   Module} &  \\ \midrule
SpecSearch w FT & 80.00 & 68.84 & 3.72 \\
SpecSearch w RR & 80.00 & 99.73 & 2.57 \\
SpecSearch w FLME & 84.00 & 105.25 & 2.43 \\ \bottomrule
\end{tabular}
}
\end{table}

\textbf{Experiment 2. Generalization} We evaluate SpecSearch’s generalization across different search algorithms and thought evaluators on GSM8K-100. Due to limited space, we defer results on MATH-100 to Appendix \ref{appendix:broad_compat}.

\textbf{Search Algorithms}  To demonstrate the broad applicability of SpecSearch, we apply it to two distinct search algorithms: beam search and MCTS. We compare SpecSearch against AR and SpS on both algorithms. As shown in Table~\ref{tab:broad_applicability}, SpecSearch significantly outperforms the baselines, reducing inference latency while maintaining comparable accuracy. These results highlight both its efficiency and its adaptability across different search algorithms.

\textbf{Different Thought Evaluators} To evaluate the generalization of SpecSearch across thought evaluators, we test it with two PRMs—Math-Shepherd \cite{math-shepherd} and MATH-psa \cite{openr}—on beam search. As shown in Table~\ref{tab:broad_applicability}, SpecSearch maintains nearly the same accuracy while significantly accelerating inference across different PRMs, achieving up to 2.12$\times$ speedup. This demonstrates its strong adaptability to various evaluators. 


\textbf{Experiment 3. Ablation Study} As the MATH dataset is harder than the GSM8K dataset, we perform an ablation study and sensitivity analysis on the MATH dataset. Specifically, we further randomly sample 50 problems from MATH-100, called MATH-50.


\textbf{Contribution of Each Component} To assess the effectiveness of each component, we conduct an ablation study. For the \textbf{Evaluation Module} in SpecSearch, we replace PRM evaluation with evaluation via the log probabilities of a large model. For the \textbf{Rejection Module}, we compare three variations: SpecSearch with Fixed Large Model Engagement (SpecSearch w/ FLME), SpecSearch with Fixed Threshold (SpecSearch w/ FT), and SpecSearch with Random Rejection (SpecSearch w/ RR). SpecSearch w/ FLME implements a simple collaboration strategy between large and small models. SpecSearch w/ FT replaces the step-wise threshold estimation with a fixed threshold in the rejection process. SpecSearch w/ RR randomly rejects 50\% of the small model’s generated thoughts. As shown in Table~\ref{tab:ablation_component}, our evaluation and rejection modules are both essential for preserving reasoning quality, suggesting that each component in our proposed SpecSearch are important for its significant performance improvement. 

\textbf{Sensitivity Analysis} (1) \textbf{The EMA Weight $\theta$.} We analyze the sensitivity of SpecSearch to the EMA weight $\theta$. Due to limited space, we defer results to Appendix \ref{appendix:sensitivity}. The results in Table \ref{tab:sensitivity_analysis} in Appendix \ref{appendix:sensitivity} show that SpecSearch achieves similar average performance across a wide range of $\theta$. (2) \textbf{Draft Model's Size} We have investigated SpecSearch's performance using multiple small draft models. The results in Table \ref{appendix_tab_sensitivity_draft_model_size} in Appendix \ref{appendix:sensitivity:draft_model_size} reveal that SpecSearch achieves speedups ranging from 2.18$\times$ to 2.87$\times$, underscoring its acceleration capabilities across diverse small-model settings.







\begin{figure}[t]
    \centering
    \begin{subfigure}{0.23\textwidth}
        \includegraphics[width=\textwidth]{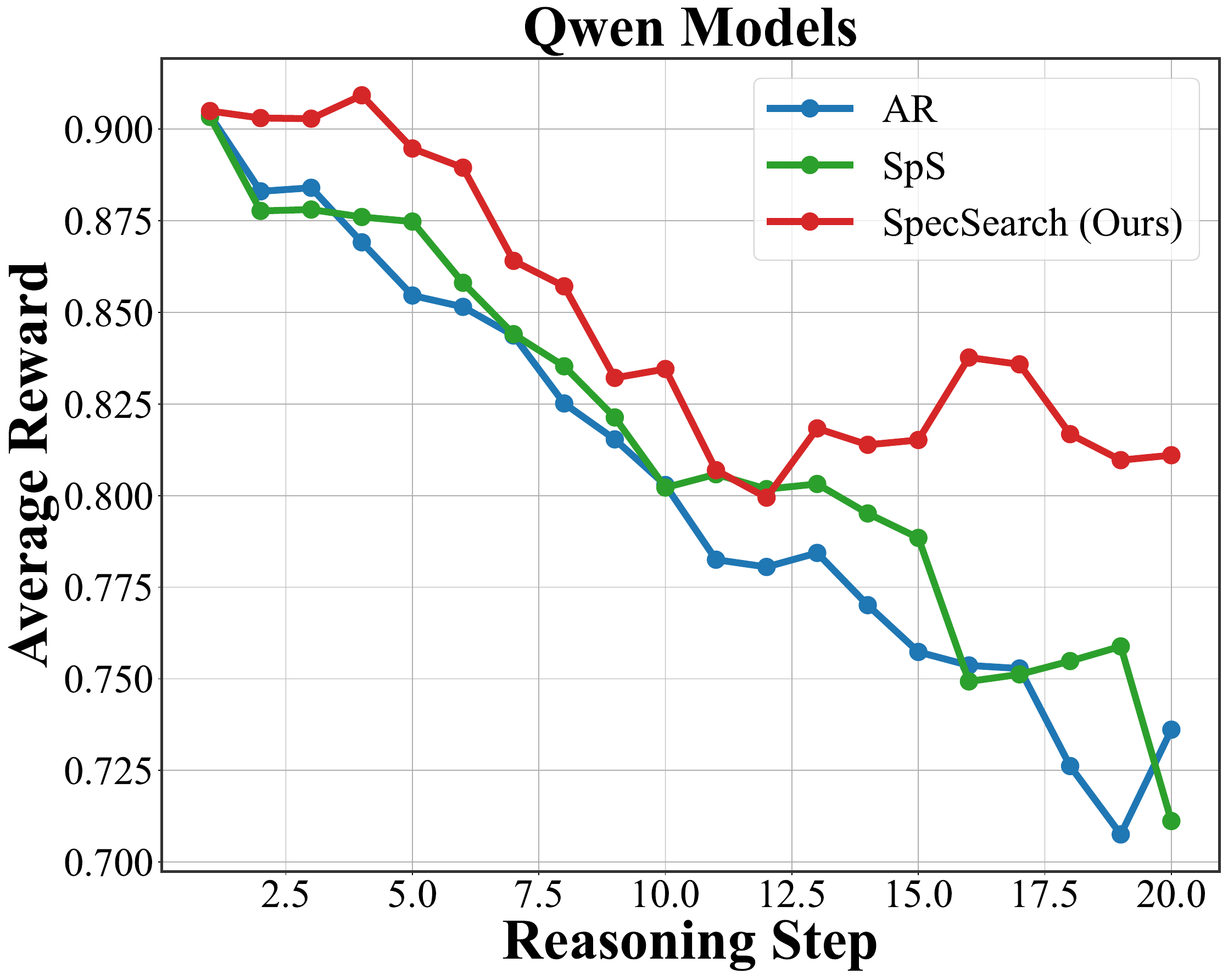}
        \vspace{-5.5mm}
        \caption{Qwen Models}
        \label{fig:qwen_reward_visu}
    \end{subfigure}
    \begin{subfigure}{0.23\textwidth}
        \includegraphics[width=\textwidth]{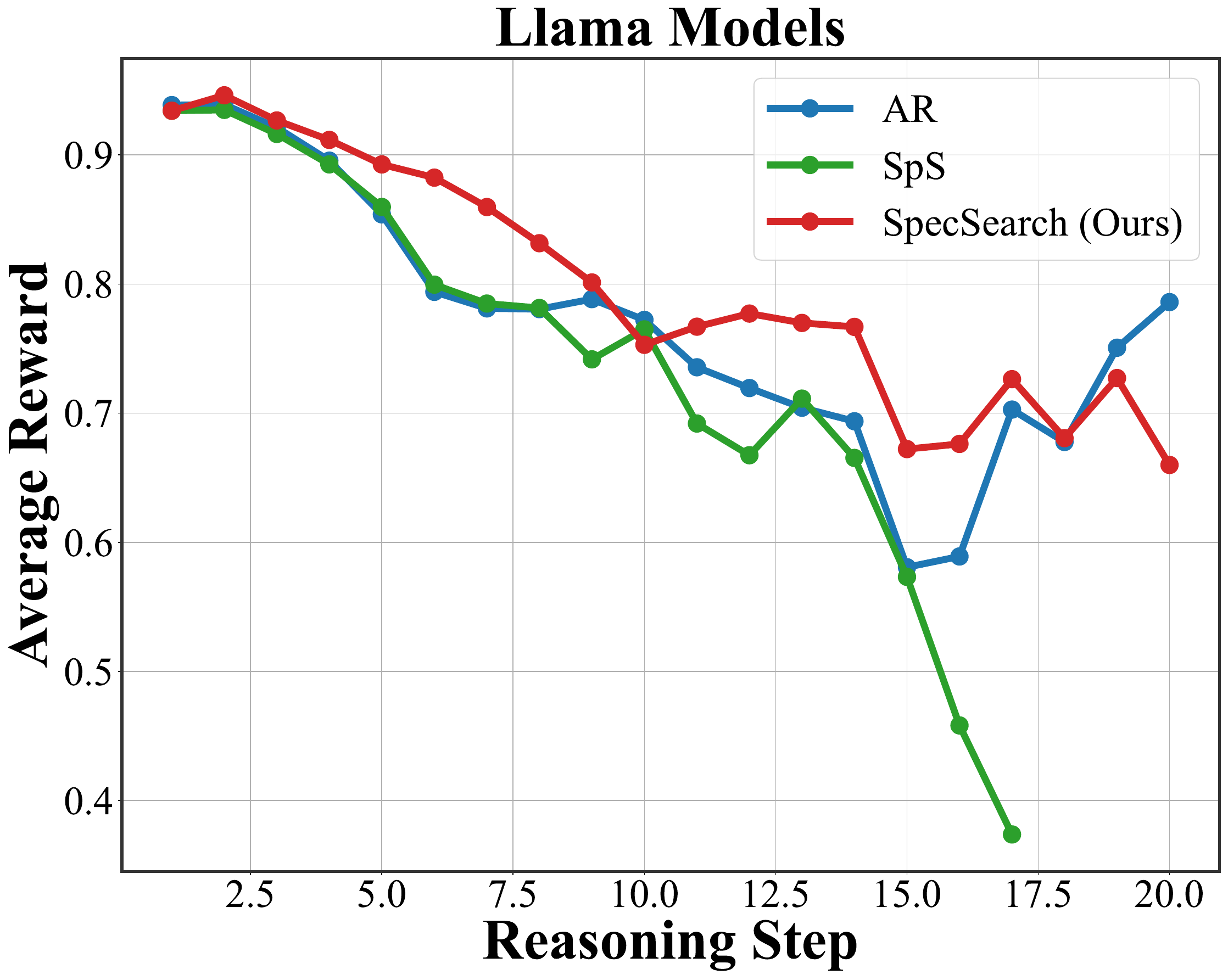}
        \vspace{-5.5mm}
        \caption{Llama Models}
        \label{fig:llama_reward_visu}
    \end{subfigure}
    \vspace{-4mm}
    \caption{To verify that our method preserves comparable reward scores for reasoning thoughts, we visualize the average reward scores at each reasoning step during the tree search process.}
    \vspace{-6mm}
    \label{fig:reward_visu}
\end{figure}

\textbf{Experiment 4. Visualization Analysis} To evaluate whether our SpecSearch can preserve comparable reward scores for reasoning thoughts, we visualize the average reward scores at each reasoning step during the tree search process for SpecSearch and the baselines on the MATH-100 dataset. As shown in Figure \ref{fig:reward_visu}, SpecSearch achieves reward scores comparable to those of the large model across all reasoning steps. This result highlights SpecSearch’s ability to significantly accelerate inference while maintaining comparable reasoning quality to the large model.

\vspace{-4mm}
\section{Conclusion}
\vspace{-2mm}
We propose Speculative Search (SpecSearch), a framework that accelerates reasoning by enabling a small model to generate speculative thoughts with a large model at both thought and token levels. With a quality-preserving rejection mechanism, SpecSearch theoretically maintains reasoning quality comparable to the large model. Experiments show up to 2.12$\times$ speedup while preserving high reasoning quality.

\section*{Acknowledgements}
    This work was supported in part by National Key R\&D Program of China under contract 2022ZD0119801,
    National Nature Science Foundations of China grants U23A20388, 62021001, U19B2026, and U19B2044. This work was supported in part by Huawei as well. 
    We would like to thank all the anonymous reviewers for their insightful comments.

\section*{Impact Statement}
This paper presents work whose goal is to advance the field of Machine Learning. There are many potential societal consequences  of our work, none of which we feel must be specifically highlighted here.


\nocite{langley00}

\bibliography{example_paper}

\begin{thebibliography}{42}
\providecommand{\natexlab}[1]{#1}
\providecommand{\url}[1]{\texttt{#1}}
\expandafter\ifx\csname urlstyle\endcsname\relax
  \providecommand{\doi}[1]{doi: #1}\else
  \providecommand{\doi}{doi: \begingroup \urlstyle{rm}\Url}\fi

\bibitem[Burkardt(2014)]{burkardt2014truncated}
Burkardt, J.
\newblock The truncated normal distribution.
\newblock \emph{Department of Scientific Computing Website, Florida State
  University}, 1\penalty0 (35):\penalty0 58, 2014.

\bibitem[Burton(1985)]{speculative_execution1}
Burton, F.~W.
\newblock Speculative computation, parallelism, and functional programming.
\newblock \emph{{IEEE} Trans. Computers}, 34\penalty0 (12):\penalty0
  1190--1193, 1985.
\newblock \doi{10.1109/TC.1985.6312218}.
\newblock URL \url{https://doi.org/10.1109/TC.1985.6312218}.

\bibitem[Chen et~al.(2023{\natexlab{a}})Chen, Borgeaud, Irving, Lespiau, Sifre,
  and Jumper]{direct_SD2}
Chen, C., Borgeaud, S., Irving, G., Lespiau, J., Sifre, L., and Jumper, J.
\newblock Accelerating large language model decoding with speculative sampling.
\newblock \emph{CoRR}, abs/2302.01318, 2023{\natexlab{a}}.
\newblock \doi{10.48550/ARXIV.2302.01318}.
\newblock URL \url{https://doi.org/10.48550/arXiv.2302.01318}.

\bibitem[Chen et~al.(2024)Chen, Liao, Li, and Fan]{alphamath}
Chen, G., Liao, M., Li, C., and Fan, K.
\newblock Alphamath almost zero: process supervision without process.
\newblock \emph{CoRR}, abs/2405.03553, 2024.
\newblock \doi{10.48550/ARXIV.2405.03553}.
\newblock URL \url{https://doi.org/10.48550/arXiv.2405.03553}.

\bibitem[Chen et~al.(2023{\natexlab{b}})Chen, Yang, Lin, Sun, Huang, and
  Chang]{cascade_speculative}
Chen, Z., Yang, X., Lin, J., Sun, C., Huang, J., and Chang, K.~C.
\newblock Cascade speculative drafting for even faster {LLM} inference.
\newblock \emph{CoRR}, abs/2312.11462, 2023{\natexlab{b}}.
\newblock \doi{10.48550/ARXIV.2312.11462}.
\newblock URL \url{https://doi.org/10.48550/arXiv.2312.11462}.

\bibitem[Cobbe et~al.(2021)Cobbe, Kosaraju, Bavarian, Chen, Jun, Kaiser,
  Plappert, Tworek, Hilton, Nakano, Hesse, and Schulman]{Training_verifiers}
Cobbe, K., Kosaraju, V., Bavarian, M., Chen, M., Jun, H., Kaiser, L., Plappert,
  M., Tworek, J., Hilton, J., Nakano, R., Hesse, C., and Schulman, J.
\newblock Training verifiers to solve math word problems.
\newblock \emph{CoRR}, abs/2110.14168, 2021.
\newblock URL \url{https://arxiv.org/abs/2110.14168}.

\bibitem[Dekking et~al.(2006)Dekking, Kraaikamp, Lopuha{\"a}, and
  Meester]{dekking2006modern}
Dekking, F.~M., Kraaikamp, C., Lopuha{\"a}, H.~P., and Meester, L.~E.
\newblock \emph{A Modern Introduction to Probability and Statistics:
  Understanding why and how}.
\newblock Springer Science \& Business Media, 2006.

\bibitem[Dubey et~al.(2024)Dubey, Jauhri, Pandey, Kadian, Al{-}Dahle, Letman,
  Mathur, Schelten, Yang, Fan, Goyal, Hartshorn, Yang, Mitra, Sravankumar,
  Korenev, Hinsvark, Rao, Zhang, Rodriguez, Gregerson, Spataru, Rozi{\`{e}}re,
  Biron, Tang, Chern, Caucheteux, Nayak, Bi, Marra, McConnell, Keller, Touret,
  Wu, Wong, Ferrer, Nikolaidis, Allonsius, Song, Pintz, Livshits, Esiobu,
  Choudhary, Mahajan, Garcia{-}Olano, Perino, Hupkes, Lakomkin, AlBadawy,
  Lobanova, Dinan, Smith, Radenovic, Zhang, Synnaeve, Lee, Anderson, Nail,
  Mialon, Pang, Cucurell, Nguyen, Korevaar, Xu, Touvron, Zarov, Ibarra,
  Kloumann, Misra, Evtimov, Copet, Lee, Geffert, Vranes, Park, Mahadeokar,
  Shah, van~der Linde, Billock, Hong, Lee, Fu, Chi, Huang, Liu, Wang, Yu,
  Bitton, Spisak, Park, Rocca, Johnstun, Saxe, Jia, Alwala, Upasani, Plawiak,
  Li, Heafield, Stone, and et~al.]{llama3}
Dubey, A., Jauhri, A., Pandey, A., Kadian, A., Al{-}Dahle, A., Letman, A.,
  Mathur, A., Schelten, A., Yang, A., Fan, A., Goyal, A., Hartshorn, A., Yang,
  A., Mitra, A., Sravankumar, A., Korenev, A., Hinsvark, A., Rao, A., Zhang,
  A., Rodriguez, A., Gregerson, A., Spataru, A., Rozi{\`{e}}re, B., Biron, B.,
  Tang, B., Chern, B., Caucheteux, C., Nayak, C., Bi, C., Marra, C., McConnell,
  C., Keller, C., Touret, C., Wu, C., Wong, C., Ferrer, C.~C., Nikolaidis, C.,
  Allonsius, D., Song, D., Pintz, D., Livshits, D., Esiobu, D., Choudhary, D.,
  Mahajan, D., Garcia{-}Olano, D., Perino, D., Hupkes, D., Lakomkin, E.,
  AlBadawy, E., Lobanova, E., Dinan, E., Smith, E.~M., Radenovic, F., Zhang,
  F., Synnaeve, G., Lee, G., Anderson, G.~L., Nail, G., Mialon, G., Pang, G.,
  Cucurell, G., Nguyen, H., Korevaar, H., Xu, H., Touvron, H., Zarov, I.,
  Ibarra, I.~A., Kloumann, I.~M., Misra, I., Evtimov, I., Copet, J., Lee, J.,
  Geffert, J., Vranes, J., Park, J., Mahadeokar, J., Shah, J., van~der Linde,
  J., Billock, J., Hong, J., Lee, J., Fu, J., Chi, J., Huang, J., Liu, J.,
  Wang, J., Yu, J., Bitton, J., Spisak, J., Park, J., Rocca, J., Johnstun, J.,
  Saxe, J., Jia, J., Alwala, K.~V., Upasani, K., Plawiak, K., Li, K., Heafield,
  K., Stone, K., and et~al.
\newblock The llama 3 herd of models.
\newblock \emph{CoRR}, abs/2407.21783, 2024.
\newblock \doi{10.48550/ARXIV.2407.21783}.
\newblock URL \url{https://doi.org/10.48550/arXiv.2407.21783}.

\bibitem[Gao et~al.(2024)Gao, Niu, He, Xu, Liu, Liu, Hu, and Wen]{SC-MCTS}
Gao, Z., Niu, B., He, X., Xu, H., Liu, H., Liu, A., Hu, X., and Wen, L.
\newblock Interpretable contrastive monte carlo tree search reasoning.
\newblock \emph{CoRR}, abs/2410.01707, 2024.
\newblock \doi{10.48550/ARXIV.2410.01707}.
\newblock URL \url{https://doi.org/10.48550/arXiv.2410.01707}.

\bibitem[Gopinath(1998)]{gopinath1998maximum}
Gopinath, R.~A.
\newblock Maximum likelihood modeling with gaussian distributions for
  classification.
\newblock In \emph{Proceedings of the 1998 IEEE International Conference on
  Acoustics, Speech and Signal Processing, ICASSP'98 (Cat. No. 98CH36181)},
  volume~2, pp.\  661--664. IEEE, 1998.

\bibitem[Hao et~al.(2023)Hao, Gu, Ma, Hong, Wang, Wang, and Hu]{RAP}
Hao, S., Gu, Y., Ma, H., Hong, J.~J., Wang, Z., Wang, D.~Z., and Hu, Z.
\newblock Reasoning with language model is planning with world model.
\newblock In Bouamor, H., Pino, J., and Bali, K. (eds.), \emph{Proceedings of
  the 2023 Conference on Empirical Methods in Natural Language Processing,
  {EMNLP} 2023, Singapore, December 6-10, 2023}, pp.\  8154--8173. Association
  for Computational Linguistics, 2023.
\newblock \doi{10.18653/V1/2023.EMNLP-MAIN.507}.
\newblock URL \url{https://doi.org/10.18653/v1/2023.emnlp-main.507}.

\bibitem[Hendrycks et~al.(2021)Hendrycks, Burns, Kadavath, Arora, Basart, Tang,
  Song, and Steinhardt]{MATH}
Hendrycks, D., Burns, C., Kadavath, S., Arora, A., Basart, S., Tang, E., Song,
  D., and Steinhardt, J.
\newblock Measuring mathematical problem solving with the {MATH} dataset.
\newblock In Vanschoren, J. and Yeung, S. (eds.), \emph{Proceedings of the
  Neural Information Processing Systems Track on Datasets and Benchmarks 1,
  NeurIPS Datasets and Benchmarks 2021, December 2021, virtual}, 2021.

\bibitem[Hennessy \& Patterson(2012)Hennessy and
  Patterson]{speculative_execution2}
Hennessy, J.~L. and Patterson, D.~A.
\newblock \emph{Computer Architecture - {A} Quantitative Approach, 5th
  Edition}.
\newblock Morgan Kaufmann, 2012.
\newblock ISBN 978-0-12-383872-8.

\bibitem[Hui et~al.(2024)Hui, Jiang, Wang, and Tu]{RoT}
Hui, W., Jiang, C., Wang, Y., and Tu, K.
\newblock Rot: Enhancing large language models with reflection on search trees.
\newblock \emph{CoRR}, abs/2404.05449, 2024.
\newblock \doi{10.48550/ARXIV.2404.05449}.
\newblock URL \url{https://doi.org/10.48550/arXiv.2404.05449}.

\bibitem[Jiang et~al.(2024)Jiang, Chen, Min, Chen, Cheng, Wang, Tang, Sun,
  Deng, Zhao, Liu, Yan, Xie, Wang, and Wen]{LLMTreeSearch}
Jiang, J., Chen, Z., Min, Y., Chen, J., Cheng, X., Wang, J., Tang, Y., Sun, H.,
  Deng, J., Zhao, W.~X., Liu, Z., Yan, D., Xie, J., Wang, Z., and Wen, J.
\newblock Technical report: Enhancing {LLM} reasoning with reward-guided tree
  search.
\newblock \emph{CoRR}, abs/2411.11694, 2024.
\newblock \doi{10.48550/ARXIV.2411.11694}.
\newblock URL \url{https://doi.org/10.48550/arXiv.2411.11694}.

\bibitem[Kang et~al.(2024)Kang, Li, Chen, Kazemi, and Chen]{llm_beam_search}
Kang, J., Li, X.~Z., Chen, X., Kazemi, A., and Chen, B.
\newblock Mindstar: Enhancing math reasoning in pre-trained llms at inference
  time.
\newblock \emph{CoRR}, abs/2405.16265, 2024.
\newblock \doi{10.48550/ARXIV.2405.16265}.
\newblock URL \url{https://doi.org/10.48550/arXiv.2405.16265}.

\bibitem[Klinker(2011)]{klinker2011exponential}
Klinker, F.
\newblock Exponential moving average versus moving exponential average.
\newblock \emph{Mathematische Semesterberichte}, 58:\penalty0 97--107, 2011.

\bibitem[Kou et~al.(2024)Kou, Hu, He, Deng, and Zhang]{Cllms}
Kou, S., Hu, L., He, Z., Deng, Z., and Zhang, H.
\newblock Cllms: Consistency large language models.
\newblock In \emph{Forty-first International Conference on Machine Learning,
  {ICML} 2024, Vienna, Austria, July 21-27, 2024}. OpenReview.net, 2024.
\newblock URL \url{https://openreview.net/forum?id=8uzBOVmh8H}.

\bibitem[Kwon et~al.(2023)Kwon, Li, Zhuang, Sheng, Zheng, Yu, Gonzalez, Zhang,
  and Stoica]{vllm}
Kwon, W., Li, Z., Zhuang, S., Sheng, Y., Zheng, L., Yu, C.~H., Gonzalez, J.,
  Zhang, H., and Stoica, I.
\newblock Efficient memory management for large language model serving with
  pagedattention.
\newblock In Flinn, J., Seltzer, M.~I., Druschel, P., Kaufmann, A., and Mace,
  J. (eds.), \emph{Proceedings of the 29th Symposium on Operating Systems
  Principles, {SOSP} 2023, Koblenz, Germany, October 23-26, 2023}, pp.\
  611--626. {ACM}, 2023.
\newblock \doi{10.1145/3600006.3613165}.
\newblock URL \url{https://doi.org/10.1145/3600006.3613165}.

\bibitem[Langley(2000)]{langley00}
Langley, P.
\newblock Crafting papers on machine learning.
\newblock In Langley, P. (ed.), \emph{Proceedings of the 17th International
  Conference on Machine Learning (ICML 2000)}, pp.\  1207--1216, Stanford, CA,
  2000. Morgan Kaufmann.

\bibitem[Leviathan et~al.(2023)Leviathan, Kalman, and Matias]{direct_SD1}
Leviathan, Y., Kalman, M., and Matias, Y.
\newblock Fast inference from transformers via speculative decoding.
\newblock In Krause, A., Brunskill, E., Cho, K., Engelhardt, B., Sabato, S.,
  and Scarlett, J. (eds.), \emph{International Conference on Machine Learning,
  {ICML} 2023, 23-29 July 2023, Honolulu, Hawaii, {USA}}, volume 202 of
  \emph{Proceedings of Machine Learning Research}, pp.\  19274--19286. {PMLR},
  2023.
\newblock URL \url{https://proceedings.mlr.press/v202/leviathan23a.html}.

\bibitem[Li et~al.(2024)Li, Wei, Zhang, and Zhang]{Eagle}
Li, Y., Wei, F., Zhang, C., and Zhang, H.
\newblock {EAGLE:} speculative sampling requires rethinking feature
  uncertainty.
\newblock In \emph{Forty-first International Conference on Machine Learning,
  {ICML} 2024, Vienna, Austria, July 21-27, 2024}. OpenReview.net, 2024.
\newblock URL \url{https://openreview.net/forum?id=1NdN7eXyb4}.

\bibitem[Lu et~al.(2024)Lu, Yang, Wang, Liu, Namee, and Huang]{PaDeLLM-NER}
Lu, J., Yang, Z., Wang, Y., Liu, X., Namee, B.~M., and Huang, C.
\newblock Padellm-ner: Parallel decoding in large language models for named
  entity recognition.
\newblock \emph{CoRR}, abs/2402.04838, 2024.
\newblock \doi{10.48550/ARXIV.2402.04838}.
\newblock URL \url{https://doi.org/10.48550/arXiv.2402.04838}.

\bibitem[Qiu et~al.(2024)Qiu, Lu, Zeng, Guo, Geng, Wang, Huang, Wu, and
  Wang]{treebon}
Qiu, J., Lu, Y., Zeng, Y., Guo, J., Geng, J., Wang, H., Huang, K., Wu, Y., and
  Wang, M.
\newblock Treebon: Enhancing inference-time alignment with speculative
  tree-search and best-of-n sampling.
\newblock \emph{arXiv preprint arXiv:2410.16033}, 2024.

\bibitem[Team(2024)]{qwen2.5}
Team, Q.
\newblock Qwen2.5: A party of foundation models, September 2024.
\newblock URL \url{https://qwenlm.github.io/blog/qwen2.5/}.

\bibitem[Wan et~al.(2024{\natexlab{a}})Wan, Feng, Wen, McAleer, Wen, Zhang, and
  Wang]{AlphaZeroLLM}
Wan, Z., Feng, X., Wen, M., McAleer, S.~M., Wen, Y., Zhang, W., and Wang, J.
\newblock Alphazero-like tree-search can guide large language model decoding
  and training.
\newblock In \emph{Forty-first International Conference on Machine Learning,
  {ICML} 2024, Vienna, Austria, July 21-27, 2024}. OpenReview.net,
  2024{\natexlab{a}}.
\newblock URL \url{https://openreview.net/forum?id=C4OpREezgj}.

\bibitem[Wan et~al.(2024{\natexlab{b}})Wan, Wang, Liu, Alam, Zheng, Liu, Qu,
  Yan, Zhu, Zhang, Chowdhury, and Zhang]{wan2024efficient}
Wan, Z., Wang, X., Liu, C., Alam, S., Zheng, Y., Liu, J., Qu, Z., Yan, S., Zhu,
  Y., Zhang, Q., Chowdhury, M., and Zhang, M.
\newblock Efficient large language models: A survey.
\newblock \emph{Transactions on Machine Learning Research}, 2024{\natexlab{b}}.
\newblock ISSN 2835-8856.
\newblock URL \url{https://openreview.net/forum?id=bsCCJHbO8A}.
\newblock Survey Certification.

\bibitem[Wang et~al.(2024{\natexlab{a}})Wang, Fang, Wan, Wen, Zhu, Liu, Gong,
  Song, Chen, Ni, Yang, Wen, and Zhang]{openr}
Wang, J., Fang, M., Wan, Z., Wen, M., Zhu, J., Liu, A., Gong, Z., Song, Y.,
  Chen, L., Ni, L.~M., Yang, L., Wen, Y., and Zhang, W.
\newblock Openr: An open source framework for advanced reasoning with large
  language models.
\newblock \emph{CoRR}, abs/2410.09671, 2024{\natexlab{a}}.
\newblock \doi{10.48550/ARXIV.2410.09671}.
\newblock URL \url{https://doi.org/10.48550/arXiv.2410.09671}.

\bibitem[Wang et~al.(2024{\natexlab{b}})Wang, Li, Shao, Xu, Dai, Li, Chen, Wu,
  and Sui]{math-shepherd}
Wang, P., Li, L., Shao, Z., Xu, R., Dai, D., Li, Y., Chen, D., Wu, Y., and Sui,
  Z.
\newblock Math-shepherd: Verify and reinforce llms step-by-step without human
  annotations.
\newblock In Ku, L., Martins, A., and Srikumar, V. (eds.), \emph{Proceedings of
  the 62nd Annual Meeting of the Association for Computational Linguistics
  (Volume 1: Long Papers), {ACL} 2024, Bangkok, Thailand, August 11-16, 2024},
  pp.\  9426--9439. Association for Computational Linguistics,
  2024{\natexlab{b}}.
\newblock \doi{10.18653/V1/2024.ACL-LONG.510}.
\newblock URL \url{https://doi.org/10.18653/v1/2024.acl-long.510}.

\bibitem[Wang et~al.(2024{\natexlab{c}})Wang, Wu, Lai, Zhang, and Zhou]{SEED}
Wang, Z., Wu, J., Lai, Y., Zhang, C., and Zhou, D.
\newblock {SEED:} accelerating reasoning tree construction via scheduled
  speculative decoding.
\newblock \emph{CoRR}, abs/2406.18200, 2024{\natexlab{c}}.
\newblock \doi{10.48550/ARXIV.2406.18200}.
\newblock URL \url{https://doi.org/10.48550/arXiv.2406.18200}.

\bibitem[Wei et~al.(2022)Wei, Wang, Schuurmans, Bosma, Ichter, Xia, Chi, Le,
  and Zhou]{CoT}
Wei, J., Wang, X., Schuurmans, D., Bosma, M., Ichter, B., Xia, F., Chi, E.~H.,
  Le, Q.~V., and Zhou, D.
\newblock Chain-of-thought prompting elicits reasoning in large language
  models.
\newblock In Koyejo, S., Mohamed, S., Agarwal, A., Belgrave, D., Cho, K., and
  Oh, A. (eds.), \emph{Advances in Neural Information Processing Systems 35:
  Annual Conference on Neural Information Processing Systems 2022, NeurIPS
  2022, New Orleans, LA, USA, November 28 - December 9, 2022}, 2022.

\bibitem[{Wikipedia contributors}(2024)]{enwiki:1244860887}
{Wikipedia contributors}.
\newblock Cantelli's inequality --- {Wikipedia}{,} the free encyclopedia, 2024.
\newblock URL
  \url{https://en.wikipedia.org/w/index.php?title=Cantelli%27s_inequality&oldid=1244860887}.
\newblock [Online; accessed 21-January-2025].

\bibitem[Wu et~al.(2024)Wu, Sun, Li, Welleck, and Yang]{rebase}
Wu, Y., Sun, Z., Li, S., Welleck, S., and Yang, Y.
\newblock Scaling inference computation: Compute-optimal inference for
  problem-solving with language models.
\newblock In \emph{The 4th Workshop on Mathematical Reasoning and AI at
  NeurIPS'24}, 2024.
\newblock URL \url{https://openreview.net/forum?id=j7DZWSc8qu}.

\bibitem[Xia et~al.(2024)Xia, Yang, Dong, Wang, Li, Ge, Liu, Li, and
  Sui]{SD_survey}
Xia, H., Yang, Z., Dong, Q., Wang, P., Li, Y., Ge, T., Liu, T., Li, W., and
  Sui, Z.
\newblock Unlocking efficiency in large language model inference: {A}
  comprehensive survey of speculative decoding.
\newblock In Ku, L., Martins, A., and Srikumar, V. (eds.), \emph{Findings of
  the Association for Computational Linguistics, {ACL} 2024, Bangkok, Thailand
  and virtual meeting, August 11-16, 2024}, pp.\  7655--7671. Association for
  Computational Linguistics, 2024.
\newblock \doi{10.18653/V1/2024.FINDINGS-ACL.456}.
\newblock URL \url{https://doi.org/10.18653/v1/2024.findings-acl.456}.

\bibitem[Xie et~al.(2023)Xie, Kawaguchi, Zhao, Zhao, Kan, He, and
  Xie]{self-evaluation_decoding}
Xie, Y., Kawaguchi, K., Zhao, Y., Zhao, X., Kan, M., He, J., and Xie, Q.
\newblock Decomposition enhances reasoning via self-evaluation guided decoding.
\newblock \emph{CoRR}, abs/2305.00633, 2023.
\newblock \doi{10.48550/ARXIV.2305.00633}.
\newblock URL \url{https://doi.org/10.48550/arXiv.2305.00633}.

\bibitem[Yang et~al.(2024)Yang, Huang, Dai, and Chen]{Multi-Candidate_SD}
Yang, S., Huang, S., Dai, X., and Chen, J.
\newblock Multi-candidate speculative decoding.
\newblock \emph{CoRR}, abs/2401.06706, 2024.
\newblock \doi{10.48550/ARXIV.2401.06706}.
\newblock URL \url{https://doi.org/10.48550/arXiv.2401.06706}.

\bibitem[Yao et~al.(2023)Yao, Yu, Zhao, Shafran, Griffiths, Cao, and
  Narasimhan]{ToT}
Yao, S., Yu, D., Zhao, J., Shafran, I., Griffiths, T., Cao, Y., and Narasimhan,
  K.
\newblock Tree of thoughts: Deliberate problem solving with large language
  models.
\newblock In Oh, A., Naumann, T., Globerson, A., Saenko, K., Hardt, M., and
  Levine, S. (eds.), \emph{Advances in Neural Information Processing Systems
  36: Annual Conference on Neural Information Processing Systems 2023, NeurIPS
  2023, New Orleans, LA, USA, December 10 - 16, 2023}, 2023.

\bibitem[Zhang et~al.(2024{\natexlab{a}})Zhang, Liu, and Song]{sd_survey2}
Zhang, C., Liu, Z., and Song, D.
\newblock Beyond the speculative game: {A} survey of speculative execution in
  large language models.
\newblock \emph{CoRR}, abs/2404.14897, 2024{\natexlab{a}}.
\newblock \doi{10.48550/ARXIV.2404.14897}.
\newblock URL \url{https://doi.org/10.48550/arXiv.2404.14897}.

\bibitem[Zhang et~al.(2024{\natexlab{b}})Zhang, Zhoubian, Yue, Dong, and
  Tang]{ReST-MCTS}
Zhang, D., Zhoubian, S., Yue, Y., Dong, Y., and Tang, J.
\newblock Rest-mcts*: {LLM} self-training via process reward guided tree
  search.
\newblock \emph{CoRR}, abs/2406.03816, 2024{\natexlab{b}}.
\newblock \doi{10.48550/ARXIV.2406.03816}.
\newblock URL \url{https://doi.org/10.48550/arXiv.2406.03816}.

\bibitem[Zhang(2010)]{zhang2010gaussian}
Zhang, X.
\newblock Gaussian distribution., 2010.

\bibitem[Zhong \& Bharadwaj(2024)Zhong and Bharadwaj]{S3d}
Zhong, W. and Bharadwaj, M.
\newblock {S3D:} {A} simple and cost-effective self-speculative decoding scheme
  for low-memory gpus.
\newblock \emph{CoRR}, abs/2405.20314, 2024.
\newblock \doi{10.48550/ARXIV.2405.20314}.
\newblock URL \url{https://doi.org/10.48550/arXiv.2405.20314}.

\bibitem[Zhou et~al.(2024)Zhou, Ning, Hong, Fu, Xu, Li, Lou, Wang, Yuan, Li,
  Yan, Dai, Zhang, Dong, and Wang]{effiner_survey}
Zhou, Z., Ning, X., Hong, K., Fu, T., Xu, J., Li, S., Lou, Y., Wang, L., Yuan,
  Z., Li, X., Yan, S., Dai, G., Zhang, X., Dong, Y., and Wang, Y.
\newblock A survey on efficient inference for large language models.
\newblock \emph{CoRR}, abs/2404.14294, 2024.
\newblock \doi{10.48550/ARXIV.2404.14294}.
\newblock URL \url{https://doi.org/10.48550/arXiv.2404.14294}.

\end{thebibliography}
\bibliographystyle{icml2025}

\newpage
\appendix
\onecolumn



\section{Theoretical Analysis}
\label{appendix:theory}
In this section, we provide proof of the theorems in the main paper along with further discussions.

To facilitate analytical clarity, our analysis is confined to the case that the sample mean serves as the nonparametric estimation method, i.e.
\begin{align}
\label{theory:mean_estimation}
        \hat{\beta}^{(k+1)} = \theta\hat{\beta}^{(k)} + \frac{1-\theta}{M+1} \sum_{i=1}^{M+1} V^{(k)}_i,
        \end{align}
where we assume that the large model generates one more thought to avoid the case where $M=0$. We present further discussion about those settings in Appendix \ref{appendix:additional_generated_G_p} and Appendix \ref{appendix:maximum_estimation}.

\subsection{Proof for Theorem \ref{theory:lossless_condition}}

\begin{lemma}
    \label{lemma:normal_distribution_cdf}
    Let $\varphi (x)$ and $\Phi(x)$ denote the probability density function (PDF) and cumulative distribution function (CDF) of the standard normal, respectively. Then for any $x\in\mathbb{R}$, we have
    \begin{align}
        \varphi(x) - x(1-\Phi(x)) > 0.
    \end{align}
distribution 
\end{lemma}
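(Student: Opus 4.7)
The plan is to split on the sign of $x$ and reduce the inequality to the standard Mills-ratio bound for the Gaussian tail. Denote $f(x) = \varphi(x) - x(1-\Phi(x))$. The easy case is $x \le 0$: there $1-\Phi(x) \in [1/2,1)$ and $-x \ge 0$, so $-x(1-\Phi(x)) \ge 0$, and adding the strictly positive $\varphi(x)$ yields $f(x) > 0$ immediately.

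The substantive case is $x > 0$, for which I would use the identity $t\varphi(t) = -\varphi'(t)$ (this follows directly from $\varphi(t) = \tfrac{1}{\sqrt{2\pi}} e^{-t^2/2}$). For any $t \ge x > 0$ we have $1 \le t/x$, hence $\varphi(t) \le (t/x)\varphi(t)$, with strict inequality on a set of positive measure. Integrating from $x$ to $\infty$ gives
\begin{equation}
1 - \Phi(x) \;=\; \int_x^\infty \varphi(t)\,dt \;<\; \frac{1}{x}\int_x^\infty t\varphi(t)\,dt \;=\; \frac{1}{x}\bigl[-\varphi(t)\bigr]_x^\infty \;=\; \frac{\varphi(x)}{x}.
\end{equation}
Multiplying by $x > 0$ yields $x(1-\Phi(x)) < \varphi(x)$, which is exactly $f(x) > 0$.

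As a sanity check and alternative route, one could instead differentiate: using $\varphi'(x) = -x\varphi(x)$, a quick computation gives $f'(x) = -(1-\Phi(x)) < 0$, so $f$ is strictly decreasing, and the Mills bound above also implies $f(x) \to 0$ as $x \to \infty$; monotonicity then forces $f(x) > 0$ everywhere. I would present the first route since it is more self-contained and avoids invoking a limit argument.

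I do not expect any real obstacle here; the only mildly subtle point is justifying strictness of the inequality on the tail integral (which is immediate because $t/x > 1$ on $(x,\infty)$) and remembering to handle $x = 0$ as part of the $x \le 0$ case, where $f(0) = \varphi(0) = 1/\sqrt{2\pi} > 0$.
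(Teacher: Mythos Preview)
Your proof is correct and follows essentially the same route as the paper: both establish the Mills-ratio bound $1-\Phi(x) < \varphi(x)/x$ for $x>0$ via the identity $t\varphi(t) = -\varphi'(t)$, the only cosmetic difference being that you use a direct pointwise comparison $\varphi(t) \le (t/x)\varphi(t)$ while the paper uses integration by parts to reach the same inequality. Your explicit treatment of the case $x \le 0$ is a welcome addition, since the paper's argument (dividing by $t$ on $[x,\infty)$ and then multiplying through by $x$) tacitly assumes $x>0$.
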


\begin{proof}
    Notice that
    \begin{align}
        1 - \Phi(x)
        &= \int_{x}^{\infty}\varphi(t)dt \nonumber\\
        &= \int_{x}^{\infty}\frac{1}{t}\cdot t\varphi(t)dt \nonumber\\
        &= -\int_{x}^{\infty}\frac{1}{t}\cdot d\varphi(t) \nonumber\\
        &= -\frac{1}{t}\varphi(t)\bigg|^\infty_{t=x} + \int_{x}^{\infty}\varphi(t) d\left(\frac{1}{t}\right)  \nonumber\\
        &= \frac{1}{x}\varphi(x) - \int_{x}^{\infty}\varphi(t) \frac{1}{t^2}dt \nonumber\\
        &< \frac{1}{x}\varphi(x).
    \end{align}
    Then we have
    \begin{align}
        \varphi(x) - x(1-\Phi(x)) > 0.
    \end{align}
\end{proof}

Then we prove Theorem \ref{theory:lossless_condition} as follows.

\begin{proof}
At step $k$, 
let the qualities of $N$ thoughts obtained from the small model $G_q$ at the $k$-th step be denoted as $\hat{V}^{(k)}_{i},\;i=1,2,\dots,N$, and the threshold of the generator as $\beta^{(k)}$.
Among those, $U$ thoughts with qualities $\hat{V}^{(k)}_{i_l},l=1,2,\dots,U$ are retained with $\hat{V}^{(k)}_{i_l}\ge \beta$.
The rest of $M=N-U$ thoughts are refined by large model $G_p$ with qualities $V^{(k)}_{i},i=1,2,\dots,N-U$, along with an additional thought generated with quality $V^{(k)}_{N-U+1}$ by the target model.
Then the probability that a sample passes the threshold $\beta^{(k)}$ is given by
\begin{align}
\label{proof:def_of_ps}
    p_s =  1 - \Phi\left(\frac{\beta^{(k)} - \mu_q^{(k)}}{\sigma_q^{(k)}}\right).
\end{align}
Thus, the number of passing thoughts $U$ follows a binomial distribution over $N$ trials:
\begin{align}
P(U = u) = \left(\begin{matrix}
    N\\ u
\end{matrix}\right) p_s^u(1 - p_s)^{N-u}.
\end{align}
For the qualities of passing thoughts \( \hat{V}_{i_k} \), their distribution is a truncated normal distribution\cite{burkardt2014truncated}. The expected quality is computed as
\begin{align}
\mu^\prime_q = \mathbb{E}\left[\hat{V}_{i_l}^{(k)}\right] = \mu_q^{(k)} + \sigma_q^{(k)}\frac{\displaystyle\varphi\left(\frac{\beta^{(k)} - \mu_q^{(k)}}{\sigma_q^{(k)}}\right)}{\displaystyle 1 - \Phi\left(\frac{\beta^{(k)} - \mu_q^{(k)}}{\sigma_q^{(k)}}\right)}.
\end{align}
Let the mean quality of the new batch of solutions be denoted as
\begin{align}
\overline{V}^{(k)} = \frac{1}{N+1} \left(\sum_{l=1}^{U}\hat{V}_{i_l}^{(k)} + \sum_{j=1}^{N-U+1}V_j^{(k)}\right).
\end{align}
By the law of total expectation\cite{dekking2006modern}, we have
\begin{align}
\label{appendix:proof:expection}
    \mathbb{E}\left[\overline{V}^{(k)}\right]
    &=\mathbb{E}\left[\mathbb{E}\left[\overline{V}^{(k)}|U\right]\right] \nonumber\\
    &= \sum_{u=0}^nP(U=u)\mathbb{E}\left[\overline{V}^{(k)}|U=u\right] \nonumber\\
    &= \sum_{u=0}^nP(U=u)\mathbb{E}\left[\frac{1}{N+1} \left(\sum_{l=1}^{U}\hat{V}_{i_l}^{(k)} + \sum_{j=1}^{N-u+1}V_j^{(k)}\right)\bigg|U=u\right] \nonumber\\
    &= \sum_{u=0}^nP(U=u)\left[\frac{1}{N+1} \left(\sum_{l=1}^{U}\mathbb{E}[\hat{V}_{i_l}^{(k)}] + \sum_{j=1}^{N-u+1}\mathbb{E}[V_j^{(k)}]\right)\bigg|U=u\right] \nonumber\\
    &= \sum_{u=0}^nP(U=u)\left[\frac{u}{N+1}\mu_q^\prime + \frac{N-u+1}{N+1}\mu_p^{(k)}\bigg|U=u\right] \nonumber\\
    &= \sum_{u=0}^nP(U=u)\left[\mu_p^{(k)} + \frac{u}{N + 1}(\mu_q^\prime - \mu_p^{(k)})\bigg|U=u\right] \nonumber\\
    &= \mu_p^{(k)} + \frac{\mu_q^\prime - \mu_p^{(k)}}{N+1}\sum_{u=0}^N uP(U=u)\nonumber\\
    &= \mu_p^{(k)} + \frac{\mu_q^\prime - \mu_p^{(k)}}{N+1}\mathbb{E}[U]\nonumber\\
    &= \mu_p^{(k)} + \frac{Np_s}{N+1}(\mu_q^\prime - \mu_p^{(k)}).
\end{align}
Let
\begin{align}
h(x) = \frac{\varphi(x)}{1 - \Phi(x)}.
\end{align}
and we have

\begin{align}
h^\prime(x) = \frac{-x\varphi(x)(1 - \Phi(x)) + \varphi^2(x)}{(1 - \Phi(x))^2} = \frac{\varphi(x)(\varphi(x) - x(1-\Phi(x)))}{(1 - \Phi(x))^2}.
\end{align}

According to Lemma \ref{lemma:normal_distribution_cdf}, \( \varphi(x) - x(1-\Phi(x)) > 0 \), i.e. $h(x)\ge x$, so the function \( h(x) \) is monotonically increasing. Therefore for any $k\ge 1$,

\begin{align}
    \mu_q^{\prime}
    = \mu_q^{(k)} + \sigma_q^{(k)} h\left(\frac{\beta^{(k)} - \mu_q^{(k)}}{\sigma_q^{(k)}}\right)
    \ge \mu_q^{(k)} + \sigma_q^{(k)} h\left(\frac{\mu_p^{(k)} - \mu_q^{(k)}}{\sigma_q^{(k)}}\right)
    \ge \mu_q^{(k)} + \sigma_q^{(k)}\frac{\mu_p^{(k)} - \mu_q^{(k)}}{\sigma_q^{(k)}} = \mu_p^{(k)}.
\end{align}

Substituting into the (\ref{appendix:proof:expection}), we obtain that 
\begin{align}
\mathbb{E}\left[\overline{V}^{(k)}\right] \ge \mu_p^{(k)},\;\; \forall k\ge 1,
\end{align}

which is the definition of a lossless thought generator.








\end{proof}

\subsection{Further Discussion on Theorem \ref{theory:lossless_condition}}
Additionally, we provide the necessary and sufficient conditions for losslessness in the following proposition.

\begin{proposition}
    \label{theory:necessary_sufficient}
    (\textbf{Necessary And Sufficient Lossless Threshold Condition})
            The generator $G_s(\beta)$ is lossless \textbf{if and only if} for any reasoning step $k\ge 1$,
            \begin{align}
                \beta^{(k)} \ge \mu_q^{(k)} + \alpha^{(k)} \sigma_q^{(k)},
            \end{align}
            where $\alpha^{(k)}$ is the solution to the equation
            \begin{align}
                \frac{\varphi(x)}{1 - \Phi(x)} = \frac{\mu_p^{(k)} - \mu_q^{(k)}}{\sigma_q^{(k)}}.
            \end{align}
\end{proposition}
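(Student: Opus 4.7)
The plan is to recycle the expectation computation already established in the proof of Theorem \ref{theory:lossless_condition} and show that the inequality derived there is in fact tight, so that the sufficient condition can be sharpened into a characterization. Concretely, from equation (\ref{appendix:proof:expection}) we have
\begin{align}
\mathbb{E}\left[\overline{V}^{(k)}\right] = \mu_p^{(k)} + \frac{N p_s}{N+1}\bigl(\mu_q^{\prime} - \mu_p^{(k)}\bigr),
\end{align}
where $p_s$ is defined in (\ref{proof:def_of_ps}) and $\mu_q^{\prime} = \mu_q^{(k)} + \sigma_q^{(k)} h\!\left(\tfrac{\beta^{(k)}-\mu_q^{(k)}}{\sigma_q^{(k)}}\right)$ with $h(x) = \varphi(x)/(1-\Phi(x))$. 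The first step is to observe that, when $p_s > 0$, losslessness $\mathbb{E}[\overline{V}^{(k)}] \ge \mu_p^{(k)}$ is \emph{equivalent} (not merely implied) to $\mu_q^{\prime} \ge \mu_p^{(k)}$, which in turn is equivalent to
\begin{align}
h\!\left(\frac{\beta^{(k)}-\mu_q^{(k)}}{\sigma_q^{(k)}}\right) \;\ge\; \frac{\mu_p^{(k)} - \mu_q^{(k)}}{\sigma_q^{(k)}}.
\end{align}

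Next, I would invoke the monotonicity of $h$ already proved in Theorem \ref{theory:lossless_condition} (via Lemma \ref{lemma:normal_distribution_cdf}) to invert this inequality. To make the inversion legitimate I need to argue existence and uniqueness of $\alpha^{(k)}$: continuity of $h$ is immediate, monotonic strict increase gives at most one solution, and the limits $\lim_{x \to -\infty} h(x) = 0$ and $\lim_{x \to +\infty} h(x) = +\infty$ (the former from $\varphi(x) \to 0$ and $1-\Phi(x) \to 1$, the latter from $h(x) \ge x$) together with $\mu_p^{(k)} - \mu_q^{(k)} > 0$ guarantee that the equation $h(x) = (\mu_p^{(k)}-\mu_q^{(k)})/\sigma_q^{(k)}$ has a unique root $\alpha^{(k)}$. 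Applying $h^{-1}$ then yields
\begin{align}
\beta^{(k)} \;\ge\; \mu_q^{(k)} + \alpha^{(k)}\sigma_q^{(k)},
\end{align}
as claimed, and the chain of equivalences shows this is simultaneously necessary and sufficient.

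The remaining subtlety, which I expect to be the main obstacle, is the edge case $p_s = 0$: if the threshold is so large that no draft passes, then the formula (\ref{appendix:proof:expection}) degenerates because every rejected thought is re-generated by $G_p$, so $\overline{V}^{(k)} = \frac{1}{N+1}\sum_{j=1}^{N+1} V_j^{(k)}$ and $\mathbb{E}[\overline{V}^{(k)}] = \mu_p^{(k)}$ trivially, making losslessness automatic regardless of the value of $\mu_q^{\prime}$. I would handle this by showing that whenever $\beta^{(k)} \ge \mu_q^{(k)} + \alpha^{(k)}\sigma_q^{(k)}$, the quantity $\frac{Np_s}{N+1}(\mu_q^{\prime}-\mu_p^{(k)})$ is non-negative (both factors being either zero or positive in the right way), so the characterization remains valid, and conversely any $\beta^{(k)}$ strictly below this bound yields $p_s > 0$ (since $h^{-1}$ is finite) together with $\mu_q^{\prime} < \mu_p^{(k)}$, producing a strictly negative correction and hence violating losslessness. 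Finally, I would note that the quantifier ``for any reasoning step $k \ge 1$'' follows because losslessness of the generator is defined step-wise via Definition \ref{theory:lossless_def}, so the per-step equivalence extends directly to the sequence $\{\beta^{(k)}\}_{k=1}^{K}$.
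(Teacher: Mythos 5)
Your proof is correct and takes essentially the same route as the paper's: both recycle the closed-form expectation $\mathbb{E}[\overline{V}^{(k)}] = \mu_p^{(k)} + \frac{Np_s}{N+1}(\mu_q^{\prime}-\mu_p^{(k)})$ and observe that the chain of implications in Theorem~\ref{theory:lossless_condition} is in fact a chain of equivalences, because $h$ is strictly increasing (Lemma~\ref{lemma:normal_distribution_cdf}) and hence invertible. Your additional care with the existence and uniqueness of $\alpha^{(k)}$ (via $h(-\infty)=0$, $h(+\infty)=+\infty$, strict monotonicity) and with the degenerate case $p_s=0$ is a welcome tightening that the paper leaves implicit.
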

\begin{proof}
    As shown in the proof for Theorem \ref{theory:lossless_condition}, we have
    \begin{align}
    \label{appendix:proof:necessary_sufficient}
        \mathbb{E}\left[\overline{V}^{(k)}\right]
    &= \mu_p^{(k)} + \frac{Np_s}{N+1}(\mu_q^\prime - \mu_p^{(k)}).
    \end{align}

    The condition is equivalent to
    \begin{align}
    \label{appendix:proof:ineq0}
        \beta^{(k)}\ge\mu_q^{(k)} + \alpha^{(k)}\sigma_q^{(k)}
        \iff \frac{\beta^{(k)} - \mu_q^{(k)}}{\sigma_q^{(k)}}\ge \alpha^{(k)},\;\; \forall k\ge 1.
    \end{align}
    
    By the  monotonicity of the function $h(x)=\frac{\varphi(x)}{1-\Phi(x)}$, inequality (\ref{appendix:proof:ineq0})  is equivalent to
    
    \begin{align}
        \frac{\displaystyle\varphi\left(\frac{\beta^{(k)} - \mu_q^{(k)}}{\sigma_q^{(k)}}\right)}{\displaystyle 1 - \Phi\left(\frac{\beta^{(k)} - \mu_q^{(k)}}{\sigma_q^{(k)}}\right)} \ge h(\alpha^{(k)}) = \frac{\mu_p^{(k)} - \mu_q^{(k)}}{\sigma_q^{(k)}},\;\; \forall k\ge 1.
    \end{align}
    Rearranging, we obtain
    \begin{align}
    \label{appendix:proof:ineq1}
        \mu_q^\prime - \mu_p^{(k)} = \mu_q^{(k)} + \sigma_q^{(k)}\frac{\displaystyle\varphi\left(\frac{\beta^{(k)} - \mu_q^{(k)}}{\sigma_q^{(k)}}\right)}{\displaystyle 1 - \Phi\left(\frac{\beta^{(k)} - \mu_q^{(k)}}{\sigma_q^{(k)}}\right)} - \mu_p^{(k)} \ge 0,\;\; \forall k\ge 1.
    \end{align}
    
    Substituting into the (\ref{appendix:proof:necessary_sufficient}), we obtain that (\ref{appendix:proof:ineq1}) is equivalent to
    
    \begin{align}
    \mathbb{E}\left[\overline{V}^{(k)}\right] \ge \mu_p^{(k)},\;\; \forall k\ge 1.
    \end{align}
    
    This is equivalent to the definition of a lossless thought generator.

\end{proof}

\subsection{Proof for Theorem \ref{theory:step_wise_probability_bound}}
\label{appendix::step_wise_probability_bound}

Let the condition be denoted as \( \mathcal{C} = \left\{\hat{\beta}^{(k)} \ge \mu_p^{(k)}\right\} \), and we have

\begin{align}
    \mathbb{E}\left[\hat{\beta}^{(k+1)}\mid \mathcal{C}\right]
    &= \theta\mathbb{E}\left[\hat{\beta}^{(k)} \mid \mathcal{C}\right] + (1-\theta) \mathbb{E}\left[\overline{V}^{(k)}\mid \mathcal{C}\right]  \nonumber\\
    &\ge \theta\mu_p^{(k)} + (1-\theta) \mu_p^{(k)} \nonumber\\
    &= \mu_p^{(k)} \ge\frac{1}{\gamma}\mu_p^{(k+1)}
\end{align}

Thus,

\begin{align}
    \mathbb{E}\left[\hat{\beta}^{(k+1)}\mid \mathcal{C}\right] - \mu_p^{(k+1)}
    &\ge\frac{1-\gamma}{\gamma}\mu_p^{(k+1)}
\end{align}

Then according to condition variance decomposition \cite{dekking2006modern}, we have

\begin{align}
    Var\left[\hat{\beta}^{(k+1)}\mid \mathcal{C}\right]
    & = Var\left[\mathbb{E}\left[\hat{\beta}^{(k+1)}\mid \mathcal{C},\hat{\beta}^{(k)}\right]\mid\mathcal{C}\right]
    + \mathbb{E}\left[Var\left[\hat{\beta}^{(k+1)}\mid \mathcal{C},\hat{\beta}^{(k)}\right]\mid\mathcal{C}\right] \nonumber\\
    &= I + II
\end{align}

where $I=Var\left[\mathbb{E}\left[\hat{\beta}^{(k+1)}\mid \mathcal{C},\hat{\beta}^{(k)}\right]\mid\mathcal{C}\right]$ and $II = \mathbb{E}\left[Var\left[\hat{\beta}^{(k+1)}\mid \mathcal{C},\hat{\beta}^{(k)}\right]\mid\mathcal{C}\right]$.
For \( I \), since

\begin{align}
    \mathbb{E}\left[\hat{\beta}^{(k+1)}\mid \mathcal{C},\hat{\beta}^{(k)}\right]
    &= \mathbb{E}\left[\hat{\beta}^{(k+1)}\mid \mathcal{C},\hat{\beta}^{(k)}\right] \nonumber \\
    &= \mathbb{E}\left[ \theta \hat{\beta}^{(k)} + (1 - \theta) \overline{V}^{(k)}\mid \mathcal{C},\hat{\beta}^{(k)}\right] \nonumber \\
    &= \theta \mathbb{E}\left[\hat{\beta}^{(k)} \mid \mathcal{C},\hat{\beta}^{(k)}\right] + (1 - \theta) \mathbb{E}\left[\overline{V}^{(k)} \mid \mathcal{C},\hat{\beta}^{(k)}\right] \nonumber \\
    &= \theta\hat{\beta}^{(k)} + (1-\theta) \mu_p^{(k)}.
\end{align}

and therefore we have,

\begin{align}
    I
    &= Var\left[ \theta\hat{\beta}^{(k)} + (1-\theta) \mu_p^{(k)} \mid \mathcal{C} \right]
    = \theta^2 Var\left[\hat{\beta}^{(k)} \mid \mathcal{C}\right].
\end{align}

For \( II \), now that

\begin{align}
    Var\left[\hat{\beta}^{(k+1)}\mid \mathcal{C},\hat{\beta}^{(k)}\right]
    &= Var\left[\theta \hat{\beta}^{(k)} + (1 - \theta) \overline{V}^{(k)}\mid \mathcal{C},\hat{\beta}^{(k)}\right] \nonumber\\
    &= (1 - \theta)^2 Var\left[\overline{V}^{(k)}\mid \mathcal{C},\hat{\beta}^{(k)}\right] \nonumber \\
    &= (1 - \theta)^2 \bigg(Var\left[\mathbb{E}\left[\overline{V}^{(k)}\mid \mathcal{C},\hat{\beta}^{(k)}, U^{(k)}\right]\mid \mathcal{C},\hat{\beta}^{(k)}\right] \\
    &+ \mathbb{E}\left[Var\left[\overline{V}^{(k)}\mid \mathcal{C},\hat{\beta}^{(k)}, U^{(k)}\right]\mid \mathcal{C},\hat{\beta}^{(k)}\right]\bigg) \nonumber \\
    &= (1 - \theta)^2\left(II_1 + II_2\right),
\end{align}

where $U^{(k)}$ is the number retained draft thought. We find that

\begin{align}
    II_1
    &= Var\left[\mathbb{E}\left[\overline{V}^{(k)}\mid \mathcal{C},\hat{\beta}^{(k)}, U^{(k)}\right]\mid \mathcal{C},\hat{\beta}^{(k)}\right] \nonumber \\
    &= Var\left[\mu_p^{(k)}\mid \mathcal{C},\hat{\beta}^{(k)}\right] = 0,
\end{align}

and

\begin{align}
    &Var\left[\overline{V}^{(k)}\mid \mathcal{C},\hat{\beta}^{(k)}, U^{(k)}\right]\nonumber\\
    &=
    Var\left[\frac{1}{N-U^{(k)}+1}\sum_{i=1}^{N-U^{(k)}+1}V^{(k)}_i\mid \mathcal{C},\hat{\beta}^{(k)}, U^{(k)}\right]\nonumber \\
    &= \frac{\left(\sigma_p^{(k)}\right)^2}{N-U^{(k)}+1}.
\end{align}

Since function $\displaystyle g(x) = \frac{1}{N-x+1}$ is a concave function, therefore according to Jensen's Inequality \cite{dekking2006modern},

\begin{align}
\label{theory:II2}
    II_2
    &= \left(\sigma_p^{(k)}\right)^2 \mathbb{E}\left[g(U^{(k)})\mid \mathcal{C},\hat{\beta}^{(k)}\right] \nonumber\\
    & \le \left(\sigma_p^{(k)}\right)^2 g\left( \mathbb{E}\left[U^{(k)}\mid \mathcal{C},\hat{\beta}^{(k)}\right] \right) \nonumber\\
    & = \frac{\left(\sigma_p^{(k)}\right)^2}{N - \mathbb{E}\left[U^{(k)}\mid \mathcal{C},\hat{\beta}^{(k)}\right] + 1} \nonumber\\
    & = \frac{\left(\sigma_p^{(k)}\right)^2}{N - Np_s + 1}
\end{align}

where $p_s$ is defined in (\ref{proof:def_of_ps}). Given condition where $\hat{\beta}^{(k)} \ge \mu_p^{(k)}$, then

\begin{align}
    p_s &= 1 - \Phi\left(\frac{\beta^{(k)} - \mu_q^{(k)}}{\sigma_q^{(k)}}\right) \nonumber
    \le 1 - \Phi\left(\frac{\mu_p^{(k)} - \mu_q^{(k)}}{\sigma_q^{(k)}}\right) \nonumber
    \le 1 - \Phi\left(\frac{\mu_q^{(k)} - \mu_q^{(k)}}{\sigma_q^{(k)}}\right) \nonumber
    = \frac{1}{2}.
\end{align}

Therefore,

\begin{align}
    II_2 &\le \frac{\left(\sigma_p^{(k)}\right)^2}{N - \frac{1}{2}N + 1} = \frac{2}{N + 2} \left(\sigma_p^{(k)}\right)^2
\end{align}

Overall, we can find the recursive expression of variance of $\hat{\beta}^{(k)}$:

\begin{align}
    Var\left[\hat{\beta}^{(k+1)}\mid \mathcal{C}\right]
    \le \theta^2 Var\left[\hat{\beta}^{(k)} \mid \mathcal{C}\right]
    + \frac{2(1-\theta)^2}{N + 2} \left(\sigma_p^{(k)}\right)^2
\end{align}

Now that $\left(\sigma_p^{(k)}\right)^2\le\left(\sigma_c\right)^2$, we can derive that

\begin{align}
    Var\left[\hat{\beta}^{(k+1)}\mid \mathcal{C}\right]
    &\le \theta^2 Var\left[\hat{\beta}^{(k)} \mid \mathcal{C}\right]
    + \frac{2(1-\theta)^2}{N + 2}  \left(\sigma_c\right)^2 \nonumber\\
    & \dotsb\nonumber\\
    &\le \theta^{2k} Var\left[\hat{\beta}^{(1)} \mid \mathcal{C}\right] + \frac{2(1-\theta)^2}{N + 2}  \sum_{i=1}^{k}\theta^{2i-2} \left(\sigma_c\right)^2 \nonumber \\
    &\le \left(\frac{\theta^{2k}}{N+1} + \frac{2}{N + 2}\frac{(1-\theta)^2\left(1-\theta^{2k}\right)}{1-\theta^2}\right)\left(\sigma_c\right)^2 \nonumber \\
    & = \left(\frac{\theta^{2k}}{N+1} + \frac{2}{N + 2}\frac{(1-\theta)\left(1-\theta^{2k}\right)}{1+\theta}\right)\left(\sigma_c\right)^2 \nonumber \\
    & \le \left(\frac{1}{N+1} + \frac{2}{N + 2}\frac{(1-\gamma)(1-\gamma^{2k})}{1+\gamma}\right) \left(\sigma_c\right)^2 \nonumber \\
    &\le \left(\frac{1}{N+1} + \frac{2}{N+2}\right) \left(\sigma_c\right)^2.
\end{align}

By Cantelli's inequality \cite{enwiki:1244860887},

\begin{align}
    P\left(\hat{\beta}^{(k+1)} \le \mu_p^{(k+1)}\bigg|\mathcal{C}\right)
    &= P\left(\hat{\beta}^{(k+1)} - \mathbb{E}\left[\hat{\beta}^{(k+1)}|\mathcal{C}\right] \le - \left(\mathbb{E}\left[\hat{\beta}^{(k+1)}|\mathcal{C}\right]-\mu_p^{(k+1)}\right)\bigg|\mathcal{C}\right) \nonumber\\
    &\le \left(\displaystyle 1+\frac{\left(\mathbb{E}\left[\hat{\beta}^{(k+1)}|\mathcal{C}\right]-\mu_p^{(k+1)}\right)^2}{Var\left[\hat{\beta}^{(k+1)} - \mathbb{E}\left[\hat{\beta}^{(k+1)}|\mathcal{C}\right]|\mathcal{C}\right]}\right)^{-1}\nonumber\\
    &= \left(\displaystyle 1+\frac{\left(\mathbb{E}\left[\hat{\beta}^{(k+1)}|\mathcal{C}\right]-\mu_p^{(k+1)}\right)^2}{Var\left[\hat{\beta}^{(k+1)} |\mathcal{C}\right]}\right)^{-1}\nonumber\\
    & \le \left(1 + \displaystyle\frac{\displaystyle
        \left[\frac{1-\gamma}{\gamma}\mu_p^{(k+1)}\right]^2}{\left(\displaystyle\frac{1}{N+1} + \frac{2}{N + 2}\right) \left(\sigma_c\right)^2}\right)^{-1}.
\end{align}

Therefore,

\begin{align}
    P\left(\hat{\beta}^{(k+1)} \ge \mu_p^{(k+1)}\bigg|\mathcal{C}\right)
    &\ge \frac{\displaystyle \left[\frac{1-\gamma}{\gamma}\mu_p^{(k+1)}\right]^2}{\displaystyle \left[\frac{1-\gamma}{\gamma}\mu_p^{(k+1)}\right]^2 + \left(\displaystyle\frac{1}{N+1} + \frac{2}{N + 2}\right) \left(\sigma_c\right)^2}.
\end{align}

\subsection{Proof for Theorem \ref{theory:joint_probability_bound}}

When \( k=0 \), we have

\begin{align}
    \mathbb{E}\left[\hat{\beta}^{(1)}\right] = \theta \mu_p^{(0)} \ge \gamma \mu_p^{(0)} \ge \mu_p^{(1)}
\end{align}

and

\begin{align}
    Var\left[\hat{\beta}^{(1)}\right]
    = \frac{1}{N+1}\left(\sigma_p^{(0)}\right)^2
\end{align}

Additionally, we have

\begin{align}
    P\left(\hat{\beta}^{(1)} \le \mu_p^{(1)}\right) \le P\left(\hat{\beta}^{(1)} \le \mu_p^{(0)}\right) = \frac{1}{2^{N+1}}.
\end{align}

Then for \( k \ge 1 \), according to Theorem \ref{theory:step_wise_probability_bound}, 

\begin{align}
    P\left(\hat{\beta}^{(k+1)} \ge \mu_p^{(k+1)}\bigg|\mathcal{C}\right)
    &\ge \frac{\displaystyle \left[\frac{1-\gamma}{\gamma}\mu_p^{(k+1)}\right]^2}{\displaystyle \left[\frac{1-\gamma}{\gamma}\mu_p^{(k+1)}\right]^2 + \left(\displaystyle\frac{1}{N+1} + \frac{2}{N + 2}\right) \left(\sigma_c\right)^2}.
\end{align}

Noting the Markov property of $\hat{\beta}^{(k+1)}$ we have

\begin{align}
    P\left(\hat{\beta}^{(k)} \ge \mu_p^{(k)}, 1\le k \le K \right)
    &= P\left(\hat{\beta}^{(1)} \ge \mu_p^{(1)}\right)\prod_{k=1}^{K-1}P\left(\hat{\beta}^{(k+1)} \ge \mu_p^{(k+1)}\mid\hat{\beta}^{(k)} \ge \mu_p^{(k)}\right)\nonumber\\
    &\ge \left(1-\frac{1}{2^{N+1}}\right) \prod_{k=1}^{K-1}\left[\frac{\displaystyle \left[\frac{1-\gamma}{\gamma}\mu_p^{(k+1)}\right]^2}{\displaystyle \left[\frac{1-\gamma}{\gamma}\mu_p^{(k+1)}\right]^2 + \left(\displaystyle\frac{1}{N+1} + \frac{2}{N + 2}\right) \left(\sigma_c\right)^2}\right].
\end{align}

\subsection{Further Discussion on Theorem \ref{theory:joint_probability_bound}}\label{appendix:discussion_probability_bound}
\subsubsection{Numerical Analysis for Probability Bound}
We conduct a numerical analysis of the probability lower bound presented in Theorem \ref{theory:joint_probability_bound} for a common scenario. Specifically, we set decent factor $\gamma = 0.9$, quality of large model at the initial step $\mu_p^{(0)}=0.85$, maximum reasoning quality variance $\sigma_c=0.01$, drafting size $N=10$, and maximum reasoning steps $K=10$. 
Using Theorem \ref{theory:joint_probability_bound}, we compute the current probability lower bound up to the 
$k$-th step, $1\le k \le K$. The results of this computation are presented in Figure \ref{fig:probability-bound}. At the $10$-th reasoning step, the probability lower bound remains as high as $0.90$.
Although this conclusion is derived under highly idealized conditions, it still provides theoretical support for the high quality of thoughts generated from our method.

\begin{figure}[h]
    \centering
        \includegraphics[width=0.5\textwidth]{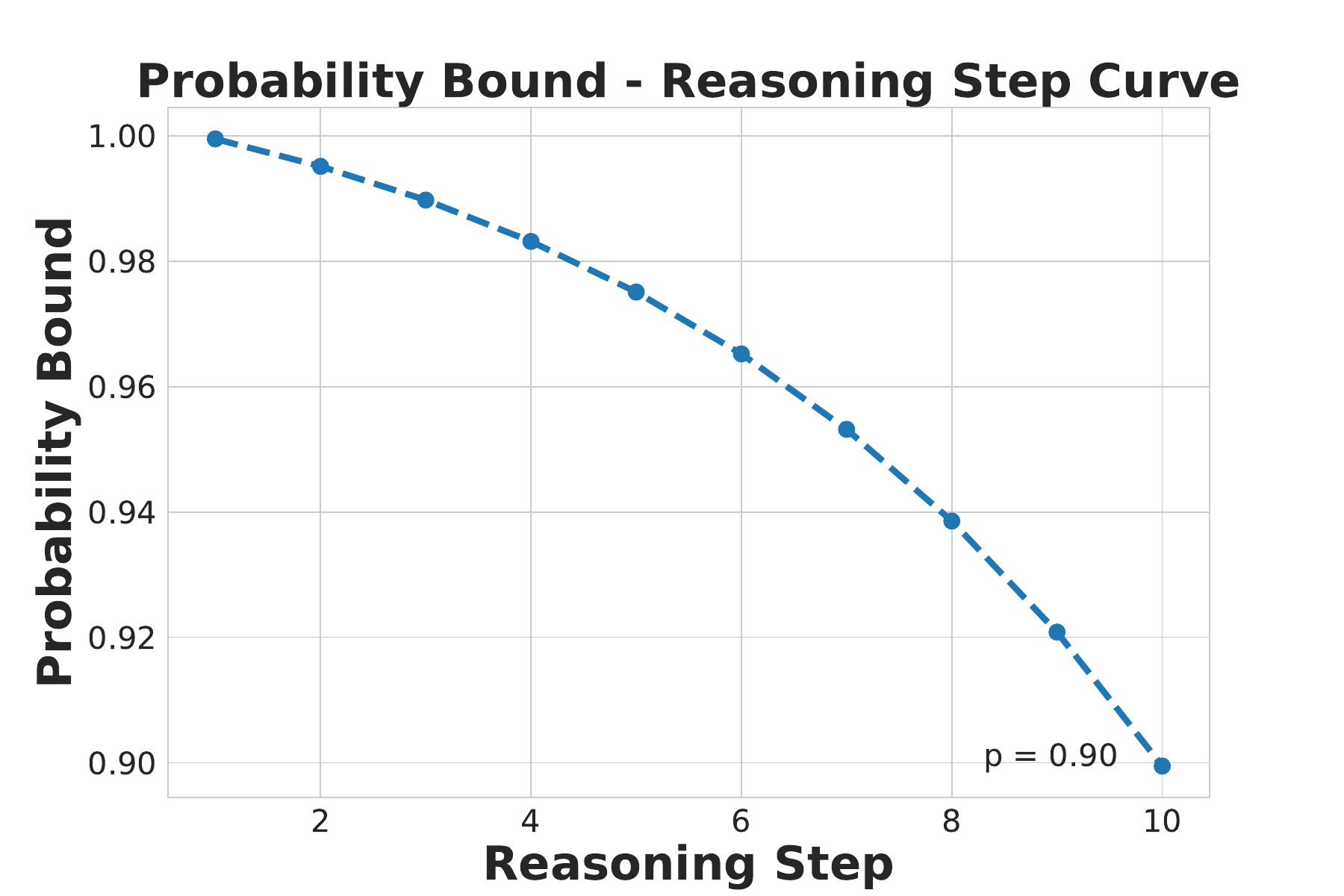}
    \caption{The bound decent rapidly with the reasoning steps. However, the bound remains as high as $0.90$ even at the $10$-th step.}
    \label{fig:probability-bound}
\end{figure}

\subsubsection{Threshold Estimator with Maximum Estimation}\label{appendix:maximum_estimation}
In practice, due to the limited number of samples, the accuracy of the average estimation method tends to be lower, which results in a decrease in the quality of the thoughts generated by our speculative reasoning algorithm. Therefore, we incorporate the solutions from the small model $G_q$ that passed the threshold into the estimation of $\mu_p^{(k)}$ and use the maximum as a non-parametric estimator. Specifically, at reasoning step $k+1$, denote the set of qualities of thoughts from small model $G_q$ that passed $\tilde{\beta}^{(k)}$ by $\mathcal{V}_q^{(k)} = \left\{\hat{V}_{i_1}^{(k)}, \hat{V}_{i_2}^{(k)},\dots,\hat{V}_{i_{N-M}}^{(k)}\right\}$, and the set of qualities of thoughts generated by large model (speculative model) $G_p$ by $\mathcal{V}_p^{(k)} = \Big\{V^{(k)}_1$, $V^{(k)}_2$, $\dots$, $V^{(k)}_M\Big\}$. Then, our estimator takes the form of:
\begin{align}
    \tilde{\beta}^{(k+1)} = \theta \tilde{\beta}^{(k)} + (1 - \theta) \max \mathcal{V}_p^{(k)} \cup \mathcal{V}_q^{(k)},
\end{align}
with the initial threshold $\tilde{\beta}^{(0)} = \theta\max \mathcal{V}_p^{(0)}$. It's easy to see that $\tilde{\beta}^{(k)}\ge \hat{\beta}^{(0)},k=1,2,\dots$. Therefore, we have
\begin{align}
    P\left(\tilde{\beta}^{(k+1)} \ge \mu_p^{(k+1)}, 1\le k \le K \right) \ge P\left(\hat{\beta}^{(k+1)} \ge \mu_p^{(k+1)}, 1\le k \le K \right).
\end{align}
That indicates that the maximum estimation method results in a higher probability of producing quality-preserved thoughts, at the cost of increased computational resources.

\subsubsection{Threshold Estimator with No Additional $G_p$ Samples} \label{appendix:additional_generated_G_p}
For the sake of simplicity in the previous analysis, we assumed that large model (speculative model) $G_p$ generates $M+1$ solutions at each step to ensure the existence of the large model's solution. In reality, we can make a more practical assumption that $G_p$ still generates $M$ thoughts and $M\ge 1$. Then $U = N - M$ follows a truncated binomial distribution:

\begin{align}
    P(U = u) = \begin{cases}
        0 & u = n,\\
        \displaystyle\left(\begin{matrix} N\\ u \end{matrix}\right)\frac{p_s^u(1 - p_s)^{N-u}}{1-p_s^n} & \text{else},
    \end{cases}
\end{align}

where $p_s =  1 - \Phi\left(\frac{\beta^{(k)} - \mu_q^{(k)}}{\sigma_q^{(k)}}\right)$. We can calculate that

\begin{align}
    \mathbb{E}[U] = \frac{N(p_s - p_s^N)}{1-p_s^N}.
\end{align}

Then

\begin{align}
    \mathbb{E}\left[\overline{V}^{(k)}\right]
    &= \mu_p^{(k)} + \frac{p_s - p_s^N}{1-p_s^n} (\mu_q^\prime - \mu_p^{(k)}).
\end{align}

Therefore we can draw the same conclusion with Theorem \ref{theory:lossless_condition}. In addition, we find (\ref{theory:II2}) changes into

\begin{align}
    II_2 = \frac{\left(\sigma_p^{(k)}\right)^2}{N - \mathbb{E}\left[U^{(k)}\mid \mathcal{C},\hat{\beta}^{(k)}\right]} = \frac{\left(\sigma_p^{(k)}\right)^2}{N - \frac{N(p_s - p_s^N)}{1-p_s^N}} \le \frac{2}{N}\left(\sigma_p^{(k)}\right)^2,
\end{align}

and the probability bound for quality-preserving changes to
\begin{align}
    P\left(\hat{\beta}^{(k+1)} \ge \mu_p^{(k+1)}, 0\le k \le K \right)\ge
    \left(1-\frac{1}{2^{N}}\right)\prod_{k=0}^{K}\left[\frac{ \left[\frac{1-\gamma}{\gamma}\mu_p^{(k+1)}\right]^2}{\left[\frac{1-\gamma}{\gamma}\mu_p^{(k+1)}\right]^2 + \frac{3}{N} \left(\sigma_c\right)^2}\right].
\end{align}

\section{More Background}\label{appendix:more_bg}

\begin{figure}[htb]
    \centering
    \begin{subfigure}{0.4\textwidth}
        \includegraphics[width=\textwidth]{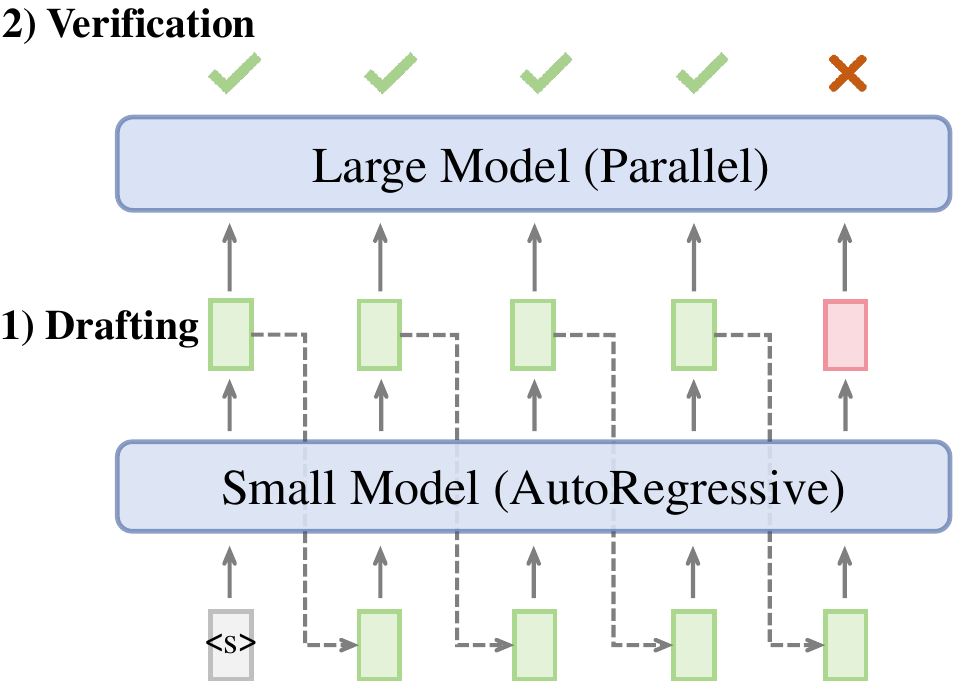}
        \caption{Speculative Decoding}
        \label{fig:sd}
    \end{subfigure}
    \begin{subfigure}{0.4\textwidth}
        \includegraphics[width=\textwidth]{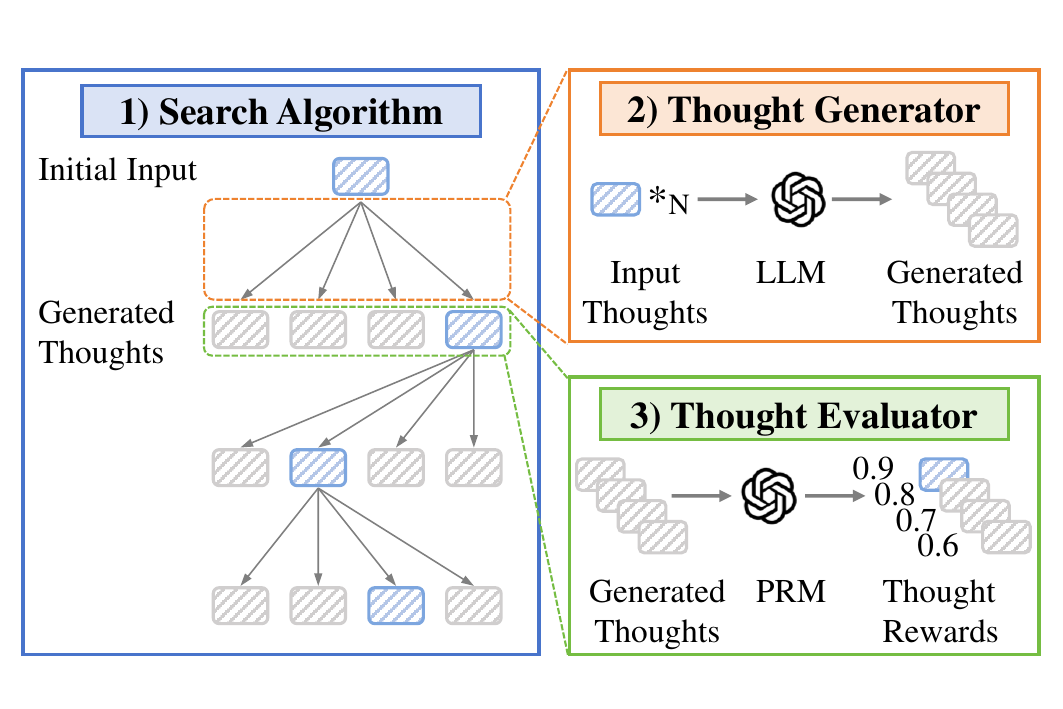}
        \caption{Search-Based Reasoning}
        \label{fig:sbr}
    \end{subfigure}
    \vspace{-4mm}
    \caption{(a) Illustration of standard speculative decoding methods. (b) Illustration of the beam-search-based reasoning method.}
    \label{fig:background}
\end{figure}

\textbf{Details on Speculative Sampling} Here is a detailed introduction of speculative sampling (SpS)~\cite{direct_SD1,direct_SD2}, a state-of-the-art decoding technique that significantly accelerates LLM inference while \textit{preserving the target model’s distribution}. Specifically, let $ c $ denote the prefix, $ M_q $ and $ M_p $ be the small and large models, respectively, and $ \gamma $ represent the number of tokens generated per step. SpS operates in two phases: drafting and verification.
In the drafting phase, the small model $ M_q $ performs autoregressive sampling to generate $ \gamma $ tokens, denoted as $ x_1, x_2, \ldots, x_\gamma $, where each $ x_i \sim M_q(x_i \mid x_{i-1}, x_{i-2}, \ldots, x_1, c) $. In the verification phase, the large model $ M_p $ verifies the tokens generated by $ M_q $ in parallel, obtaining the probability distribution $ M_p(x \mid x_{i-1}, \ldots, x_1, c) $.
Each token $ x_i $ is then verified sequentially using a modified rejection sampling mechanism, accepted with probability  
$\min\left(1, \frac{M_p(x_i \mid x_{i-1}, \ldots, x_1, c)}{M_q(x_i \mid x_{i-1}, \ldots, x_1, c)}\right)$.
If $ x_i $ is rejected, the verification process terminates, and a resampling phase begins to generate a new token $\tilde{x}_i$. Theoretically, this approach ensures that the distribution of accepted tokens matches that of the large model.

\textbf{Details on Beam Search and MCTS} 
Beam Search is a heuristic search algorithm that starts from the root node and generates $ N $ child nodes. At each depth level, only the top $ k $ most promising nodes (beam size) are retained. This process is repeated for the selected $ k $ nodes until a termination condition is met. The highest-scoring path is returned as the solution.MCTS is a simulation-based decision-making method that starts from the root node and selects child nodes according to a specific strategy (e.g., UCB) until an unexpanded node is reached. A new child node is then expanded. From this new node, a series of random steps are executed to simulate the search process until the end or a predefined depth is reached. After the simulation, rewards propagate back up the tree, updating the value of each visited node based on the simulation results. Through multiple iterations, MCTS converges to an optimal solution.


\section{Implementation Details of the Baselines}\label{baseline}


We implement the baselines used in our paper based on the OpenR code framework.

\subsection{AR}
AR refers to Autoregressive Generation, a sequence-based model generation method widely used in language models. In the standard Tree of Thoughts (ToT) method, autoregressive generation involves constructing solutions step by step, where each step uses information from previous steps to guide current choices. This approach is straightforward and intuitive but can be limited in terms of speed.
In our work, the AR methods using Beam Search and MCTS are based on existing open-source code from OpenR.

\subsection{SpS}
SpS refers to Speculative Sampling, a parallel decoding method for model generation. By introducing a small model, speculative sampling accelerates the generation process in tree search reasoning methods. We implement an efficient SpS method for Beam Search and MCTS using the vLLM \cite{vllm} package, building on the open-source code from OpenR.

\section{Details of the Datasets Used in This Paper}

\subsection{The Datasets Used in the Main Evaluation}
\textbf{MATH-100} The MATH dataset \cite{MATH} consists of 12,500 challenging competition mathematics problems, each with a full step-by-step solution. These solutions can be used to teach models to generate answer derivations and explanations. We randomly select 100 problems from this dataset as our test set. We choose this number of problems for our test set because the inference latency of the tree search algorithm is quite long. Even with our efficient SpecSearch acceleration framework, the latency remains significant. To avoid the experiment running time being too long for a single run, we select 100 problems for evaluation.

\textbf{GSM8K-100} The GSM8K dataset \cite{Training_verifiers} contains 8.5K high-quality, linguistically diverse grade school math word problems. We randomly select 100 problems from this dataset for our test set. The reason for selecting 100 problems is the same as for the MATH dataset: to manage inference latency effectively.

\subsection{The Datasets Used in the Ablation Study} 
\textbf{MATH-50} For the ablation study, we need to test multiple variants of SpecSearch and conduct hyperparameter robustness experiments, which requires running the experiments multiple times. To facilitate this process, we select 50 mathematical problems from the MATH dataset as the test set for the ablation study.

\section{Illustration of Using Models}
\textbf{Thought Generator} For the Qwen series of models, we use the Qwen2.5-72B-Instruct-GPTQ-Int4 model as the large model and the Qwen2.5-7B-Instruct-GPTQ-Int4 model as the small model. For the Llama series of models, we use the Llama-3-70B-Instruct-GPTQ-Int4 model as the large model and the Llama-3-8B-Instruct-GPTQ-Int4 model as the small model.

\textbf{Thought Evaluator} We use two PRM models as thought evaluators, one is Math-Shepherd \cite{math-shepherd} and the other is Math-psa \cite{openr} for \textbf{Experiment 2}.

\section{Discussion on the novelty of SpecSearch over standard speculative decoding and TreeBon~\cite{treebon}}\label{discussion_novelty}

\subsection{Comparison with Existing Speculative Decoding Techniques}

\paragraph{Relation to Standard Speculative Decoding (SD) Methods.}
We discuss the novelty of \textit{SpecSearch} compared to existing SD techniques, emphasizing \textbf{key distinctions} in terms of \textbf{speculative formulation}, \textbf{verification and rejection strategies}, and \textbf{theoretical guarantees}.

\begin{itemize}
    \item \textbf{Bi-Level Speculative Formulation:} Unlike existing SD methods focused solely on tokens, \textit{SpecSearch} treats both high-level thoughts and low-level tokens as bi-level speculative tasks. This enables (1) \textbf{Structural Alignment} with reasoning frameworks, where thoughts are fundamental units, and (2) \textbf{Compatibility} with standard SD methods through low-level token-level speculation.
    
    \item \textbf{Contextual Verification for Higher Acceptance and Speedup:} Unlike SD methods that enforce strict token-level alignment, leading to frequent rejections, \textit{SpecSearch} verifies the \textbf{contextual quality} of reasoning thoughts. This allows acceptance of correct but non-aligned outputs, substantially boosting acceptance rates and achieving significant speedups.
    
    \item \textbf{Quality-Preserving Rejection Mechanism:} In contrast to token-level rejection in standard SD methods, \textit{SpecSearch} introduces \textbf{quality-preserving thought-level rejection} based on contextual quality. It discards entire thoughts only when their quality is lower than the large model’s, ensuring high-quality reasoning throughout decoding.
    
    \item \textbf{Theoretical Guarantee of Reasoning Quality:} While standard SD methods preserve token-level distributions, \textit{SpecSearch} guarantees that the reasoning quality remains comparable to that of the large model.
\end{itemize}

\subsection{Comparison with Treebon \cite{treebon}}

We discuss the novelty of \textit{SpecSearch} compared to Treebon~\cite{treebon}, emphasizing key distinctions in terms of \textbf{motivation}, \textbf{speculative formulation}, \textbf{rejection strategies}, and \textbf{theoretical guarantees}.

\begin{itemize}
    \item \textbf{Distinct Motivation:} Unlike Treebon, which aims to accelerate best-of-$n$ sampling via speculative rejection and tree search, \textit{SpecSearch} is the first to \textbf{generalize speculative execution to LLM reasoning tasks}.
    
    \item \textbf{Bi-Level Speculative Formulation:} Treebon treats fixed-length token sequences as speculative units, while \textit{SpecSearch} adopts a \textbf{flexible bi-level formulation}—modeling full reasoning thoughts as high-level tasks and tokens as low-level ones. Unlike Treebon’s fixed-length design, \textit{SpecSearch} leverages LLMs’ reasoning capabilities to generate semantically coherent thoughts of dynamic length.
    
    \item \textbf{Quality-Preserving Rejection Mechanism:} Treebon rejects a fixed proportion of token sequences using a preset threshold. In contrast, \textit{SpecSearch} scores reasoning thoughts and \textbf{adaptively rejects those with lower contextual quality relative to the large model's output}, enabling finer control and better quality preservation.
    
    \item \textbf{Theoretical Guarantee:} Unlike Treebon, which lacks theoretical guarantees, \textit{SpecSearch} offers \textbf{formal assurance} that the quality of the output reasoning remains on par with that of the large model.
\end{itemize}

\section{Implementation Details of Our SpecSearch}

\subsection{Discussion on Advantages of Our Evaluation Method}\label{appendix:discuss_evaluation}
Here we present a detailed discussion on using a process reward model to evaluate the quality of thoughts. \textbf{First}, the thought evaluator accurately captures a thought’s complete semantic meaning. \textbf{Second}, it converts thought distribution into a structured, manageable quality distribution, enabling a clearer definition of lossless reasoning acceleration. \textbf{Third}, it assigns high scores to different valid reasoning paths, improving the assessment of the small model’s thought quality.

\subsection{SpecSearch Implementation Details}

\subsubsection{Small Model Parallel Thought Generation}
Due to the small memory footprint of small models, they can operate in parallel even under limited memory conditions. Generating multiple thoughts simultaneously does not significantly increase latency compared to generating a single thought. Therefore, we use a small model to generate thoughts in parallel. Although the overall quality of generation from small models may not match that of large models, they still produce high-quality thoughts.

We utilize the small model to generate 2*N thoughts in parallel, combining the efficiency of parallel processing with the ability to generate high-quality thoughts. This approach introduces more high-quality thoughts into the Tree of Thoughts (ToT), enhancing both efficiency and thought quality.

\subsubsection{Acceptance-Rejection Mechanism}
After generating 2*N thoughts with the small model, we evaluate these thoughts using the Process Reward Model (PRM) to determine their rewards. Each thought's reward is compared to a dynamically calculated threshold. If the reward surpasses the threshold, the thought is retained; otherwise, it is discarded. If more than N thoughts are retained, we select the top N thoughts with the highest rewards for final acceptance.

\subsubsection{Algorithm Implementation}

\begin{algorithm}[t]
    \caption{Pseudo Code for SpecSearch}
    \label{alg:SpecSearch}
    \begin{algorithmic}[htb]
        \STATE \textbf{Input:} Input question $c$, large model (speculative model) $G_p$, small model $G_q$, evaluation model $V$, expansion width $N$, beam size $b$, EMA weight $\theta$, reasoning depth $K$, a nonparametric estimation method $\Theta$.
        \STATE Initialize beam: $\mathcal{B} \gets \emptyset$ \textcolor{blue}{\COMMENT{Each element takes the form of $[\text{sequence}, \text{quality}]$}}
        \STATE Initialize candidate thoughts: $\mathcal{T} \gets \emptyset$, $\mathcal{V} \gets \emptyset$ \textcolor{blue}{\COMMENT{Initial reasoning from large model $G_p$}}
        \FOR{$i = 1$ \textbf{to} $N$}
            \STATE Generate from large model: $z\gets G_p(\cdot \mid c)$
            \STATE Evaluate generated thought: $v\gets V(z)$
            \STATE Update candidates: $\mathcal{T}\gets \mathcal{T}\cup \{(z, c)\}$, $\mathcal{V}\leftarrow \mathcal{V}\cup \{v\}$
        \ENDFOR
        \STATE Initialize threshold: $\hat{\beta}^{(1)} \gets \theta \Theta(\mathcal{V})$
        \STATE Update beam: $\mathcal{B} \gets \text{Top}_b(\mathcal{T})$ \textcolor{blue}{\COMMENT{Retain top $b$ by quality}}
        \FOR{$k = 1$ \textbf{to} $K$}
            \STATE Initialize candidate thoughts: $\mathcal{T} \gets \emptyset$
            \FOR{$z_{<k}$ \textbf{in} $\mathcal{B}$}
                \STATE Generate Thoughts: $\hat{\beta}^{(k+1)},\; \mathcal{T}_i \gets G_s\left(z_{<k}, G_p, G_q, V, \hat{\beta}^{(k)}, N, \theta, \Theta\right)$ \textcolor{blue}{\COMMENT{Speculatively search}}
                \STATE Update candidate thoughts: $\mathcal{T}\gets \mathcal{T}\cup \mathcal{T}_i$
            \ENDFOR
            \STATE Update beam: $\mathcal{B} \gets \text{Top}_b(\mathcal{T})$ \textcolor{blue}{\COMMENT{Retain top $b$ by quality}}
            \IF{$\forall z_{\le k} \in \mathcal{B}, \text{last}(z_{\le k}) = \texttt{<STOP>}$}
                \STATE \textbf{break} \textcolor{blue}{\COMMENT{Finish searching}}
            \ENDIF
        \ENDFOR
        \STATE \textbf{return} $\mathcal{B}$
    \end{algorithmic}
\end{algorithm}

The procedure of our bi-level speculative thought generator is outlined in Algorithm \ref{alg:generator} in the main text. Here, we further present the complete SpecSearch algorithm, which is based on the beam search algorithm, as shown in Algorithm \ref{alg:SpecSearch}.

Furthermore, due to the limited sample size \( M \), we adopt a more conservative estimation strategy in the implementation, utilizing the maximum value as an estimate of the upper confidence bound for \( \mu_p^{(k+1)} \). Specifically, let \( \mathcal{V}_q^{(k)} \) denote the set of qualities of thoughts generated by the small model \( G_q \), and \( \mathcal{V}_p^{(k)} \) denote the set of qualities of thoughts generated by the large model (speculative model) \( G_p \). The threshold estimation method we employ is as follows:
\begin{align}
    \tilde{\beta}^{(k+1)} = \theta \tilde{\beta}^{(k)} + (1 - \theta) \max \mathcal{V}_p^{(k)} \cup \mathcal{V}_q^{(k)}.
\end{align}

\subsubsection{Speculative Model Serial Thought Generation}
If the number of accepted thoughts from the small model is less than N after filtering through the acceptance-rejection mechanism, we use a speculative model to serially generate additional thoughts until the total number of thoughts reaches N.

\subsection{Hyperparameters}
\textbf{SpecSearch} In our experiments, unless otherwise specified, we set the EMA weight $ \theta $ in the SpecSearch to 0.9.

\textbf{Beam Search} In our experiments, unless otherwise specified, we set the tree width to 6, the tree depth to 50, and the beam size to 2 in the Beam Search. 

\textbf{MCTS} In our experiments, unless otherwise specified, we set the tree width to 6, the tree depth to 50, and the iteration number to 4 in the MCTS.




\begin{table}[t]
\centering
\caption{\textbf{Full GSM8K.} Evaluation on the full GSM8K-1319 dataset. \textbf{(1) Setup} We utilize quantized versions of Qwen2.5-72B-Instruct and Qwen2.5-7B-Instruct as the large and small language models, utilize MATH-psa as the Process Reward Model and employ beam search as the search algorithm. Unless explicitly stated otherwise, all results presented below follow this setting. \textbf{(2) Results} The results demonstrate that SpecSearch achieves comparable accuracy while significantly reducing inference latency.}
\label{appendix_tab_full_gsm8k}
\resizebox{0.5\textwidth}{!}{
\begin{tabular}{@{}ccccc@{}}
\toprule
\toprule
\multicolumn{5}{c}{Qwen models} \\
\midrule
MATH Dataset & \multicolumn{4}{c}{GSM8K-1319} \\
\midrule
\multirow{2}{*}{Methods} & \multirow{2}{*}{\begin{tabular}[c]{@{}c@{}}Reasoning\\ Accuracy (\%)\end{tabular}} & \multirow{2}{*}{\begin{tabular}[c]{@{}c@{}}Average Inference\\ Latency (s)\end{tabular}} & \multirow{2}{*}{\begin{tabular}[c]{@{}c@{}}Speedup\\ (vs AR)\end{tabular}} & \multirow{2}{*}{\begin{tabular}[c]{@{}c@{}}Speedup\\ (vs SpS)\end{tabular}} \\ \\ 
\midrule
AR & 96.66 & 144.63 & NA & 0.48 \\
SpS & 96.66 & 70.04 & 2.06 & NA \\
SpecSearch (Ours) & 95.83 & 50.99 & 2.84 & 1.37 \\
\bottomrule
\end{tabular}
}
\end{table}

\begin{table}[t]
\centering
\caption{\textbf{AIME.} Evaluation on the AIME dataset. The results demonstrate that SpecSearch achieves comparable accuracy while significantly reducing inference latency.}
\label{appendix_tab_aime}
\resizebox{0.5\textwidth}{!}{
\begin{tabular}{@{}ccccc@{}}
\toprule
\toprule
\multicolumn{5}{c}{Qwen models} \\
\midrule
MATH Dataset & \multicolumn{4}{c}{AIME} \\
\midrule
\multirow{2}{*}{Methods} & \multirow{2}{*}{\begin{tabular}[c]{@{}c@{}}Reasoning\\ Accuracy (\%)\end{tabular}} & \multirow{2}{*}{\begin{tabular}[c]{@{}c@{}}Average Inference\\ Latency (s)\end{tabular}} & \multirow{2}{*}{\begin{tabular}[c]{@{}c@{}}Speedup\\ (vs AR)\end{tabular}} & \multirow{2}{*}{\begin{tabular}[c]{@{}c@{}}Speedup\\ (vs SpS)\end{tabular}} \\ \\
\midrule
AR & 16.67 & 562.89 & NA & 0.57 \\
SpS & 13.33 & 318.71 & 1.77 & NA \\
SpecSearch (Ours) & 13.33 & 264.44 & 2.13 & 1.21 \\
\bottomrule
\end{tabular}
}
\end{table}

\begin{table}[t]
\centering
\caption{\textbf{Olympiad Bench.} Evaluation on the Olympiad Bench (OE-TO-maths-zh-CEE) dataset. The results demonstrate that SpecSearch achieves comparable accuracy while significantly reducing inference latency.}
\label{appendix_tab_olympiad}
\resizebox{0.5\textwidth}{!}{
\begin{tabular}{@{}ccccc@{}}
\toprule
\toprule
\multicolumn{5}{c}{Qwen models} \\
\midrule
MATH Dataset & \multicolumn{4}{c}{Olympiad Bench} \\
\midrule
\multirow{2}{*}{Methods} 
& \multirow{2}{*}{\begin{tabular}[c]{@{}c@{}}Reasoning\\ Accuracy (\%)\end{tabular}} 
& \multirow{2}{*}{\begin{tabular}[c]{@{}c@{}}Average Inference\\ Latency (s)\end{tabular}} 
& \multirow{2}{*}{\begin{tabular}[c]{@{}c@{}}Speedup\\ (vs AR)\end{tabular}} 
& \multirow{2}{*}{\begin{tabular}[c]{@{}c@{}}Speedup\\ (vs SpS)\end{tabular}} \\ \\
\midrule
AR & 63.75 & 358.44 & NA & 0.67 \\
SpS & 58.75 & 241.80 & 1.48 & NA \\
SpecSearch (Ours) & 58.75 & 176.02 & 2.04 & 1.37 \\
\bottomrule
\end{tabular}
}
\end{table}

\begin{table}[t]
\centering
\caption{\textbf{Code-Generation Benchmark.} Evaluation on the HumanEval dataset. The results show that SpecSearch achieves comparable accuracy while significantly reducing inference latency.}
\label{appendix_tab_code_gen}
\resizebox{0.5\textwidth}{!}{
\begin{tabular}{@{}ccccc@{}}
\toprule
\toprule
\multicolumn{5}{c}{Qwen models} \\
\midrule
Coding Dataset & \multicolumn{4}{c}{HumanEval} \\
\midrule
\multirow{2}{*}{Methods} & \multirow{2}{*}{\begin{tabular}[c]{@{}c@{}}Reasoning\\ Accuracy (\%)\end{tabular}} & \multirow{2}{*}{\begin{tabular}[c]{@{}c@{}}Average Inference\\ Latency (s)\end{tabular}} & \multirow{2}{*}{\begin{tabular}[c]{@{}c@{}}Speedup\\ (vs AR)\end{tabular}} & \multirow{2}{*}{\begin{tabular}[c]{@{}c@{}}Speedup\\ (vs SpS)\end{tabular}} \\ \\
\midrule
AR & 85.37 & 342.18 & NA & 0.65 \\
SpS & 84.15 & 223.30 & 1.53 & NA \\
SpecSearch (Ours) & 85.37 & 158.43 & 2.16 & 1.41 \\
\bottomrule
\end{tabular}
}
\end{table}

\section{More Results}\label{more_result}
\subsection{More Main Evaluation} \label{appendix:more_main_evaluation}

We conduct comprehensive evaluations across \textbf{three distinct dataset categories} to rigorously demonstrate the efficiency and generalizability of \textit{SpecSearch}. Specifically, these include: (1) the \textbf{full GSM8K} dataset comprising 1,319 problems; (2) more challenging mathematical reasoning benchmarks, namely the \textbf{AIME} and \textbf{Olympiad} datasets; and (3) a \textbf{code-generation} benchmark. As illustrated in Tables \ref{appendix_tab_full_gsm8k}, \ref{appendix_tab_aime}, \ref{appendix_tab_olympiad}, and \ref{appendix_tab_code_gen}, \textit{SpecSearch} \textbf{consistently} and \textbf{significantly surpasses state-of-the-art approaches} across all three dataset categories, achieving speedups ranging from \textbf{2.04$\times$} to \textbf{2.84$\times$} while maintaining comparable reasoning accuracy. These findings highlight \textit{SpecSearch}'s versatility and robustness, demonstrating substantial improvements in inference speed with minimal or no compromise in accuracy \textbf{across diverse tasks}.

\textbf{Setup.} Throughout our experiments, we utilize quantized versions of \texttt{Qwen2.5-72B-Instruct} and \texttt{Qwen2.5-7B-Instruct} as the large and small language models, respectively. Additionally, we incorporate \texttt{MATH-psa} as the Process Reward Model and employ beam search as the search algorithm.

\textbf{Results.}

    \textbf{(1) Full GSM8K Dataset (1,319 Problems):} \textit{SpecSearch} achieves a substantial \textbf{2.84$\times$ speedup} compared to the AR baseline, with only a minimal accuracy reduction of 0.83\%. This result highlights \textit{SpecSearch}’s capability to effectively scale to larger problem sets while preserving high reasoning accuracy.
    
    \textbf{(2) High-Difficulty Mathematics (AIME and Olympiad Bench):} We conduct experiments on the AIME and Olympiad Bench (\texttt{OE\_TO\_maths-zh\_CEE}) datasets. Notably, \textit{SpecSearch} \textbf{maintains identical accuracy} to the SpS method while achieving \textbf{speedups of 1.21$\times$ and 1.37$\times$}, respectively. These results demonstrate the method’s effectiveness in handling challenging, competition-level mathematics problems.
    
    \textbf{(3) Code Generation (HumanEval):} To assess \textit{SpecSearch} beyond mathematical reasoning, we evaluate its performance on the HumanEval code-generation benchmark. The results show that \textit{SpecSearch} achieves a \textbf{2.16$\times$ speedup} over the AR without any reduction in accuracy. Furthermore, it \textbf{surpasses the SpS by 1.22\% in accuracy} while simultaneously delivering a \textbf{1.41$\times$ speedup}. These results underscore \textit{SpecSearch}'s strong generalization capabilities across diverse domains.

\subsection{More Motivating Results} \label{appendix:more_motivation_results}
\textbf{Reward Distribution Across Reasoning Steps} We analyze the reward distributions across different reasoning steps in our experiments. Figure~\ref{fig:reward-by-step} shows the reward distribution for each step in the reasoning path. The figure illustrates that the average reward decreases as the reasoning process moves from initial steps to later stages.

Initially, reasoning steps tend to yield higher rewards because they are simpler and require less cognitive effort, allowing for higher thresholds. As the reasoning progresses, subsequent steps become more complex, resulting in lower average reward scores and necessitating lower thresholds. This pattern supports our approach of using dynamic thresholds.

\begin{figure}[t]
    \centering
    \includegraphics[width=0.45\textwidth]{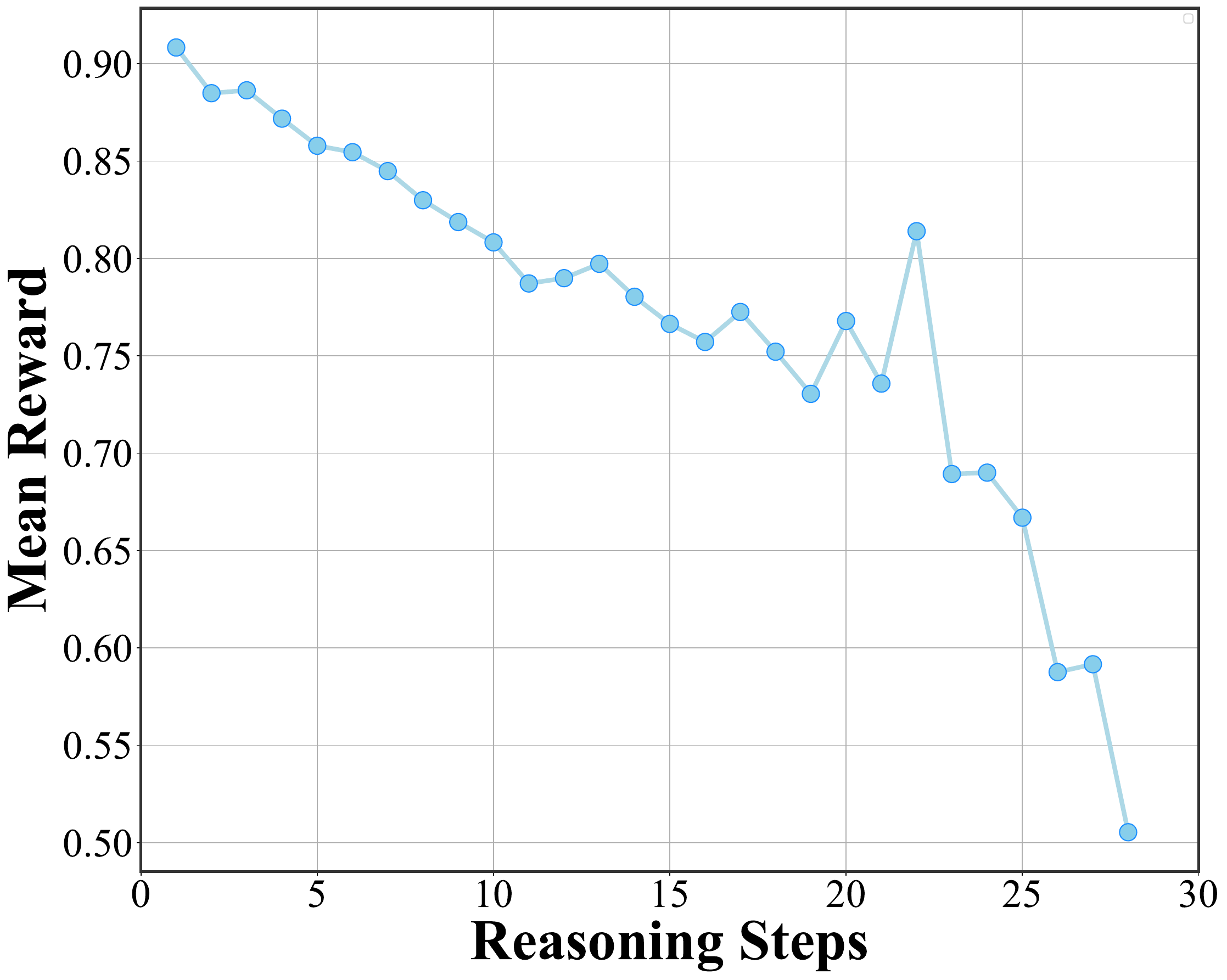}
    \caption{The distribution of rewards for generated thoughts decreases step by step.}
    \label{fig:reward-by-step}
\end{figure}

\textbf{Similar Output Lengths Enable Effective Model Collaboration} We calculate the average number of tokens generated by large and small models in a single reasoning step. Table~\ref{tab:thought-tokens} shows that both model types produce a similar number of tokens. This similarity suggests that the length of the thought or reasoning process at each step is comparable. This feature supports collaboration between large and small models, as it implies they can efficiently divide tasks and work together within the same reasoning framework.

\begin{table}[t]
\caption{The results demonstrate that the number of tokens generated by large and small models in a single reasoning step is comparable.}
\centering
\label{tab:thought-tokens}
\resizebox{0.49\textwidth}{!}{
\begin{tabular}{@{}c c@{}}
\toprule
\toprule
Model & The average number of tokens in a reasoning step \\ \midrule
Qwen2.5-7B-Instruct & 59.088 \\
Qwen2.5-1.5B-Instruct & 57.037 \\ \bottomrule
\end{tabular}
}
\end{table}

\subsection{Case Study}\label{case-study}
\textbf{Case 1} Figure~\ref{case_study_difficult_1} shows a challenging case study. It illustrates how different reasoning steps vary in difficulty. In this process, identifying the curve type from its equation is an easy step, while transforming an equation from polar to Cartesian coordinates is more difficult.

\textbf{Case 2} Figure~\ref{case_study_difficult_2} presents a simple case study showing how different reasoning steps have different levels of difficulty. In this reasoning process, calculating $9900 + 1$ is an easy step while computing the square of 99 is a hard step.

Figures~\ref{case_study_difficult_1} and \ref{case_study_difficult_2} illustrate varying difficulty levels among reasoning steps. Simpler steps are efficiently handled by a small model, while more complex steps are managed by a large model. This division optimizes efficiency and accuracy throughout the reasoning process.

\textbf{Case 3} We select a problem from the GSM8K-100 dataset where SpecSearch made an error for case study analysis. Figure~\ref{case_study_wrong_step} shows the PRM score on the incorrect reasoning path. In this scenario, the first three reasoning steps are correct, but an error occurs at the fourth step. Subsequent steps remain incorrect. Notably, the fourth step, despite being wrong, achieves a high PRM score of 0.8916015625. This indicates that incorrect steps can mislead the PRM and prevent it from accurately identifying errors. This observation clarifies why we observed low precision loss in our SpecSearch.

\subsection{More Broad Compatibility Results}\label{appendix:broad_compat}
In this section, we provide more results of the broad compatibility experiment. We conduct broad compatibility experiments on the MATH-100 dataset using the Qwen models. The results in Table~\ref{tab:broad_applicability_appendix} show the performance of SpecSearch and the baselines in different search algorithms and different thought evaluators. SpecSearch accelerates beam search and MCTS, outperforming baselines by reducing latency with minimal accuracy loss, and shows consistent performance across different PRMs, demonstrating broad applicability and generalization.

This experimental result supplements the broad compatibility experiment in the main text, verifying that our method has broad applicability across different datasets.

\begin{table*}[t]
\caption{The results demonstrate the Broad Compatibility of Our SpecSearch with different search algorithms and PRMs on the MATH-100 dataset.}
\centering
\label{tab:broad_applicability_appendix}
\resizebox{0.98\textwidth}{!}{
\begin{tabular}{@{}ccccccccc@{}}
\toprule
\toprule
\textbf{Search   Algorithms} & \multicolumn{4}{c}{Beam Search} & \multicolumn{4}{c}{MCTS} \\ \midrule
\multirow{2}{*}{Methods} & \multirow{2}{*}{\begin{tabular}[c]{@{}c@{}}Reasoning\\      Accuracy (\%) $\uparrow$\end{tabular}} & \multirow{2}{*}{\begin{tabular}[c]{@{}c@{}}Average   Inference\\      Latency (s) $\downarrow$\end{tabular}} & \multirow{2}{*}{\begin{tabular}[c]{@{}c@{}}Speedup\\      (vs AR)$\uparrow$\end{tabular}} & \multirow{2}{*}{\begin{tabular}[c]{@{}c@{}}Speedup\\      (vs SpS)$\uparrow$\end{tabular}} & \multirow{2}{*}{\begin{tabular}[c]{@{}c@{}}Reasoning\\      Accuracy (\%) $\uparrow$\end{tabular}} & \multirow{2}{*}{\begin{tabular}[c]{@{}c@{}}Average   Inference\\      Latency (s) $\downarrow$\end{tabular}} & \multirow{2}{*}{\begin{tabular}[c]{@{}c@{}}Speedup\\      (vs AR) $\uparrow$\end{tabular}} & \multirow{2}{*}{\begin{tabular}[c]{@{}c@{}}Speedup\\      (vs SpS) $\uparrow$\end{tabular}} \\
 &  &  &  &  &  &  &  &  \\ \cmidrule(r){1-5} \cmidrule(l){6-9}
AR & 87.00 & 275.78 & NA & 0.51 & 93.00 & 523.54 & NA & 0.49 \\
SpS & 88.00 & 141.55 & 1.95 & NA & 91.00 & 257.62 & 2.03 & NA \\
SpecSearch (Ours) & 87.00 & \textbf{82.35} & \textbf{3.35} & \textbf{1.72} & 90.00 & \textbf{171.59} & \textbf{3.05} & \textbf{1.50} \\ \midrule\midrule
\textbf{PRMs} & \multicolumn{4}{c}{Math-psa} & \multicolumn{4}{c}{Math-Shepherd} \\ \midrule
\multirow{2}{*}{Methods} & \multirow{2}{*}{\begin{tabular}[c]{@{}c@{}}Reasoning\\      Accuracy (\%) $\uparrow$\end{tabular}} & \multirow{2}{*}{\begin{tabular}[c]{@{}c@{}}Average   Inference\\      Latency (s) $\downarrow$\end{tabular}} & \multirow{2}{*}{\begin{tabular}[c]{@{}c@{}}Speedup\\      (vs AR) $\uparrow$\end{tabular}} & \multirow{2}{*}{\begin{tabular}[c]{@{}c@{}}Speedup\\      (vs SpS) $\uparrow$\end{tabular}} & \multirow{2}{*}{\begin{tabular}[c]{@{}c@{}}Reasoning\\      Accuracy (\%) $\uparrow$\end{tabular}} & \multirow{2}{*}{\begin{tabular}[c]{@{}c@{}}Average   Inference\\      Latency (s) $\downarrow$\end{tabular}} & \multirow{2}{*}{\begin{tabular}[c]{@{}c@{}}Speedup\\      (vs AR) $\uparrow$\end{tabular}} & \multirow{2}{*}{\begin{tabular}[c]{@{}c@{}}Speedup\\      (vs SpS) $\uparrow$\end{tabular}} \\
 &  &  &  &  &  &  &  &  \\ \cmidrule(r){1-5} \cmidrule(l){6-9}
AR & 87.00 & 275.78 & NA & 0.51 & 88.00 & 265.29 & NA & 0.55 \\
SpS & 88.00 & 141.55 & 1.95 & NA & 85.00 & 145.53 & 1.82 & NA \\
SpecSearch (Ours) & 87.00 & \textbf{82.35} & \textbf{3.35} & \textbf{1.72} & 85.00 & \textbf{118.67} & \textbf{2.24} & \textbf{1.23} \\ \bottomrule
\end{tabular}
}
\end{table*}

\begin{table}[t]
\centering
\caption{\textbf{Sensitivity to Draft Models.} We investigate SpecSearch’s performance using multiple small draft models—Qwen2.5-3B-Instruct, Qwen2.5-1.5B-Instruct, and Qwen2.5-0.5B-Instruct. The results demonstrate that our method maintains stable accuracy while achieving significant latency reduction across various draft models.}
\label{appendix_tab_sensitivity_draft_model_size}
\resizebox{0.5\textwidth}{!}{
\begin{tabular}{@{}cccccc@{}}
\toprule
\toprule
MATH Dataset & \multicolumn{5}{c}{GSM8K-100} \\
\midrule
\multirow{2}{*}{Methods} & \multirow{2}{*}{\begin{tabular}[c]{@{}c@{}}Reasoning\\ Accuracy (\%)\end{tabular}} & \multirow{2}{*}{\begin{tabular}[c]{@{}c@{}}Average Inference\\ Latency (s)\end{tabular}} & \multirow{2}{*}{\begin{tabular}[c]{@{}c@{}}Speedup\\ (vs AR)\end{tabular}} & \multirow{2}{*}{\begin{tabular}[c]{@{}c@{}}Speedup\\ (vs SpS)\end{tabular}} & \multirow{2}{*}{\begin{tabular}[c]{@{}c@{}}Draft Acceptance\\ Rate (\%)\end{tabular}} \\
& & & & & \\
\midrule
AR & 97 & 138.24 & NA & 0.50 & NA \\
SpS (Draft-7B) & 97 & 69.43 & 1.99 & NA & NA \\
SpecSearch (Ours, Draft-7B) & 96 & 48.18 & 2.87 & 1.44 & 49.19 \\
SpecSearch (Ours, Draft-3B) & 96 & 63.48 & 2.18 & 1.09 & 44.54 \\
SpecSearch (Ours, Draft-1.5B) & 95 & 53.49 & 2.58 & 1.30 & 45.79 \\
SpecSearch (Ours, Draft-0.5B) & 96 & 49.54 & 2.79 & 1.40 & 35.48 \\
\bottomrule
\end{tabular}
}
\end{table}

\subsection{Sensitivity Analysis to Draft Model's Size}\label{appendix:sensitivity:draft_model_size}
We have investigated \textit{SpecSearch}'s performance using multiple small draft models. The results in Table~\ref{appendix_tab_sensitivity_draft_model_size} reveal that \textit{SpecSearch} achieves \textbf{speedups ranging from 2.18$\times$ to 2.87$\times$}, underscoring its \textbf{robust acceleration capabilities across diverse small-model settings}.

\subsection{More Ablation Study Results}\label{appendix:sensitivity}
\textbf{Sensitivity Analysis} Hyperparameter $\theta$, which controls the relative importance of reward information from the previous layer when updating the threshold for the current layer, is crucial for balancing between accuracy and latency. To understand the impact of hyperparameter $\theta$ on the performance of SpecSearch, we conduct a detailed sensitivity analysis focusing exclusively on this parameter.

We vary $\theta$ across a range from $\theta_{\text{min}}=0.8$ to $\theta_{\text{max}}=0.95$,  with increments of $\Delta\theta=0.05$. For each value of $\theta$, we evaluate SpecSearch using the MATH-50 dataset, ensuring that all other hyperparameters are held constant to isolate the effect of $\theta$.

The results in Table~\ref{tab:sensitivity_analysis} show that the accuracy of SpecSearch remains largely unchanged when $\theta$ is large and latency decreases as $\theta$ increases. These findings suggest that while $\theta$ does not significantly affect accuracy, setting $\theta$ closer to 1 can lead to substantial improvements in computational efficiency without compromising the quality of the generated reasoning paths. This demonstrates the robustness of $\theta$.

\begin{table}[t]
\caption{The results demonstrate that SpecSearch achieves similar average performance across a wide range of $\theta$.}
\centering
\label{tab:sensitivity_analysis}
\resizebox{0.49\textwidth}{!}{
\begin{tabular}{@{}cccc@{}}
\toprule
\toprule
Dataset & \multicolumn{3}{c}{MATH-50} \\ \midrule
\multirow{2}{*}{Methods} & \multirow{2}{*}{\begin{tabular}[c]{@{}c@{}}Reasoning\\      Accuracy (\%) $\uparrow$\end{tabular}} & \multirow{2}{*}{\begin{tabular}[c]{@{}c@{}}Average   Inference\\      Latency (s)$\downarrow$\end{tabular}} & \multirow{2}{*}{\begin{tabular}[c]{@{}c@{}}Speedup\\      (vs AR)$\uparrow$\end{tabular}} \\
 &  &  &  \\ \midrule
AR & 88.00 & 256.05 & NA \\
SD & 90.00 & 132.68 & 1.93 \\ \midrule
SpecSearch   (Ours,$\theta=0.95$) & 88.00 & 66.48 & 3.85 \\
SpecSearch (Ours,$\theta=0.9$) & 88.00 & 70.63 & 3.63 \\
SpecSearch   (Ours,$\theta=0.85$) & 88.00 & 73.66 & 3.48 \\
SpecSearch (Ours,$\theta=0.8$) & 88.00 & 83.29 & 3.07 \\ \bottomrule
\end{tabular}
}
\end{table}

\clearpage

\begin{figure}[t]
\small
\begin{multiqa}
 \begin{questionlayer}
        Find the curve defined by the equation $r^2 \cos 2 \theta = 4$. Which of the following does this represent? 
        
        (A) Line \quad (B) Circle \quad (C) Parabola \quad (D) Ellipse \quad (E) Hyperbola. 

        Enter the letter of the correct option.
        \end{questionlayer}
    \begin{reasonlayer}
        
    \textbf{Step 1}:
    Recall the polar to Cartesian coordinate transformations:
    $$
    x = r \cos \theta,   y = r \sin \theta
    $$
    and the identity: 
    $$
    \cos 2\theta = \cos^2 \theta - \sin^2 \theta
    $$

    \textbf{Step 2}: Substitute $ r \cos \theta $ and $ r \sin \theta $ into the identity:
    \hfill $\rightarrow$ \textcolor{red}{hard}
    $$
    \cos 2\theta = \left( \frac{x}{r} \right)^2 - \left( \frac{y}{r} \right)^2 = \frac{x^2}{r^2} - \frac{y^2}{r^2} = \frac{x^2 - y^2}{r^2}
    $$

    \textbf{Step 3}: Substitute $\cos 2\theta$ back into the original equation:
    $$
    r^2 \cos 2\theta = 4 \implies r^2 \left( \frac{x^2 - y^2}{r^2} \right) = 4 \implies x^2 - y^2 = 4
    $$
    \textbf{Step 4}:  Analyze the resulting Cartesian equation: 
    \hfill $\rightarrow$ \textcolor{green}{easy}
    $$
    x^2 - y^2 = 4
    $$

    This is the standard form of a hyperbola. Specifically, it is a hyperbola centered at the origin with its transverse axis along the x-axis.\\

    So the answer is:
    \[
    \boxed{E}
    \]

    \end{reasonlayer}
\end{multiqa}
\caption{A difficult case study showing how different reasoning steps have different levels of difficulty.}\label{case_study_difficult_1}
\end{figure}

\clearpage

\begin{figure}[t]
\small
\begin{multiqa}
 \begin{questionlayer}
        Compute $99^2+99+1$ in your head.
        \end{questionlayer}
    \begin{reasonlayer}
        
    \textbf{Step 1}: Calculate $99^2$:
    \hfill $\rightarrow$ \textcolor{red}{hard}
    $$
    99^2 = (100 - 1)^2 = 100^2 - 2 \cdot 100 \cdot 1 + 1^2 = 10000 - 200 + 1 = 9801 
    $$

    \textbf{Step 2}: Add 99 to $99^2$:
    $$
    99^2 + 99 = 9801 + 99 = 9900
    $$

    \textbf{Step 3}: Add 1 to the result:
    \hfill $\rightarrow$ \textcolor{green}{easy}
    $$
    9900 + 1 = 9901
    $$

    So the answer is:
    \[
    \boxed{9901}
    \]

    \end{reasonlayer}
\end{multiqa}
\caption{A simple case study showing how different reasoning steps have different levels of difficulty.}\label{case_study_difficult_2}
\end{figure}

\clearpage

\begin{figure}[t]
\small
\begin{multiqa}
 \begin{questionlayer}
        Raymond and Samantha are cousins. Raymond was born 6 years before Samantha. Raymond had a son at the age of 23. If Samantha is now 31, how many years ago was Raymond's son born?
        \end{questionlayer}
    \begin{reasonlayer}
        
    \textbf{Step 1}:
    Identify the current age of Samantha: Samantha is currently 31 years old. \textcolor{green}{$[0.93701171875]$} \\

    \textbf{Step 2}: Determine Raymond's age when his son was born: Raymond had a son at the age of 23. \textcolor{green}{$[0.91845703125]$}\\

    \textbf{Step 3}: Find the age difference between Raymond and Samantha: We are told that Raymond was born 6 years before Samantha. This means Samantha is 6 years younger than Raymond. \textcolor{green}{$[0.92529296875]$}\\
    
    \textbf{Step 4}: Calculate Raymond's current age: If Samantha is 31 and she is 6 years older than Raymond, then Raymond is currently 31 - 6 = 25 years old. \textcolor{red}{$[0.8916015625]$}\\

    \textbf{Step 5}: Determine the age of Raymond when his son was born: Raymond had his son at the age of 23. \textcolor{red}{$[0.81982421875]$}\\

    \textbf{Step 6}: Calculate the number of years since Raymond's son was born: Since Raymond is currently 25 years old and he had his son at age 23, it has been $25 - 23 = 2$ years since Raymond's son was born. \textcolor{red}{$[0.56982421875]$}\\ 

    So the answer is:
    \[
    \boxed{2}
    \]

    \end{reasonlayer}
\end{multiqa}
\caption{A case study showing the PRM score on the wrong reasoning path.}\label{case_study_wrong_step}
\end{figure}

\end{document}